\documentclass[letterpaper]{article} 
\usepackage{aaai2027}  
\usepackage[hyphens]{url}  
\usepackage{graphicx} 
\usepackage{natbib}  
\usepackage{caption} 
\usepackage{algorithm}
\usepackage{algorithmic}
\usepackage{amsmath}
\usepackage{amsthm}
\allowdisplaybreaks
\usepackage{dsfont}
\usepackage{graphicx}
\usepackage{amssymb}
\usepackage{subcaption}
\usepackage{mathtools}
\usepackage{upgreek}
\usepackage{booktabs}
\usepackage{float}
\usepackage{multirow}
\usepackage{natbib}
\usepackage[table]{xcolor}
\usepackage{physics}
\newtheorem{lemma}{Lemma}

\usepackage{newfloat}
\usepackage{listings}
\DeclareCaptionStyle{ruled}{labelfont=normalfont,labelsep=colon,strut=off} 
\floatstyle{ruled}
\newfloat{listing}{tb}{lst}{}
\floatname{listing}{Listing}

\usepackage{booktabs}

\nocopyright

\title{Floating Radiance Networks}
\author{
    Krzysztof Byrski\textsuperscript{\rm 1}\equalcontrib\corresponding,
    Rafał Tobiasz \textsuperscript{\rm 1, 3}\equalcontrib,
    Grzegorz Wilczyński \textsuperscript{\rm 1, 3},
    Mikołaj Zieliński \textsuperscript{\rm 2},
    Dawid Baran,\\
    Dominik Belter \textsuperscript{\rm 2},
    Jacek Tabor \textsuperscript{\rm 1},
    Przemysław Spurek \textsuperscript{\rm 1, 3}
}
\affiliations{
    \textsuperscript{\rm 1} Jagiellonian University, Faculty of Mathematics and Computer Science\\
    \textsuperscript{\rm 2} Poznań University of Technology, Institute of Robotics and Machine Intelligence\\
    \textsuperscript{\rm 3} IDEAS Research Institute \\
    krzysiek.byrski@uj.edu.pl
}

\usepackage{amsmath}
\usepackage{amssymb}
\usepackage{booktabs}
\usepackage{multirow}
\usepackage[table]{xcolor}
\usepackage{upgreek}
\usepackage{url}

\renewcommand{\vec}[1]{\mathaccent"017E{#1}}

\usepackage{pifont}

\definecolor{googleGreen}{HTML}{4FAF72}
\definecolor{googleRed}{HTML}{DF746A}
\definecolor{googleGray}{HTML}{939AA1}

\newcommand{\cmark}{\textcolor{googleGreen}{\ding{51}}}
\newcommand{\xmark}{\textcolor{googleRed}{\ding{55}}}
\newcommand{\pmark}{\textcolor{googleGray}{\ding{108}}}

\definecolor{bestcolor}{rgb}{1.0, 0.5, 0.5}   
\definecolor{secondcolor}{rgb}{1.0, 0.8, 0.6} 
\definecolor{thirdcolor}{rgb}{1.0, 1.0, 0.6}  

\def\our{FlaRe}
\def\ourfull{Floating Radiance Networks}

\begin{document}

\maketitle

\begin{abstract}

    Recent advances in neural scene representations enable photorealistic novel-view synthesis, yet most methods remain tightly coupled to a single rendering paradigm, limiting their versatility and integration with conventional graphics workflows. We introduce \ourfull{} (\our{}), a neural scene representation combining explicit ray-traceable geometry with continuous neural radiance functions. A scene is represented by floating planar generalized Gaussian primitives, each carrying a compact latent descriptor of a local radiance field. A lightweight decoder shared across the scene maps this descriptor, local surface coordinates, and viewing direction to color and opacity. This formulation preserves the expressiveness of neural fields while providing an explicitly addressable structure that can be efficiently queried and manipulated. Hardware-accelerated primitive intersections enable interactive rendering and recursive ray-tracing, including reflections, refractions, transparency, and shadows. The same representation further supports primitive-level deformation, mesh extraction, and appearance stylization directly in its learned descriptor space. Experiments across standard reconstruction benchmarks demonstrate competitive rendering quality while using a compact set of primitives. Together, these results establish \our{} as a versatile representation that brings high-fidelity neural rendering, ray-tracing, geometric manipulation, and appearance editing into a unified scene model. Source code is available online. Source code can be found at: \url{https://github.com/KByrski/FlaRe}

\end{abstract}

\section{Introduction}

    \begin{figure}[t]
    \centering
    \includegraphics[width=\columnwidth]{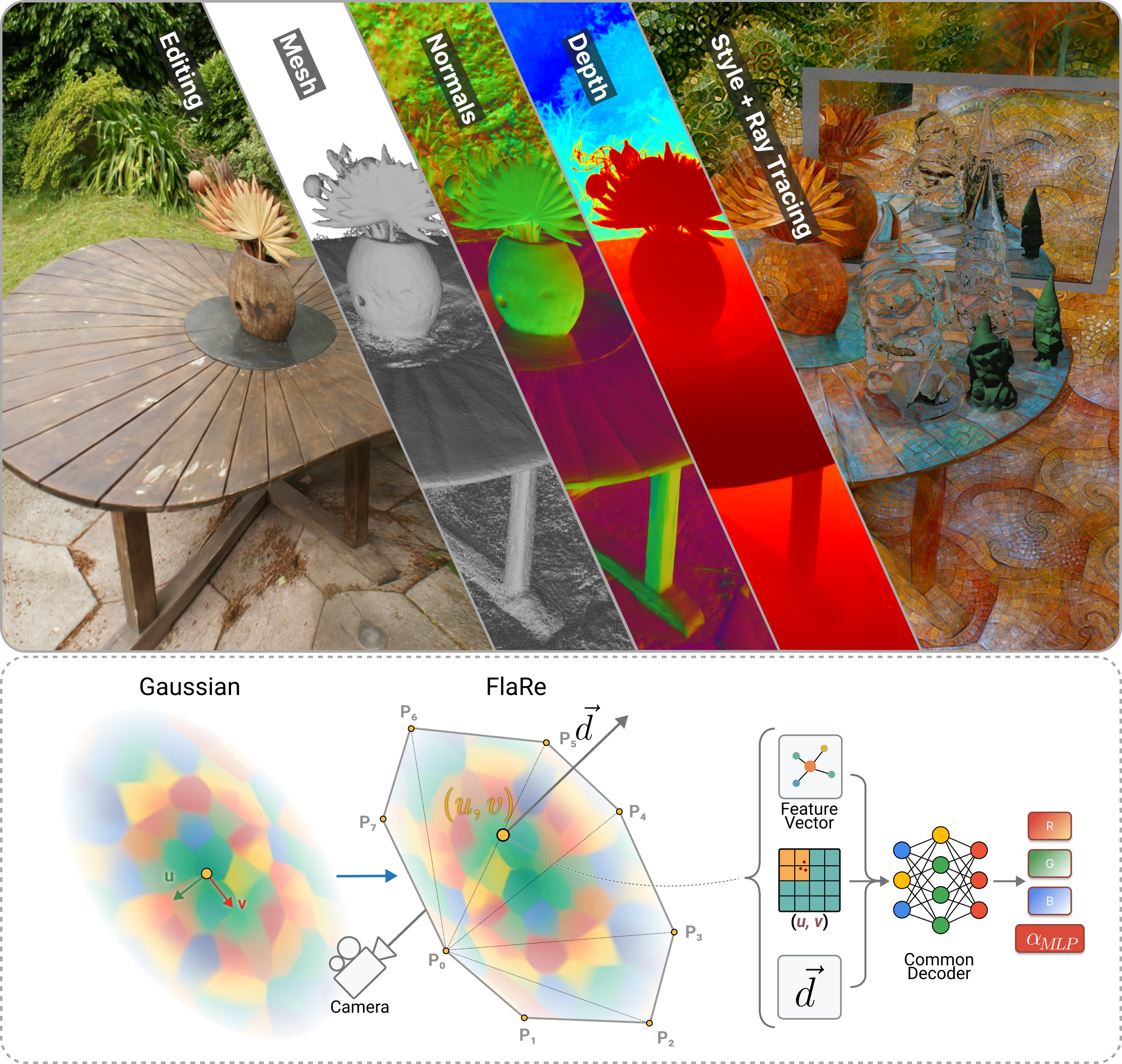}
        \caption{\our{} represents a scene with floating planar primitives, each carrying a compact local radiance descriptor. At a ray--primitive intersection $(u,v)$, a shared decoder maps the descriptor, local coordinates, and viewing direction to color and opacity. Hardware-accelerated intersections replace volumetric sampling, so the capabilities above share one explicit, ray-traceable model.}
    \label{fig:teaser}
    \end{figure}

Neural scene representations have transformed photorealistic 3D reconstruction and novel-view synthesis. Neural Radiance Fields (NeRFs)~\cite{mildenhall2020nerf} achieve high visual fidelity by modeling scenes as continuous volumetric functions, but dense ray marching and repeated network evaluations make rendering costly and complicate integration with conventional graphics pipelines. In contrast, 3D Gaussian Splatting (3DGS)~\cite{kerbl20233d} enables real-time rendering through explicit anisotropic primitives and rasterization. However, its finite per-primitive appearance coefficients do not define a continuous spatial radiance function over each primitive footprint.
 \begin{table}[t]
        \centering
        \scriptsize
        \setlength{\tabcolsep}{2.0pt}
        \renewcommand{\arraystretch}{1.08}
        \resizebox{\columnwidth}{!}{%
            \begin{tabular}{lccccccc}
                \toprule
                Method
                & \shortstack{Features}
                & \shortstack{Local Neural\\Appearance}
                & \shortstack{Ray\\Tracing}
                & \shortstack{Secondary\\Rays}
                & \shortstack{Geometry\\Editing}
                & \shortstack{Mesh\\Extraction} 
                & \shortstack{Primitive\\Reduction} \\
                \midrule
                3DGS
                & \xmark & \xmark & \xmark & \xmark & \xmark & \xmark & \xmark \\
    
                Scaffold-GS
                & \pmark & \xmark & \xmark & \xmark & \xmark & \xmark & \xmark \\
    
                2DGS
                & \xmark & \xmark & \xmark & \xmark & \pmark & \cmark & \xmark \\
    
                Nexels
                & \xmark & \cmark & \xmark & \xmark & \xmark & \xmark & \cmark \\
    
                3DGRT
                & \xmark & \xmark & \cmark & \cmark & \xmark & \xmark & \xmark \\
    
                IRIS
                & \cmark & \pmark & \xmark & \xmark & \cmark & \xmark & \xmark \\
    
                Power Foam
                & \xmark & \cmark & \cmark & \pmark & \xmark & \xmark & \xmark \\
    
                \midrule
                \textbf{\our{} (ours)}
                & \cmark & \cmark & \cmark & \cmark & \cmark & \cmark & \cmark \\
                \bottomrule
            \end{tabular}%
        }
        \caption{Capabilities demonstrated by the original formulations.
        A circle denotes partial or indirect support. Features
        denote learned latent descriptors stored with explicit scene elements.}
        \label{tab:representation_capabilities}
    \end{table}
This design space exposes a persistent trade-off between the expressiveness of continuous neural fields and the efficiency, addressability, and graphics compatibility of explicit geometry. Recent methods combine neural features with explicit structures or support direct ray queries~\cite{radfoam,eks,iris,powerfoam}. Nevertheless, they are often tied to a particular rendering procedure, aggregate features across scene elements, or require task-specific extensions for editing, ray-tracing, or geometry extraction. Combining locally continuous radiance with independently addressable, ray-traceable geometry would provide a more direct bridge between neural rendering and conventional graphics.

We introduce \ourfull{} (\our{}), a neural scene representation built from floating planar generalized Gaussian primitives carrying local neural radiance functions (Figure~\ref{fig:teaser}). Each primitive stores a compact trainable descriptor evaluated by a lightweight decoder shared across the scene. Conditioned on the descriptor, local surface coordinates, and viewing direction, the decoder predicts spatially varying and view-dependent color and opacity. This avoids storing an independent network per primitive while retaining a continuous appearance function over its local support. The explicit primitive structure makes \our{} directly compatible with hardware-accelerated ray-tracing. Generalized Gaussian primitives use shared canonical mesh proxies, enabling efficient collection of ordered ray--primitive intersections and front-to-back compositing. The representation supports arbitrary and recursive ray queries, including reflections, refractions, transparency, shadows, and composition with mesh geometry. Because primitives remain individually addressable, their geometry can be directly deformed or manipulated. With geometry-aware regularization, the planar representation also supports mesh extraction, while its descriptor space provides a suitable parameterization for appearance stylization and modeling under restricted primitive budgets.

Unlike NeRF-Casting, TraM-NeRF, and Mirror-NeRF~\cite{nerfcasting,tramnerf,mirror-nerf}, which introduce secondary rays within specialized volumetric rendering formulations, \our{} makes the representation itself directly ray-queryable. Thus, rendering, recursive ray-tracing, geometry manipulation, mesh extraction, and appearance editing use the same underlying representation rather than separate task-specific scene models (as shown in Table~\ref{tab:representation_capabilities}). Experiments on standard benchmarks demonstrate strong novel-view synthesis quality at interactive rates with a compact set of primitives.

Our contributions are summarized as follows:
\begin{itemize}
    \item We introduce \ourfull{}, a neural scene representation in which explicit ray-traceable planar primitives carry continuous local radiance functions.

    \item We design a compact formulation combining per-primitive latent descriptors, a shared lightweight auto-decoder, collision-free multiresolution LUT-Encoding, generalized Gaussian kernels, and hardware-compatible mesh proxies.

    \item We demonstrate recursive ray-tracing, primitive deformation, mesh extraction, appearance stylization, and reconstruction under restricted primitive budgets while retaining strong quality and interactive performance.
\end{itemize}

\begin{figure}[t]
    \centering
    \includegraphics[width=\columnwidth]{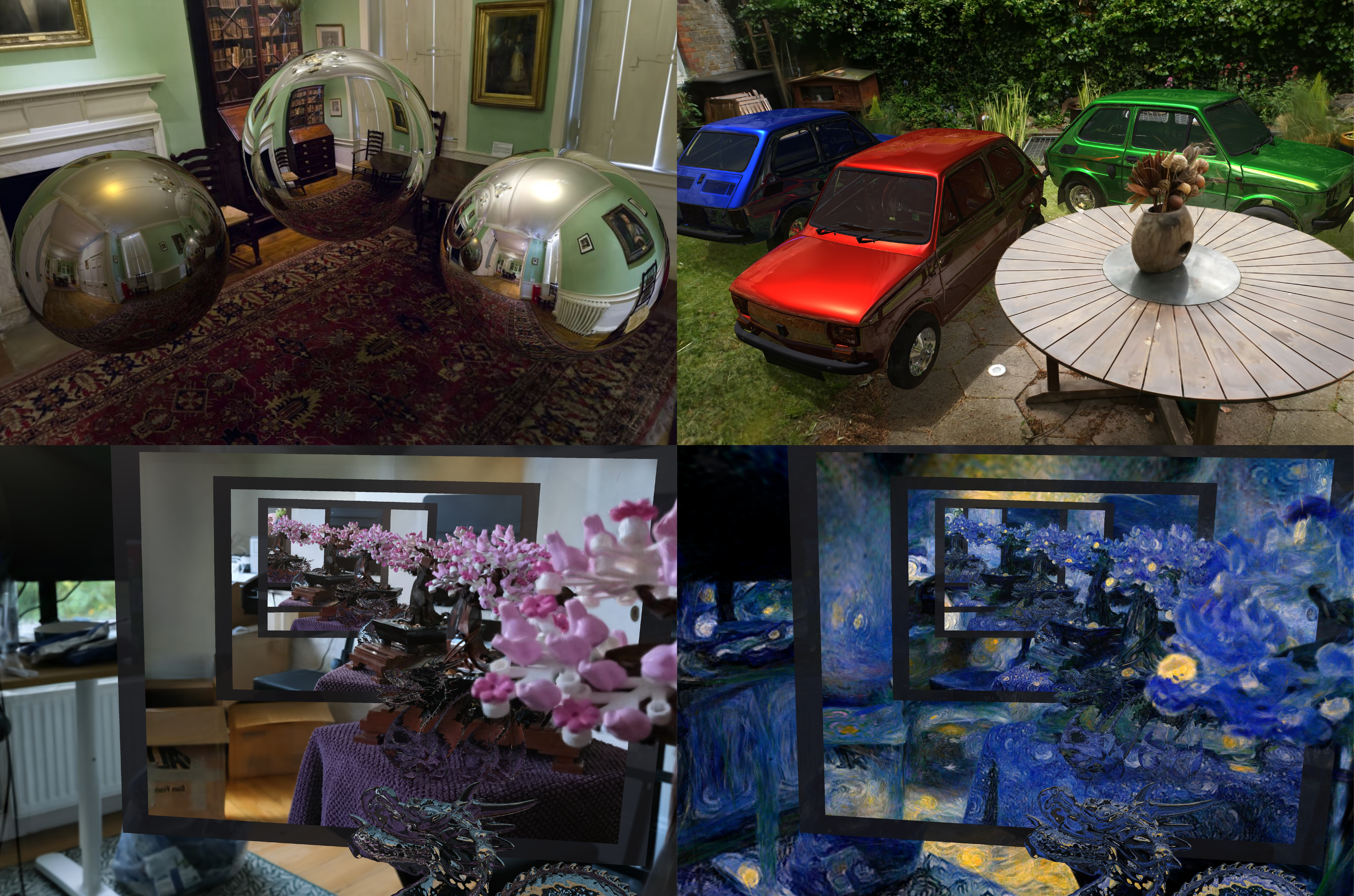}
    \caption{Recursive ray-tracing on \our{} reconstructions. Seamless integration with conventional meshes allows for material-dependent reflections, refractions, and nested bounces. This hybrid rendering natively generalizes to the scene after appearance stylization (bottom-right).}
    \label{fig:ray_tracing}
\end{figure}

\section{Related Work}

\begin{figure*}[t!]
    \centering
    \includegraphics[width=\textwidth]{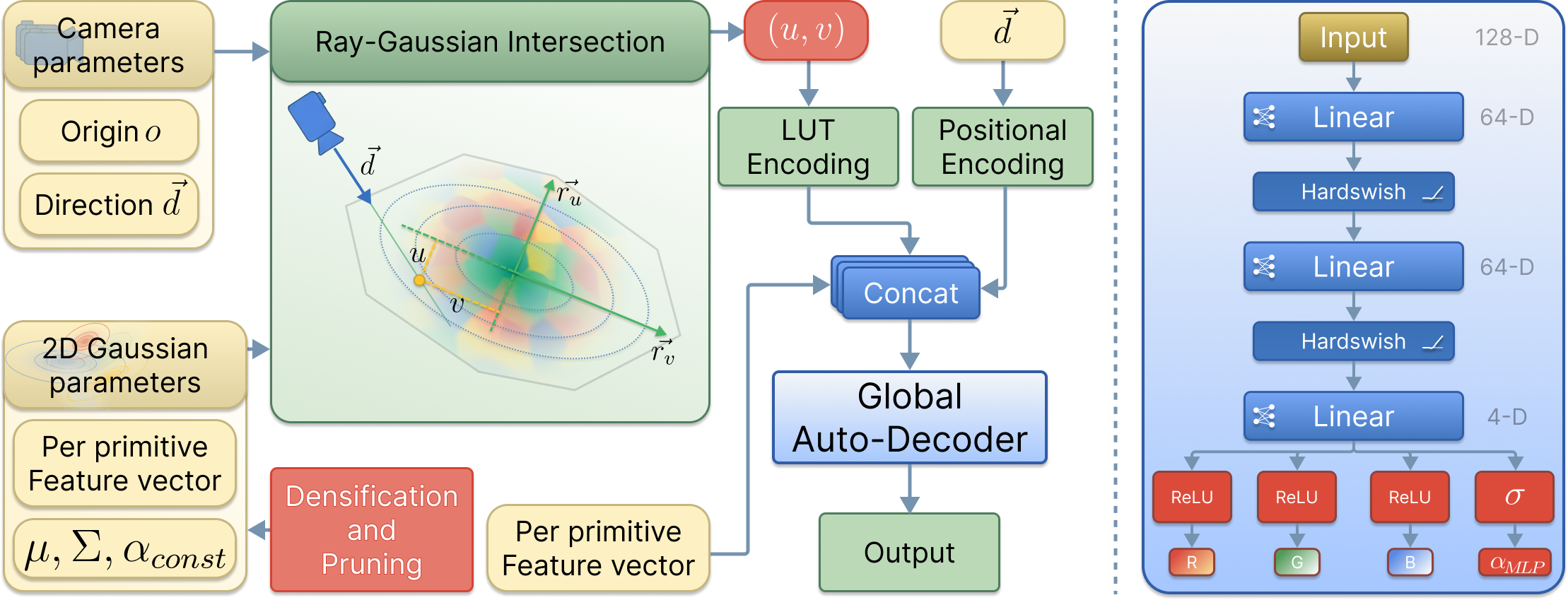}
    \caption{\textbf{Left}: Overview of the rendering pipeline. Every intersection along a camera ray contributes local coordinates $(u,v)$ and a viewing direction $d$, which are encoded and concatenated with the per-primitive feature vector before decoding into color and opacity. Densification and pruning update the primitive set during optimization. \textbf{Right}: The auto-decoder architecture comprises two linear hidden layers each followed by a Hardswish activation, and a linear output layer, which branches into four channels processed by three ReLU and one $\sigma$ activation functions, respectively.}
    \label{fig:architecture_mlp}
\end{figure*}

Neural Radiance Fields represent scenes as continuous functions evaluated
and integrated along camera rays~\cite{mildenhall2020nerf}, while later
methods improve quality and efficiency through scene contraction, sparse
grids, hash encodings, and tensor factorizations
~\cite{barron2022mipnerf360,barron2023zipnerf,plenoxels,instantngp,tensorf}.
In contrast, 3D Gaussian Splatting uses explicit anisotropic primitives
with rasterized per-primitive appearance~\cite{kerbl20233d}, and 2DGS
introduces planar surfels for improved geometric consistency~\cite{2dgs}.
Several approaches enrich explicit structures with learned appearance:
DNMP attaches latent features to mesh elements~\cite{dnmp}, Scaffold-GS
predicts Gaussian attributes from anchor features~\cite{scaffoldgs},
Gaussian Billboards use spatially varying textures~\cite{billboard},
and EKS and IRIS condition neural radiance on Gaussian-carried or
ray-intersection features~\cite{eks,iris}. Nexels similarly augments
planar primitives with spatially varying neural appearance~\cite{nexels}.
Unlike these methods, \our{} associates each primitive with a compact
descriptor of a continuous local radiance function evaluated directly
from local coordinates and viewing direction using one shared decoder.

Recent work also develops scene representations compatible with direct
ray queries. 3DGRT and 3DGUT adapt Gaussian primitives to
hardware-accelerated ray-tracing and secondary rays
~\cite{3dgrt,3dgut}, while RaySplats performs direct ray--Gaussian
intersection~\cite{raysplat}. REdiSplats, MeshSplats, and LinPrim expose
polygonal or polyhedral structures compatible with conventional graphics
pipelines~\cite{redisplat,meshsplat_tobiasz,linprim}. Beyond Gaussian
representations, Radiant Foam, Power Foam, SVRaster and Radiance Meshes employ
structured cells or meshes for efficient differentiable ray-tracing and,
in some cases, rasterization~\cite{radfoam,powerfoam,sun2025sparse,radiance_meshes}.
Explicit structure can additionally facilitate mesh extraction,
deformation, and appearance manipulation, as demonstrated by
2DGS, SuGaR, Gaussian Frosting, GaMeS, and Gaussian stylization methods
~\cite{2dgs,guedon2024sugar,frosting,waczynska2024games,
stylegaussian,clipgaussian}. In contrast to task-specific extensions,
\our{} combines local neural appearance, direct ray-tracing, geometry
editing, mesh extraction, and stylization within the same underlying
primitive representation.

\section{FlaRe: Floating Radiance Networks}

Given posed RGB images, \our{} optimizes a scene representation $\mathcal{S}$ composed of floating planar Gaussian primitives with learned local radiance descriptors. For each camera ray $r$, we intersect the ray with $\mathcal{S}$, decode color and opacity at the encountered primitives, and alpha-composite the resulting samples to obtain the rendered pixel $\mathrm{c}$.

\paragraph{Gaussian Primitive Scene Representation}
FlaRe represents a scene using \(N\) anisotropic planar generalized Gaussian primitives, initialized with equal in-plane scales:
$$
\mathcal{G}
=
\left\{
\left(
\mathcal{N}(\upmu_i,\Sigma_i,\kappa_i), \mathrm{c}_{\text{const},i}, \alpha_{\mathrm{const},i},
\mathrm{z}_i
\right)
\right\}_{i=1}^{N}.
$$
Each primitive is parameterized by a trainable center $\upmu_i\in\mathbb{R}^3$, singular covariance matrix $\Sigma_i = \mathrm{R}_i \mathrm{S}_i \mathrm{S}_i^\top \mathrm{R}_i^\top$, shape parameter $\kappa_i = 1+\operatorname{softplus}(\tilde{\kappa}_i)$, view-independent constant color parameter in the RGB space $\mathrm{c}_{i,\text{const}} \in \mathbb{R}^3$, constant opacity $\alpha_{\mathrm{const},i}=\sigma(\tilde{\alpha}_{\mathrm{const},i})$, and a trainable latent descriptor $\mathrm{z}_i$. Here, $\mathrm{R}_i$ is obtained from the trainable quaternion $\mathrm{q}_i$, while $\mathrm{S}_i=\operatorname{diag}\left(
\begin{bmatrix}
\exp(\tilde{\mathrm{s}}_i), 0
\end{bmatrix}
\right)
$ with $\tilde{\mathrm{s}}_i\in\mathbb{R}^2$ controls the primitive scale in its local tangent plane. The scalar $\tilde{\kappa}_i\in\mathbb{R}$ determines the generalized Gaussian shape, whereas $\tilde{\alpha}_{\mathrm{const},i}\in\mathbb{R}$ controls its base opacity. Finally, $\mathrm{z}_i \in \mathbb{R}^{96}$ encodes the local 2D neural radiance field associated with the primitive.

\paragraph{Isotropic Planar Generalized Gaussian Kernel}

Inspired by Nexels~\cite{nexels}, FlaRe employs opacity modulation via an unnormalized generalized Gaussian Kernel in the underlying 2D representation primitives. This formulation enables more precise modeling of sharp boundaries and high-curvature surfaces, effectively mitigating approximation errors inherent in classical Gaussian kernels characterized by a fixed quadratic fall-off profile. We adopt a definition of the Isotropic Planar Generalized Gaussian Kernel, given by the following formula:
\[
\mathcal{N}(\upmu,\Sigma,\kappa)(\mathbf{x})
=
\exp\!\left(
-\frac{
\left\|\mathrm{P}\mathrm{S}^{-1}\mathrm{R}^\top(\mathrm{x}-\upmu)\right\|_2^{2\kappa}
}{2\kappa}
\right),
\]
where \(\mathrm{P}=[\,\mathrm{I}_2\;\mathbf{0}\,]\in\mathbb{R}^{2\times3}\) selects the two tangent-plane coordinates. Thus, for a point expressed in the primitive's local frame as \((u,v)\), the kernel simplifies to
\[
\mathcal{N}(\upmu,\Sigma,\kappa)(u,v)
=
\exp\!\left(
-\frac{(u^2+v^2)^\kappa}{2\kappa}
\right),
\]
with final opacity
$
\alpha
=
\alpha_{\mathrm{const}}\alpha_{\mathrm{MLP}}\mathcal{N}(\upmu,\Sigma,\kappa)(u,v),
$
where \(\alpha_{\mathrm{MLP}}\) is predicted by the auto-decoder.

This approach allows us to utilize a single canonical polygon for each Gaussian, regardless of the $\kappa$ parameter, since the kernel has circular isocontours in normalized canonical coordinates, independent of the shape parameter \(\kappa\). In contrast, for the density function proposed by~\cite{nexels}, the polygon's geometry does not align with the isocontours, preventing the density values from vanishing at the boundary and necessitating the computation of a unique proxy polygon for every individual value of the shape parameter.

\paragraph{Global auto-decoder}
\label{sec:auto_decoder}
For clarity, we omit the ray index superscript $(r)$ throughout this subsection whenever the dependence on the current ray is unambiguous. In contrast to conventional Gaussian-based representations (e.g., 3DGS~\cite{kerbl20233d} or 3DGRT~\cite{3dgrt}), which rely on static, per-primitive color and opacity attributes, typically optimized as trainable constants modulated by a spatial Gaussian kernel, FlaRe adopts a more flexible approach. Specifically, both the color and opacity of samples within each 2D Gaussian footprint are predicted by a dedicated 2D neural radiance field, uniquely assigned to each individual primitive and operating within its specific local coordinate frame, and encoded individually for each Gaussian primitive as a trainable latent descriptor vector. The color and opacity are then decoded using a single global auto-decoder MLP:
$$
\Phi \left( \left( u_i, v_i \right), \mathrm{\vec{d}}, \mathrm{z}_i \;; \mathcal{G}, \mathcal{F}, \Theta \right) = \left( \mathrm{c}_{\text{MLP},i}, \alpha_{\text{MLP},i} \right)
$$
where $\left( \mathrm{c}_{\text{MLP},i}, \alpha_{\text{MLP},i} \right) \in \mathbb{R}^3 \times \mathbb{R}$ denotes the color and opacity predicted by the MLP at the intersection point $\mathrm{P}_i$ of ray $r$ and the $i$-th primitive encountered by the ray, given the direction vector $\mathrm{\vec{d}}$ and per-primitive local 2D neural radiance field descriptor $\mathrm{z}_i$ with $\mathrm{P}_i$ having the coordinates: $\left( u_i, v_i \right)$ within the primitive's local reference frame. The network is conditioned on the set of all Gaussian primitives $\mathcal{G}$, the LUT-Table features $\mathcal{F}$ and MLP parameters $\Theta$.
Our global auto-decoder architecture is visible in Figure~\ref{fig:architecture_mlp}.

\begin{figure}[t]
\centering
\includegraphics[width=0.43\textwidth]{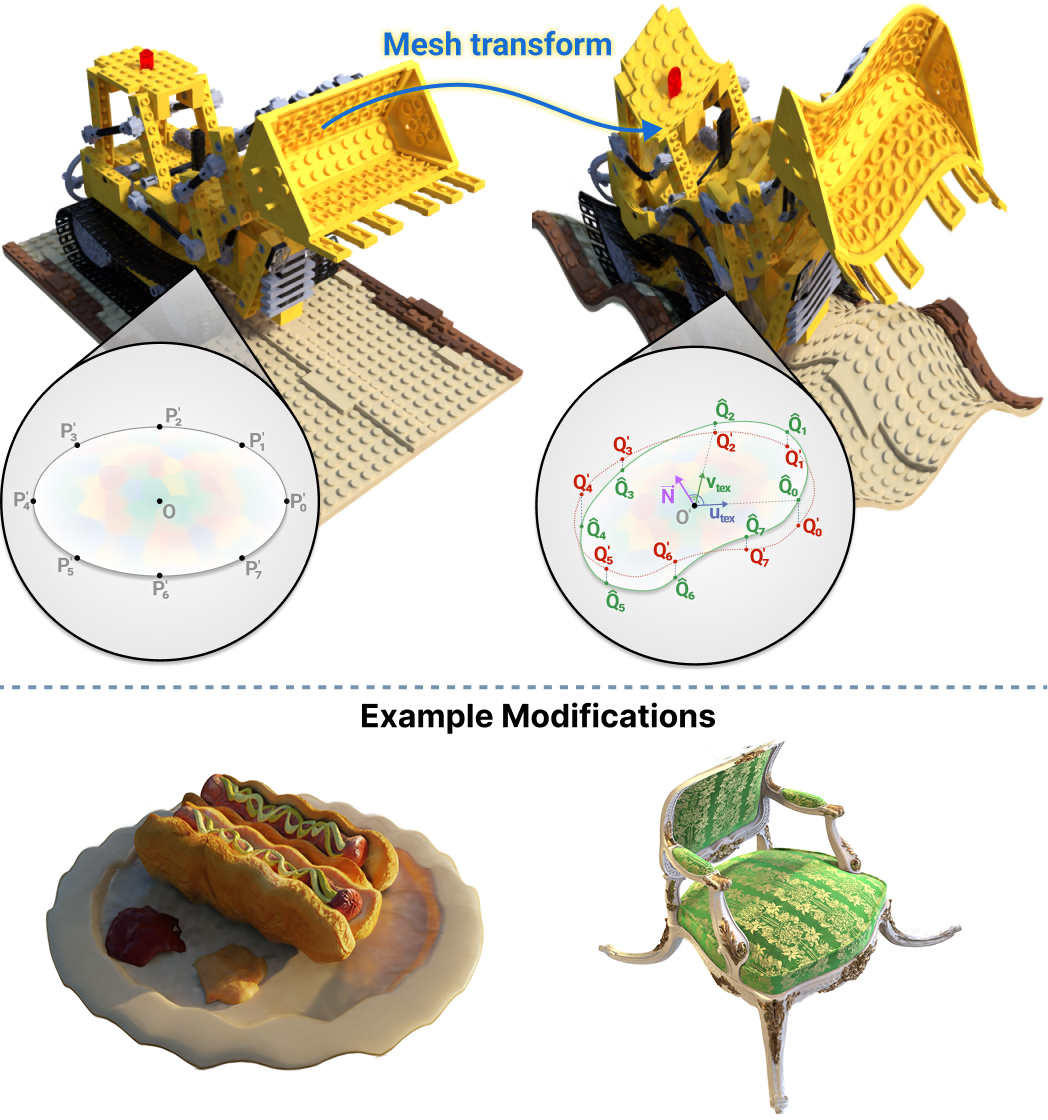}
\caption{Primitive-level mesh modification: deforming the proxy meshes transfers the deformation to the underlying primitives, whose parameters are recovered directly from the deformed geometry without retraining.}
\label{fig:mesh_transform}
\end{figure}

\begin{figure}[t]
    \centering
    \includegraphics[width=\columnwidth]{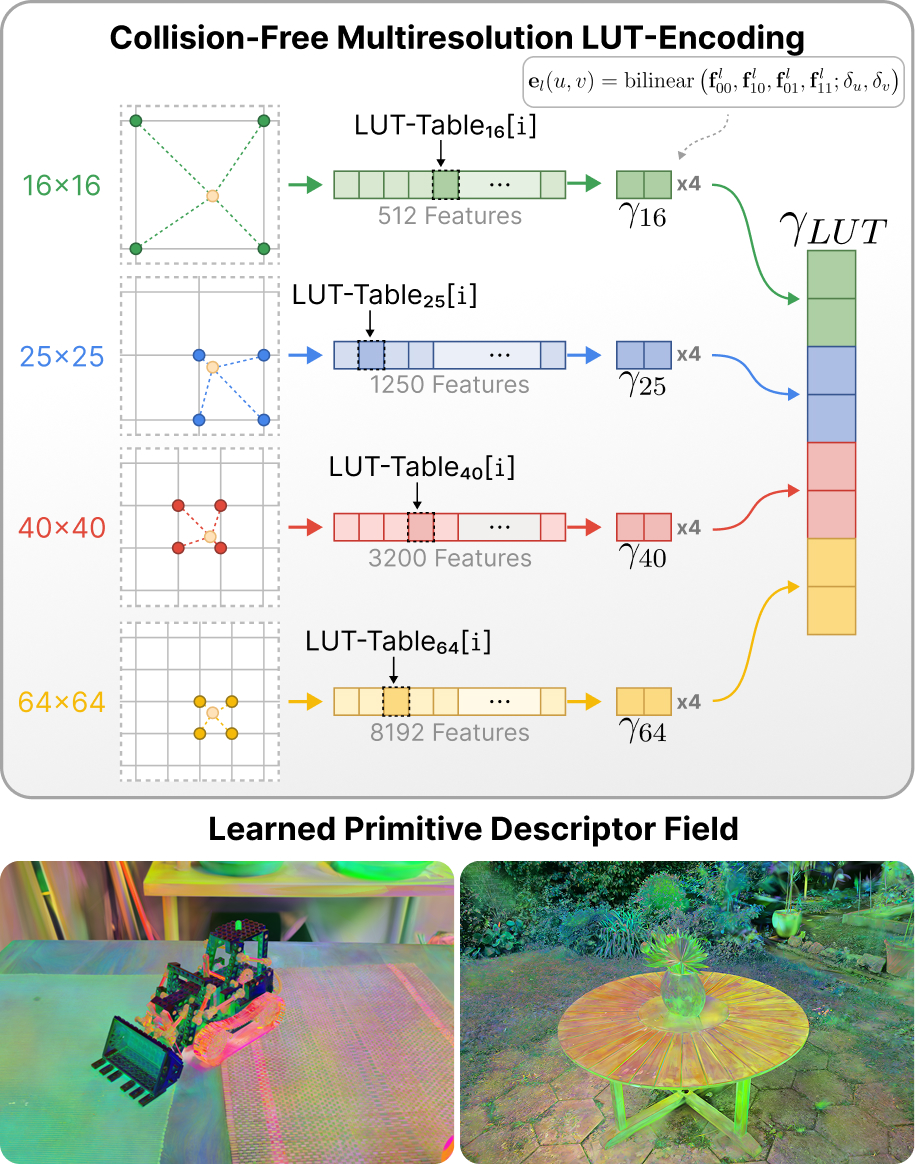}
    \caption{Top: Collision-free multiresolution LUT encoding. Local coordinates $(u,v)$ are bilinearly interpolated on four compact 2D grids and concatenated into $\gamma_{\mathrm{LUT}}$. Bottom: the resulting encoding visualized on reconstructed scenes.}
    \label{fig:lut}
\end{figure}

\paragraph{LUT-Encoding} Prior to entering the network, input data is processed through specific encoding schemes: the spatial coordinates $\left( u_i, v_i \right)$ are mapped using lookup-table encoding (LUT-Encoding), while the view direction vector $\mathbf{d}$ is encoded via positional encoding. These encoded vectors are then concatenated with the individual per-primitive neural radiance field descriptor to form a latent vector, which serves as the input to our MLP (see Figure~\ref{fig:architecture_mlp}).

Our LUT-Encoding mechanism generates latent vectors functionally equivalent to the hash-based encodings proposed in InstantNGP~\cite{instantngp}. However, unlike their approach, which relies on high-dimensional hash-grid lookups, we leverage the fact that our floating primitives already carry a significant portion of the geometric representation. Consequently, we can reduce the encoding overhead by employing noticeably a smaller multiresolution structure. The compact size of the 2D grids enables a direct, collision-free one-to-one mapping between grid vertices and feature vectors, effectively replacing the hashing mechanism with a straightforward lookup table (LUT) (see Figure~\ref{fig:lut}).

\definecolor{rank1}{rgb}{1.0, 0.5, 0.5}   
\definecolor{rank2}{rgb}{1.0, 0.55, 0.52}   
\definecolor{rank3}{rgb}{1.0, 0.60, 0.54}   
\definecolor{rank4}{rgb}{1.0, 0.65, 0.56}   
\definecolor{rank5}{rgb}{1.0, 0.70, 0.58}   
\definecolor{rank6}{rgb}{1.0, 0.75, 0.5}   
\definecolor{rank7}{rgb}{1.0, 0.8, 0.6} 
\definecolor{rank8}{rgb}{1.0, 0.833, 0.6} 
\definecolor{rank9}{rgb}{1.0, 0.866, 0.6} 
\definecolor{rank10}{rgb}{1.0, 0.9, 0.6} 
\definecolor{rank11}{rgb}{1.0, 0.933, 0.6} 
\definecolor{rank12}{rgb}{1.0, 0.966, 0.6} 
\definecolor{rank13}{rgb}{1.0, 1.0, 0.6}  

\newcommand{\rc}[2]{\cellcolor{rank#1}#2}

\begin{table*}[t]
\centering
\small
\setlength{\tabcolsep}{3pt}
\renewcommand{\arraystretch}{1.05}

\begin{tabular}{cl ccc ccc ccc ccc}
\toprule
& &
\multicolumn{6}{c}{Mip-NeRF360} &
\multicolumn{3}{c}{Tanks and Temples} &
\multicolumn{3}{c}{Deep Blending} \\
\cmidrule(lr){3-8}
\cmidrule(lr){9-11}
\cmidrule(lr){12-14}

& Method &
SSIM $\uparrow$ & PSNR $\uparrow$ & LPIPS $\downarrow$ &
Train $\downarrow$ & FPS $\uparrow$ & MEM $\downarrow$ &
SSIM $\uparrow$ & PSNR $\uparrow$ & LPIPS $\downarrow$ &
SSIM $\uparrow$ & PSNR $\uparrow$ & LPIPS $\downarrow$ \\
\midrule

\multirow{3}{*}{\rotatebox[origin=c]{90}{\textbf{RF}}}
& Plenoxels
& \rc{13}{0.670} & \rc{13}{23.63} & \rc{11}{0.440}
& \rc{4}{25min 45s} & \rc{11}{7} & \rc{10}{2.1GB}
& \rc{13}{0.379} & \rc{13}{21.08} & \rc{13}{0.795}
& \rc{13}{0.510} & \rc{13}{23.06} & \rc{13}{0.510} \\

& INGP
& \rc{12}{0.725} & \rc{12}{26.43} & -
& \rc{1}{5m 37s} & \rc{10}{11.7} & \rc{2}{13MB}
& \rc{12}{0.723} & \rc{12}{21.72} & \rc{12}{0.330}
& \rc{12}{0.797} & \rc{12}{23.62} & \rc{12}{0.423} \\

& M-NeRF360
& \rc{7}{0.844} & \rc{1}{29.23} & -
& \rc{12}{48h} & \rc{13}{0.06} & \rc{1}{8.6MB}
& \rc{11}{0.759} & \rc{11}{22.22} & \rc{11}{0.257}
& \rc{6}{0.901} & \rc{9}{29.40} & \rc{4}{0.245} \\

\midrule

\multirow{5}{*}{\rotatebox[origin=c]{90}{\textbf{GS}}}
& Scaffold-GS
& \rc{6}{0.848} & \rc{5}{28.84} & \rc{7}{0.220}
& \rc{3}{24m 22s} & \rc{6}{102} & \rc{3}{156MB}
& \rc{3}{0.853} & \rc{2}{23.96} & \rc{4}{0.177}
& \rc{2}{0.906} & \rc{2}{30.21} & \rc{5}{0.254} \\

& 3DGS
& \rc{1}{0.870} & \rc{7}{28.69} & \rc{7}{0.220}
& \rc{6}{41m 33s} & \rc{4}{134} & \rc{8}{734MB}
& \rc{6}{0.841} & \rc{6}{23.14} & \rc{6}{0.183}
& \rc{4}{0.903} & \rc{8}{29.41} & \rc{3}{0.243} \\

& 3DGRT
& \rc{4}{0.854} & \rc{6}{28.71} & \rc{10}{0.250}
& \rc{8}{47m 49s} & \rc{2}{178} & \rc{6}{383MB}
& \rc{9}{0.830} & \rc{5}{23.20} & \rc{9}{0.222}
& \rc{7}{0.900} & \rc{10}{29.23} & \rc{9}{0.315} \\

& 2DGS
& \rc{4}{0.854} & \rc{9}{28.28} & \rc{5}{0.217}
& \rc{5}{33m 53s} & \rc{7}{62} & \rc{7}{428MB}
& \rc{1}{0.880} & \rc{1}{24.07} & \rc{1}{0.124}
& \rc{4}{0.903} & \rc{7}{29.53} & \rc{6}{0.257} \\

& Nexels
& \rc{3}{0.860} & \rc{4}{28.89} & \rc{2}{0.174}
& \rc{9}{1h 6m} & \rc{8}{50} & \rc{4}{160MB}
& \rc{7}{0.839} & \rc{3}{23.51} & \rc{3}{0.157}
& \rc{1}{0.911} & \rc{1}{30.47} & \rc{1}{0.204} \\

\midrule

\multirow{5}{*}{\rotatebox[origin=c]{90}{\textbf{ENF}}}

& PowerFoam
& \rc{8}{0.840} & \rc{3}{28.91} & \rc{4}{0.210}
& - & \rc{3}{174} & -
& \rc{8}{0.833} & \rc{10}{22.53} & \rc{4}{0.177}
& \rc{11}{0.884} & \rc{11}{28.59} & \rc{8}{0.285} \\

& SVRaster
& \rc{10}{0.822} & \rc{11}{27.34} & \rc{3}{0.186}
& \rc{2}{15m 24s} & \rc{5}{117} & \rc{9}{1.8GB}
& \rc{4}{0.851} & \rc{7}{23.04} & \rc{2}{0.144}
& \rc{8}{0.896} & \rc{4}{29.84} & \rc{2}{0.228} \\

& RadianceMeshes
& \rc{2}{0.867} & \rc{10}{27.99} & \rc{9}{0.230}
& \rc{10}{4h} & \rc{1}{294} & \rc{11}{2.4GB}
& \rc{2}{0.854} & \rc{4}{23.21} & \rc{8}{0.207}
& \rc{3}{0.904} & \rc{6}{29.58} & \rc{11}{0.353} \\

& IRIS
& \rc{11}{0.820} & \rc{8}{28.54} & \rc{1}{0.163}
& \rc{7}{43m 43s} & \rc{12}{0.32} & \rc{12}{9.6GB}
& \rc{5}{0.843} & \rc{9}{22.74} & \rc{10}{0.226}
& \rc{8}{0.896} & \rc{5}{29.83} & \rc{10}{0.323} \\

& \textbf{\our{} (our)}
& \rc{9}{0.836} & \rc{2}{28.93} & \rc{6}{0.219}
& \rc{11}{11h 34m} & \rc{9}{16.53} & \rc{5}{207MB}
& \rc{10}{0.804} & \rc{8}{22.84} & \rc{7}{0.221}
& \rc{8}{0.896} & \rc{3}{29.90} & \rc{7}{0.264} \\

\bottomrule
\end{tabular}

\caption{
Quantitative comparison on Mip-NeRF\,360~\cite{barron2022mipnerf360}, Tanks and Temples~\cite{tandt}, and
Deep Blending~\cite{deepblending}. Methods are grouped by scene
representation: RF denotes radiance fields, GS denotes Gaussian-based
methods, ENF denotes explicit neural fields.
}
\label{tab:quantitative_results}
\end{table*}

\paragraph{Mesh representation of a 2D Gaussian primitive}
\label{mesh_representation}
Following~\cite{3dgrt}, we represent each 2D Gaussian by a triangulated stretched regular polygon that serves as a mesh proxy for hardware-accelerated ray--triangle intersection. To reduce memory usage, all primitives share a single canonical polygon and are instantiated through per-primitive affine transformations. For a canonical vertex
\[
\mathbf{P}_j =
\begin{bmatrix}
\cos\!\left(\frac{2\pi j}{k}\right) ,
\sin\!\left(\frac{2\pi j}{k}\right) ,
0
\end{bmatrix}^T,
\]
the corresponding world-space proxy is obtained using the Gaussian center, rotation, and planar scale. We scale the proxy such that the Gaussian opacity reaches a predefined threshold $\alpha_{\min}$ at its boundary, yielding
\[
\Delta s =
\left(
-2\kappa
\ln\frac{\alpha_{\min}}{\alpha_{\mathrm{const}}}
\right)^{\frac{1}{2\kappa}},
\qquad
\alpha_{\min}\leq\alpha_{\mathrm{const}}.
\]
The resulting proxy therefore tightly bounds the effective support of the generalized Gaussian.

\paragraph{Mesh modification}
Given a deformed proxy polygon with world-space vertices $\mathbf{Q}_i$, we recover the corresponding 2D Gaussian parameters directly from the transformed geometry, without retraining (see Figure~\ref{fig:mesh_transform}). Our procedure is designed to remain stable under non-rigid deformations that may break local coplanarity or alter the primitive orientation. We center the transformed proxy vertices, compute their covariance matrix, and use its SVD to recover the principal tangent directions of the deformed primitive. Since the proxy is planar, we obtain the surface normal as the cross product of these directions and construct the updated local frame from the resulting orthonormal basis. The Gaussian center is set to the transformed polygon centroid, while its planar scales are recovered from the extent of the projected vertices in this local frame, accounting for the clamping factor $\Delta s$.

For purely rigid or axis-aligned deformations, this reconstruction is sufficient. However, under shear, the original orthogonal texture coordinates $(u,v)$ become non-orthogonal, causing a mismatch between the deformed geometry and the primitive's local neural radiance field. We therefore additionally estimate a linear transformation $\mathbf{A}$ that maps the local coordinates into the deformed texture space before they are passed to the MLP.

\paragraph{Color aggregation along the ray}
Following~\cite{kopanas21,kopanas22}, we render each pixel by collecting the ordered intersections of its camera ray with the Gaussian proxy meshes and compositing the corresponding samples front-to-back. For ray $r$, the final color is
$$
\mathbf{c}
=
\sum_{i=1}^{|\mathcal{G}^{(r)}|}
\mathbf{c}_{\mathrm{const},i}
\mathbf{c}_{\mathrm{MLP},i}
\alpha_i
\prod_{j<i}\left(1-\alpha_j\right),
$$
where the effective opacity of the $i$-th intersection is
$$
\alpha_i
=
\alpha_{\mathrm{const},i}
\alpha_{\mathrm{MLP},i}
\mathcal{N}\!\left(u_i,v_i\right).
$$
Thus, the learned color and opacity are modulated by the primitive's generalized Gaussian kernel value before standard alpha compositing.

\paragraph{Loss function}
To stabilize geometry optimization before the local neural radiance fields become expressive, we introduce a warmup stage that smoothly transitions from a simplified base renderer to the full FlaRe model. The base renderer uses only view-independent per-primitive RGB color and opacity, while the full model additionally employs the learned neural radiance descriptors alongside MLP-predicted color and opacity. We optimize
\[
\mathcal{L}
=
(1-\lambda)\mathcal{L}_{\mathrm{base}}
+
\lambda\mathcal{L}_{\mathrm{FlaRe}}
+\lambda_s \mathcal{L}_s,
\]
where
\[
\mathcal{L}_{m}
=
\lambda_{\mathrm{RGB},m}\mathcal{L}_{\mathrm{RGB},m},
\space
m\in\{\mathrm{base},\mathrm{FlaRe}\},
\]
and $\lambda\in[0,1]$ is linearly increased during the warmup.

$\mathcal{L}_{\mathrm{RGB}}$ is the standard $\mathcal{L}_2$ photometric loss. 
Finally,
\[
\mathcal{L}_s
=
\frac{1}{N}\sum_{i=1}^{N}\|\mathbf{s}_i\|_2,
\]
regularizes primitive scales and discourages excessively large Gaussians.

Additional implementation details, derivations, and an extended description of the FlaRe pipeline are provided in the Supplementary Material.

\section{Experiments}

We evaluate \our{} in terms of novel-view synthesis, graphics capabilities, and representation compactness. In addition to standard reconstruction benchmarks, we study recursive ray-tracing, mesh extraction, appearance stylization, primitive reduction, and LUT-Encoding. Experimental details are provided in the Supplementary Material.

\subsection{Novel-View Synthesis}

\paragraph{Quantitative comparison}
Across radiance fields (RF), Gaussian-based methods (GS), and explicit neural fields (ENF), \our{} achieves a favorable balance of novel-view synthesis quality, training cost, rendering speed, and memory, as summarized in Table~\ref{tab:quantitative_results}. On Mip-NeRF\,360, \our{} achieves the second-highest PSNR overall, outperforming 3DGS, 3DGRT, and the remaining ENF baselines. On Deep Blending, \our{} leads the ENF group in PSNR. Compared with other ray-queryable neural representations, it offers a stronger quality--efficiency trade-off: \our{} surpasses IRIS while using substantially less memory and rendering interactively, and improves over Radiance Meshes on both benchmarks at roughly $12\times$ lower memory. Rasterized Gaussian methods remain faster, but \our{} retains competitive reconstruction quality within a compact, explicitly ray-traceable representation.

\begin{figure}[t]
    \centering
    \includegraphics[width=\columnwidth]{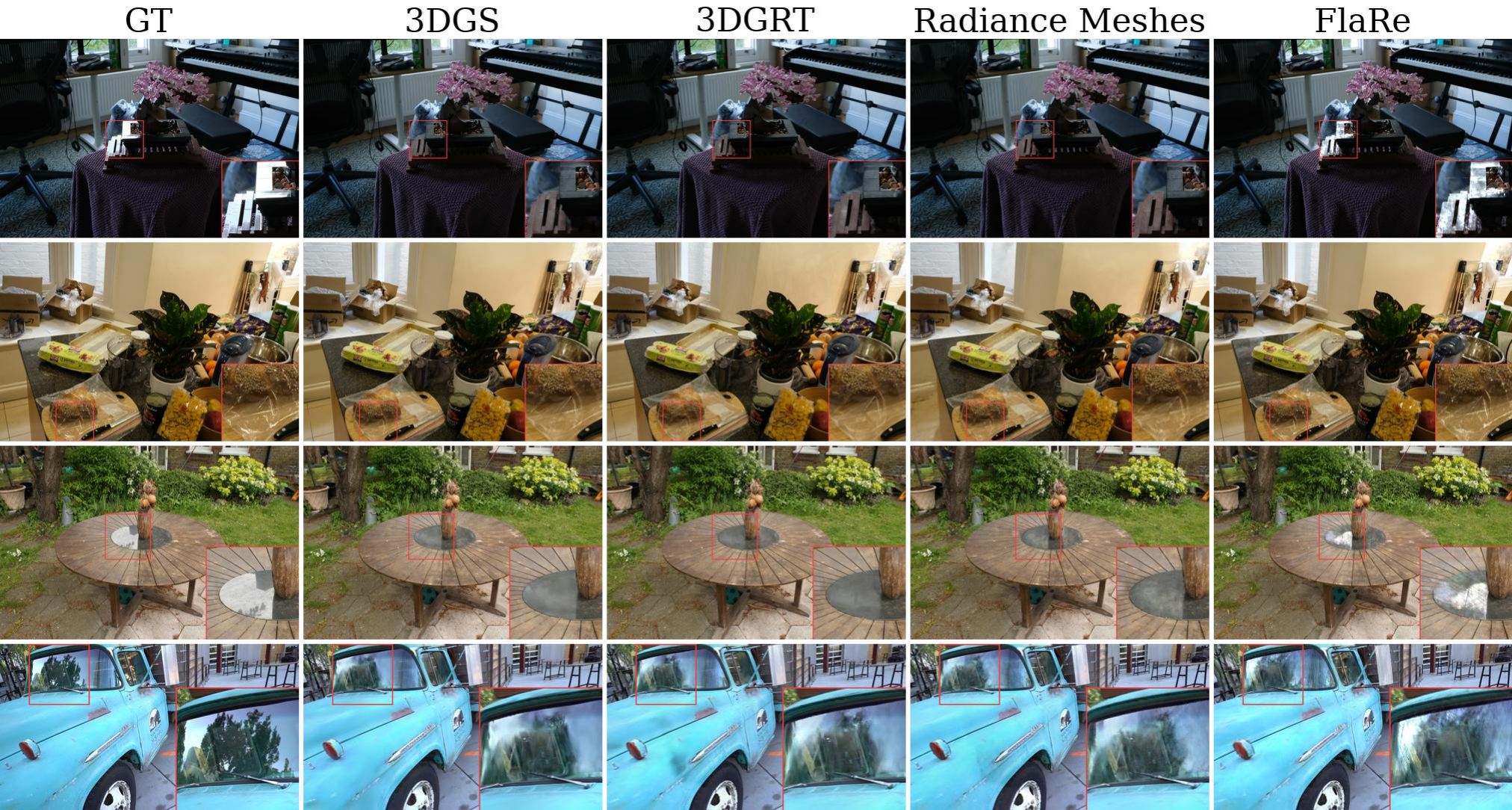}
    \caption{Qualitative comparison on Mip-NeRF\,360. Enlarged crops highlight view-dependent reflections and specularities, which \our{} reconstructs more faithfully than the competing methods.}
    \label{fig:qualitative_reflections}
\end{figure}

\paragraph{Qualitative comparison} 

Compared with 3DGS, 3DGRT, and Radiance Meshes, \our{} more faithfully reconstructs reflections and specular highlights, as shown in Figure~\ref{fig:qualitative_reflections}. Rather than assigning each primitive a fixed or low-order directional color, \our{} predicts radiance at every ray--primitive intersection from the primitive descriptor, local coordinates, and viewing direction. This allows a single primitive to model continuous spatial and angular appearance variation, producing sharper and more accurately positioned highlights.

The structure of the learned appearance space is further illustrated in Figure~\ref{fig:lut}. We project the per-primitive descriptors onto three principal components and map them to RGB. Primitives with similar appearance form coherent regions, whereas distinct materials and patterns separate more clearly. This indicates that the descriptors capture meaningful local appearance information rather than serving merely as primitive identifiers.

\begin{figure}[t]
    \centering
    \includegraphics[width=\columnwidth]{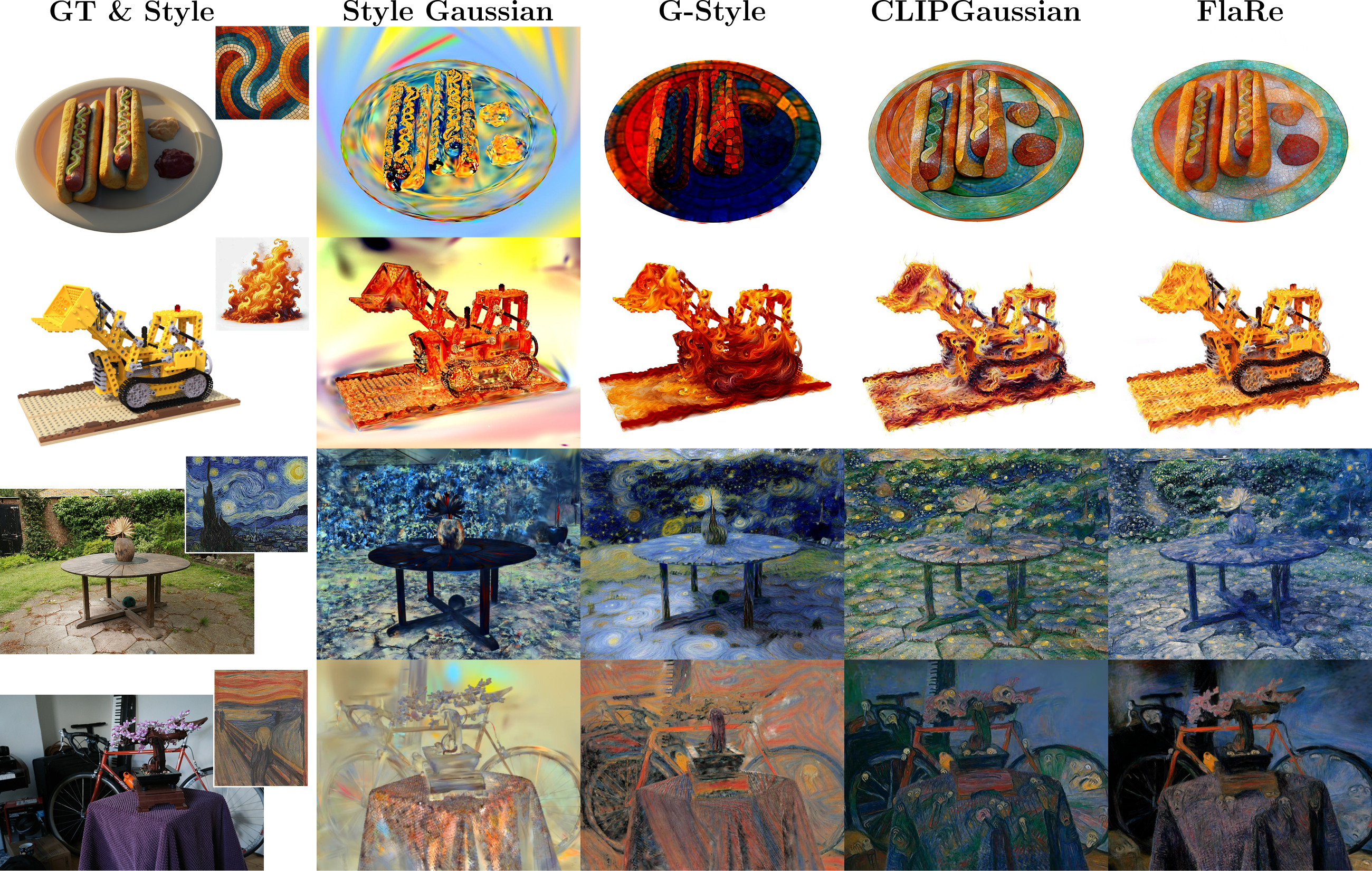}
    \caption{Image-guided 3D style transfer comparison. \our{} reproduces the target style while better preserving scene structure and shading, showing that its learned primitive descriptors provide a directly optimizable appearance representation.}
    \label{fig:style_transfer}
\end{figure}

\subsection{Graphics Capabilities}

\paragraph{Ray-Tracing}

One of our contributions is direct compatibility with recursive ray-tracing. Primary and secondary rays query the same learned primitives through hardware-accelerated intersections, without converting the scene to another representation or retraining it for a specific effect. This enables reflections, refractions, transparency, shadows, and hybrid composition with conventional mesh geometry within a single rendering pipeline. As shown in Figure~\ref{fig:ray_tracing}, mesh objects with different BRDFs interact directly with the neural scene: metallic cars reflect the outdoor Garden environment, while glass, mirror, and metal spheres in Dr.~Johnson exhibit material-dependent reflection and refraction. The Bonsai example further demonstrates multiple recursive interactions, where a refractive dragon and opposing mirrors produce nested reflections containing both mesh objects and reconstructed scene content. We additionally apply the same setup after appearance stylization (see Figure~\ref{fig:teaser}), showing that secondary rays continue to query the modified primitive descriptors without ray-tracing-specific adaptation. These results demonstrate that \our{} supports general ray-based graphics effects within a single editable neural scene representation.

\paragraph{Style Transfer}

    We evaluate whether \our{}'s per-primitive descriptors form a controllable appearance space by adapting the image-guided objective of CLIPGaussian~\cite{clipgaussian} and comparing against StyleGaussian~\cite{liu2024stylegaussian} and G-Style~\cite{kovacs2024G}. Rather than directly optimizing Gaussian appearance parameters, \our{} performs stylization in the descriptor space while fine-tuning the shared decoder. As shown in Figure~\ref{fig:style_transfer}, \our{} transfers the reference palette and visual character while better preserving object structure, shading, and scene-specific color relationships. This suggests that the descriptor space provides a coherent appearance prior across primitives.

\begin{figure}[t]
    \centering
    \includegraphics[width=\columnwidth]{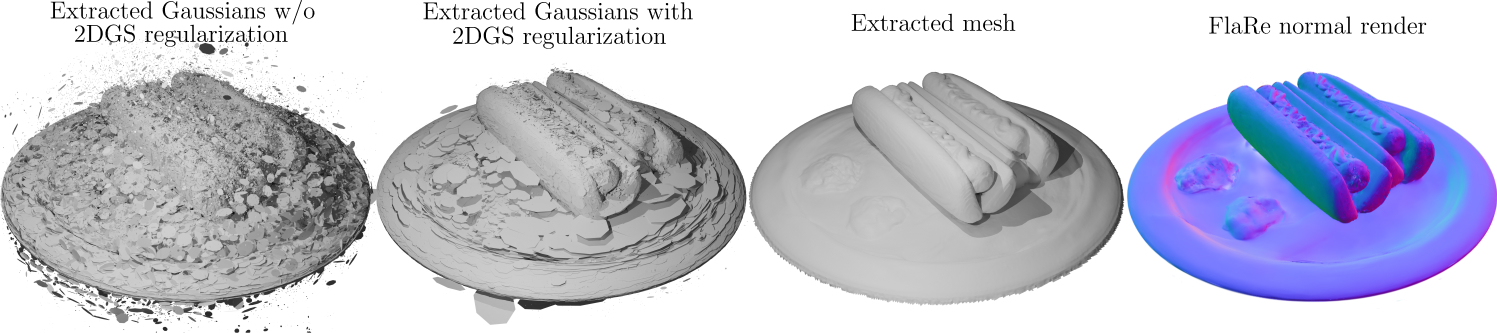}
    \caption{Mesh extraction with \our{}. From left to right: proxy meshes of the learned primitives without and with the 2DGS regularizers, the TSDF-extracted triangle mesh, and the corresponding normal render produced by \our{}.}
    \label{fig:mesh_extraction}
\end{figure}

\paragraph{Mesh Extraction} The explicit planar structure of \our{} allows us to directly adopt the mesh-extraction pipeline of 2DGS~\cite{2dgs}. For mesh-oriented training, we additionally use its depth-distortion and normal-consistency regularizers, activated after 3k and 7k iterations, respectively. As shown in Figure~\ref{fig:mesh_extraction}, these losses substantially improve the geometric alignment of the primitives and enable TSDF-based surface reconstruction. We render depth maps from the training views, fuse them into a TSDF volume, and extract its zero level set as a triangle mesh. During depth rendering, we set $\alpha_{\mathrm{MLP}}=1$ so that the recovered geometry depends only on the explicit primitives rather than appearance-dependent opacity.

\subsection{Representation Analysis}

\paragraph{Primitive reduction.}
We evaluate how well \our{} preserves reconstruction quality when the number of primitives is restricted, comparing against PUP-3DGS~\cite{HansonTuPUP3DGS} and LP-3DGS~\cite{zhang2024lp3dgs} on \emph{Lego} and \emph{Bonsai}. As shown in Figure~\ref{fig:compression}, \our{} matches or surpasses both baselines across comparable primitive budgets, with the largest advantage under aggressive reduction on \emph{Bonsai}. This indicates that its local neural radiance descriptors retain strong appearance capacity even with substantially fewer primitives, while maintaining stable reconstruction quality. Full protocols and results are provided in the Supplementary Material.

\begin{figure}[t]
    \centering
    \includegraphics[width=\columnwidth]{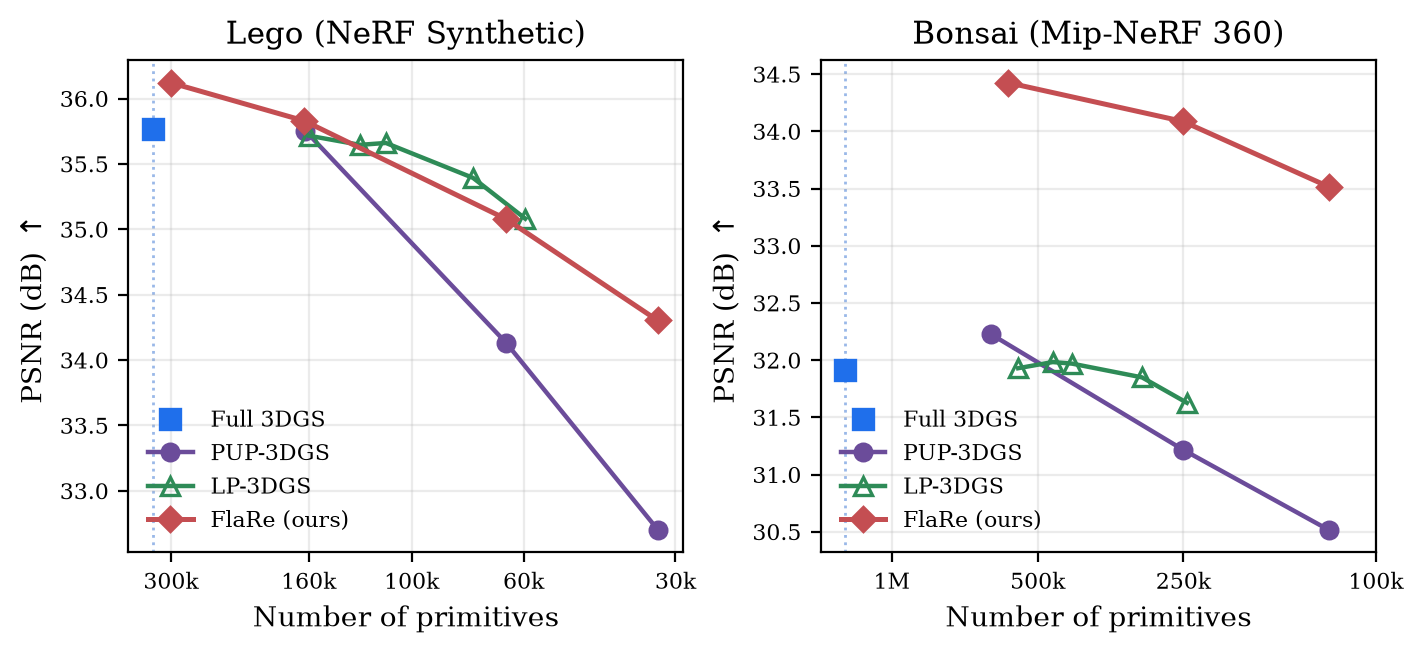}
    \caption{Compression study: test PSNR versus number of primitives on Lego and Bonsai.
    PUP-3DGS~\cite{HansonTuPUP3DGS} prunes a densified 3DGS model to fixed remaining ratios;
    LP-3DGS~\cite{zhang2024lp3dgs} varies the mask sparsity weight $\lambda_{\mathrm{mask}}$;
    \our{} is trained from scratch under matched primitive budgets.
    Vertical dotted lines mark the full densified 3DGS counts.}
    \label{fig:compression}
\end{figure}

\paragraph{LUT-Encoding.}
We isolate the encoding component by training a small coordinate-based network to reconstruct five continuous RGB-valued 2D signals from their spatial coordinates $(u,v)$. The signals include both smoothly varying patterns and localized high-frequency details, allowing us to compare reconstruction fidelity and lookup efficiency. We evaluate our collision-free LUT against hash-grid encodings with matched resolutions or parameter budgets. As shown in Table~\ref{tab:encoding_2d}, LUT-Encoding achieves the highest mean PSNR and the fastest forward evaluation. Its advantage over increasingly constrained hash tables also grows with the collision rate, indicating that direct lookup is better suited to the compact local coordinate domains used by \our{}.

\begin{table}[t]
\centering
\small
\setlength{\tabcolsep}{3.2pt}
\resizebox{\columnwidth}{!}{%
\begin{tabular}{lrr c cc}
\toprule
Encoding & Params & Collisions & Mean PSNR &
\multicolumn{2}{c}{Time (ms) $\downarrow$} \\
\cmidrule(lr){5-6}
& (K) & (\%) & (dB) $\uparrow$ & Forward & Train step \\
\midrule
\textbf{LUT (ours)} & 13.15 & 0.0 & \textbf{42.93} {\scriptsize$\pm$0.61}
& \textbf{1.123} & \textbf{3.917} \\
Hash ($T{=}2048$) & 9.06 & 31.1 & 41.06 {\scriptsize$\pm$0.34}
& \underline{1.208} & \underline{4.018} \\
Hash ($T{=}1024$) & 5.86 & 56.6 & 38.92 {\scriptsize$\pm$0.31}
& 1.295 & 4.130 \\
Hash ($T{=}512$) & 3.58 & 74.3 & 33.98 {\scriptsize$\pm$0.26}
& 1.379 & 4.250 \\
Hash (equal budget) & 13.15 & 91.6 & \underline{41.48} {\scriptsize$\pm$0.27}
& 1.293 & 4.105 \\
\bottomrule
\end{tabular}
}
\caption{Controlled coordinate-based reconstruction of five continuous RGB-valued 2D signals. Mean PSNR is averaged over the signals and random seeds. All encodings use the same training samples, bilinear interpolation, feature dimensionality, decoder, and optimization protocol. ``Train step'' includes both forward and backward passes.}
\label{tab:encoding_2d}
\end{table}

\section{Conclusions}

We introduced \ourfull{} (\our{}), a neural scene representation that combines explicit ray-traceable planar primitives with continuous local neural radiance functions. By coupling compact per-primitive descriptors with a shared auto-decoder, collision-free LUT-Encoding, and hardware-compatible proxy geometry, \our{} retains the expressiveness of neural fields while exposing a directly queryable and manipulable scene structure. Our experiments demonstrate competitive reconstruction quality together with interactive rendering, recursive ray-tracing, descriptor-space style transfer, mesh extraction, and favorable quality under aggressive primitive reduction. These results show that neural scene representations need not be specialized for a single rendering paradigm, and position \our{} as a step toward unified representations that can support both high-quality neural rendering and conventional graphics operations within the same scene model.

\textbf{Limitations.}
The finite set of planar primitives can limit far-field geometric accuracy in large, unbounded outdoor scenes, while training and interactive rendering require a modern GPU with hardware-accelerated ray-tracing.

\textbf{Societal Impact}
As with other photorealistic 3D reconstruction and editing methods, \our{} can support beneficial applications in visualization and content creation but may also facilitate the creation or manipulation of realistic synthetic visual content.

\section{Acknowledgments}
M. Zieliński and D. Belter were supported by the National Science Centre, Poland, under research project no UMO-2023/51/B/ST6/01646. The scientific work was carried out using the infrastructure of the Poznan Supercomputing and Networking Center. The work of P. Spurek was supported by the National Centre of Science (Poland) Grant No. 2023/50/E/ST6/00068.

\bibliography{bibliography}

@String(CVPR= {IEEE Conf. Comput. Vis. Pattern Recog.})

@String(ICCV= {Int. Conf. Comput. Vis.})

@String(ECCV= {Eur. Conf. Comput. Vis.})

@String(TOG= {ACM Trans. Graph.})

@String(CVPR  = {CVPR})

@String(ICCV  = {ICCV})

@String(ECCV  = {ECCV})

@String(TOG   = {ACM TOG})

@inproceedings{mildenhall2020nerf,
 title={{NeRF: Representing Scenes as Neural Radiance Fields for View Synthesis}},
 author={Ben Mildenhall and Pratul P. Srinivasan and Matthew Tancik and Jonathan T. Barron and Ravi Ramamoorthi and Ren Ng},
 year={2020},
 booktitle={ECCV},
}

@article{kerbl20233d,
  title={3D Gaussian Splatting for Real-Time Radiance Field Rendering},
  author={Kerbl, Bernhard and Kopanas, Georgios and Leimk{\"u}hler, Thomas and Drettakis, George},
  journal={ACM Transactions on Graphics},
  volume={42},
  number={4},
  year={2023}
}

@article{instantngp,
    author = {Thomas M\"uller and Alex Evans and Christoph Schied and Alexander Keller},
    title = {Instant Neural Graphics Primitives with a Multiresolution Hash Encoding},
    journal = {ACM Trans. Graph.},
    issue_date = {July 2022},
    volume = {41},
    number = {4},
    month = jul,
    year = {2022},
    pages = {102:1--102:15},
    articleno = {102},
    numpages = {15},
    url = {https://doi.org/10.1145/3528223.3530127},
    doi = {10.1145/3528223.3530127},
    publisher = {ACM},
    address = {New York, NY, USA},
}

@article{barron2021mipnerf,
    title={Mip-NeRF: A Multiscale Representation 
           for Anti-Aliasing Neural Radiance Fields},
    author={Jonathan T. Barron and Ben Mildenhall and 
            Matthew Tancik and Peter Hedman and 
            Ricardo Martin-Brualla and Pratul P. Srinivasan},
    journal={ICCV},
    year={2021}
}

@article{waczynska2024games,
  title={Games: Mesh-based adapting and modification of gaussian splatting},
  author={Waczy{\'n}ska, Joanna and Borycki, Piotr and Tadeja, S{\l}awomir and Tabor, Jacek and Spurek, Przemys{\l}aw},
  journal={arXiv preprint arXiv:2402.01459},
  year={2024}
}

@inproceedings{guedon2024sugar,
  title={Sugar: Surface-aligned gaussian splatting for efficient 3d mesh reconstruction and high-quality mesh rendering},
  author={Gu{\'e}don, Antoine and Lepetit, Vincent},
  booktitle={Proceedings of the IEEE/CVF Conference on Computer Vision and Pattern Recognition},
  pages={5354--5363},
  year={2024}
}

@InProceedings{plenoxels,
    author    = {Fridovich-Keil, Sara and Yu, Alex and Tancik, Matthew and Chen, Qinhong and Recht, Benjamin and Kanazawa, Angjoo},
    title     = {Plenoxels: Radiance Fields Without Neural Networks},
    booktitle = {Proceedings of the IEEE/CVF Conference on Computer Vision and Pattern Recognition (CVPR)},
    month     = {June},
    year      = {2022},
    pages     = {5501-5510}
}

@article{barron2022mipnerf360,
    title={Mip-NeRF 360: Unbounded Anti-Aliased Neural Radiance Fields},
    author={Jonathan T. Barron and Ben Mildenhall and 
            Dor Verbin and Pratul P. Srinivasan and Peter Hedman},
    journal={CVPR},
    year={2022}
}

@article{deepblending,
  title={Deep blending for free-viewpoint image-based rendering},
  author={Hedman, Peter and Philip, Julien and Price, True and Frahm, Jan-Michael and Drettakis, George and Brostow, Gabriel},
  journal={ACM Transactions on Graphics (ToG)},
  volume={37},
  number={6},
  pages={1--15},
  year={2018},
  publisher={ACM New York, NY, USA}
}

@article{tandt,
  title={Tanks and temples: Benchmarking large-scale scene reconstruction},
  author={Knapitsch, Arno and Park, Jaesik and Zhou, Qian-Yi and Koltun, Vladlen},
  journal={ACM Transactions on Graphics (ToG)},
  volume={36},
  number={4},
  pages={1--13},
  year={2017},
  publisher={ACM New York, NY, USA}
}

@misc{radiance_meshes,
      title={Radiance Meshes for Volumetric Reconstruction}, 
      author={Alexander Mai and Trevor Hedstrom and George Kopanas and Janne Kontkanen and Falko Kuester and Jonathan T. Barron},
      year={2025},
      eprint={2512.04076},
      archivePrefix={arXiv},
      primaryClass={cs.GR},
      url={https://arxiv.org/abs/2512.04076}, 
}

@article{barron2023zipnerf,
    title={Zip-NeRF: Anti-Aliased Grid-Based Neural Radiance Fields},
    author={Jonathan T. Barron and Ben Mildenhall and 
            Dor Verbin and Pratul P. Srinivasan and Peter Hedman},
    journal={ICCV},
    year={2023}
}

@inproceedings{nerfcasting,
author = {Verbin, Dor and Srinivasan, Pratul P. and Hedman, Peter and Mildenhall, Ben and Attal, Benjamin and Szeliski, Richard and Barron, Jonathan T.},
title = {NeRF-Casting: Improved View-Dependent Appearance with Consistent Reflections},
year = {2024},
isbn = {9798400711312},
publisher = {Association for Computing Machinery},
address = {New York, NY, USA},
url = {https://doi.org/10.1145/3680528.3687585},
doi = {10.1145/3680528.3687585},
booktitle = {SIGGRAPH Asia 2024 Conference Papers},
articleno = {15},
numpages = {10},
location = {Tokyo, Japan},
series = {SA '24}
}

@article{tramnerf,
	title = {TraM-{NeRF}: Tracing {Mirror} and {Near}-{Perfect} {Specular} {Reflections} {Through} {Neural} {Radiance} {Fields}},
	author = {Holland, Leif Van and Bliersbach, Ruben and M{\" u}ller, Jan Uwe and Stotko, Patrick and Klein, Reinhard},
	journal = {Computer Graphics Forum},
	year = {2024},
	pages = {e15163},
}

@inproceedings{mirror-nerf,
    title={Mirror-NeRF: Learning Neural Radiance Fields for Mirrors with Whitted-Style Ray Tracing},
    author={Zeng, Junyi and Bao, Chong and Chen, Rui and Dong, Zilong and Zhang, Guofeng and Bao, Hujun and Cui, Zhaopeng},
    booktitle={Proceedings of the 31st ACM International Conference on Multimedia},
    pages={4606--4615},
    year={2023}
}

@inproceedings{tensorf,
  author = {Anpei Chen and Zexiang Xu and Andreas Geiger and Jingyi Yu and Hao Su},
  title = {TensoRF: Tensorial Radiance Fields},
  booktitle = {European Conference on Computer Vision (ECCV)},
  year = {2022}
}

@inproceedings{scaffoldgs,
  title={Scaffold-gs: Structured 3d gaussians for view-adaptive rendering},
  author={Lu, Tao and Yu, Mulin and Xu, Linning and Xiangli, Yuanbo and Wang, Limin and Lin, Dahua and Dai, Bo},
  booktitle={Proceedings of the IEEE/CVF Conference on Computer Vision and Pattern Recognition},
  pages={20654--20664},
  year={2024}
}

@inproceedings{2dgs,
    title={2D Gaussian Splatting for Geometrically Accurate Radiance Fields},
    author={Huang, Binbin and Yu, Zehao and Chen, Anpei and Geiger, Andreas and Gao, Shenghua},
    publisher = {Association for Computing Machinery},
    booktitle = {SIGGRAPH 2024 Conference Papers},
    year      = {2024},
    doi       = {10.1145/3641519.3657428}
}

@article{deferred,
author = {Thies, Justus and Zollh\"{o}fer, Michael and Nie\ss{}ner, Matthias},
title = {Deferred neural rendering: image synthesis using neural textures},
year = {2019},
issue_date = {August 2019},
publisher = {Association for Computing Machinery},
address = {New York, NY, USA},
volume = {38},
number = {4},
issn = {0730-0301},
url = {https://doi.org/10.1145/3306346.3323035},
doi = {10.1145/3306346.3323035},
journal = {ACM Trans. Graph.},
month = jul,
articleno = {66},
numpages = {12}
}

@InProceedings{radfoam,
    author    = {Govindarajan, Shrisudhan and Rebain, Daniel and Yi, Kwang Moo and Tagliasacchi, Andrea},
    title     = {Radiant Foam: Real-Time Differentiable Ray Tracing},
    booktitle = {Proceedings of the IEEE/CVF International Conference on Computer Vision (ICCV)},
    month     = {October},
    year      = {2025},
    pages     = {4135-4145}
}

@misc{eks,
      title={Affine-Equivariant Kernel Space Encoding for NeRF Editing}, 
      author={Mikołaj Zieliński and Krzysztof Byrski and Tomasz Szczepanik and Dominik Belter and Przemysław Spurek},
      year={2026},
      eprint={2508.02831},
      archivePrefix={arXiv},
      primaryClass={cs.CV},
      url={https://arxiv.org/abs/2508.02831}, 
}

@article{iris,
  title={IRIS: Intersection-aware Ray-based Implicit Editable Scenes},
  author={Grzegorz Wilczyński and Mikołaj Zieliński and Krzysztof Byrski and Joanna Waczyńska and Dominik Belter and Przemysław Spurek},
  year={2026},
  eprint={2603.15368},
  archivePrefix={arXiv},
  primaryClass={cs.CV},
  url={https://arxiv.org/abs/2603.15368},
}

@article{powerfoam,
  title   = {Power Foam: Unifying Real-Time Differentiable Ray Tracing and Rasterization},
  author  = {Govindarajan, Shrisudhan and Rebain, Daniel and Verbin, Dor and
             Yi, Kwang Moo and Prabhu, Anish and Tagliasacchi, Andrea},
  journal = {arXiv},
  year    = {2026},
}

@article{dnmp,
  author    = {Lu, Fan and Xu, Yan and Chen, Guang and Li, Hongsheng and Lin, Kwan-Yee and Jiang, Changjun},
  title     = {Urban Radiance Field Representation with Deformable Neural Mesh Primitives},
  journal   = {ICCV},
  year      = {2023},
}

@article{billboard,
  title={Gaussian billboards: Expressive 2d gaussian splatting with textures},
  author={Weiss, Sebastian and Bradley, Derek},
  journal={arXiv preprint arXiv:2412.12734},
  year={2024}
}

@article{3dgrt,
    author = {Nicolas Moenne-Loccoz and Ashkan Mirzaei and Or Perel and Riccardo de Lutio and Janick Martinez Esturo and Gavriel State and Sanja Fidler and Nicholas Sharp and Zan Gojcic},
    title = {3D Gaussian Ray Tracing: Fast Tracing of Particle Scenes},
    journal = {ACM Transactions on Graphics and SIGGRAPH Asia},
    year = {2024},
}

@article{3dgut,
    title={3DGUT: Enabling Distorted Cameras and Secondary Rays in Gaussian Splatting},
    author={Wu, Qi and Martinez Esturo, Janick and Mirzaei, Ashkan and Moenne-Loccoz, Nicolas and Gojcic, Zan},
    journal = {Conference on Computer Vision and Pattern Recognition (CVPR)},
    year={2025}
}

@article{raysplat,
  title={Raysplats: Ray tracing based gaussian splatting},
  author={Byrski, Krzysztof and Mazur, Marcin and Tabor, Jacek and Dziarmaga, Tadeusz and K{\k{a}}dzio{\l}ka, Marcin and Baran, Dawid and Spurek, Przemys{\l}aw},
  journal={arXiv preprint arXiv:2501.19196},
  year={2025}
}

@article{redisplat,
      title={REdiSplats: Ray Tracing for Editable Gaussian Splatting}, 
      author={Krzysztof Byrski and Grzegorz Wilczyński and Weronika Smolak-Dyżewska and Piotr Borycki and Dawid Baran and Sławomir Tadeja and Przemysław Spurek},
      year={2025},
      eprint={2503.12284},
      archivePrefix={arXiv},
      primaryClass={cs.CV},
      url={https://arxiv.org/abs/2503.12284}, 
}

@InProceedings{meshsplat_tobiasz,
author="Tobiasz, Rafa{\l}
and Wilczy{\'{n}}ski, Grzegorz
and Mazur, Marcin
and Tadeja, S{\l}awomir
and Smolak-Dy{\.{z}}ewska, Weronika
and Spurek, Przemys{\l}aw",
editor="Neumann, Philipp
and Puma, Michael J.
and Lees, Michael H.
and Groen, Derek
and Dongarra, Jack J.
and Sloot, Peter M. A.",
title="MeshSplats: Mesh-Based Rendering with Gaussian Splatting Initialization",
booktitle="Computational Science -- ICCS 2026",
year="2026",
publisher="Springer Nature Switzerland",
address="Cham",
pages="386--394",
isbn="978-3-032-29924-6"
}

@inproceedings{linprim,
  title={LinPrim: Linear Primitives for Differentiable Volumetric Rendering},
  author={von L{\"u}tzow, Nicolas and Nie{\ss}ner, Matthias},
  booktitle={The Thirty-ninth Annual Conference on Neural Information Processing Systems},
  year={2025}
}

@article{frosting,
title={Gaussian Frosting: Editable Complex Radiance Fields with Real-Time Rendering},
author={Gu{\'e}don, Antoine and Lepetit, Vincent},
journal={ECCV},
year={2024}
}

@InProceedings{gaussian_editor,
    author    = {Chen, Yiwen and Chen, Zilong and Zhang, Chi and Wang, Feng and Yang, Xiaofeng and Wang, Yikai and Cai, Zhongang and Yang, Lei and Liu, Huaping and Lin, Guosheng},
    title     = {GaussianEditor: Swift and Controllable 3D Editing with Gaussian Splatting},
    booktitle = {Proceedings of the IEEE/CVF Conference on Computer Vision and Pattern Recognition (CVPR)},
    month     = {June},
    year      = {2024},
    pages     = {21476-21485}
}

@inproceedings{gaussian_grouping,
    title={Gaussian Grouping: Segment and Edit Anything in 3D Scenes},
    author={Ye, Mingqiao and Danelljan, Martin and Yu, Fisher and Ke, Lei},
    booktitle={ECCV},
    year={2024}
}

@inproceedings{
  clipgaussian,
  title={CLIPGaussian: Universal and Multimodal Style Transfer Based on Gaussian Splatting},
  author={Kornel Howil and Joanna Waczynska and Piotr Borycki and Tadeusz Dziarmaga and Marcin Mazur and Przemys{\l}aw Spurek},
  booktitle={The Thirty-ninth Annual Conference on Neural Information Processing Systems},
  year={2025},
  url={https://openreview.net/forum?id=kjWB8iaO3l},
}

@article{stylegaussian,
  title={StyleGaussian: Instant 3D Style Transfer with Gaussian Splatting},
  author={Liu, Kunhao and Zhan, Fangneng and Xu, Muyu and Theobalt, Christian and Shao, Ling and Lu, Shijian},
  journal={arXiv preprint arXiv:2403.07807},
  year={2024},
}

@article{nexels,
    title={Nexels: Neurally-textured surfels for real-time novel view synthesis with sparse geometries},
    author={Rong, Victor and Held, Jan and Chu, Victor and Rebain, Daniel and
        Van Droogenbroeck, Marc and Kutulakos, Kiriakos N and Tagliasacchi, Andrea and Lindell, David B},
    journal={arXiv preprint arXiv:2512.13796},
    year={2025}
}

@article{kopanas21,
      author       = "Kopanas, Georgios and Philip, Julien and Leimkühler, Thomas and Drettakis, George",
      title        = "Point-Based Neural Rendering with Per-View Optimization",
      journal      = "Computer Graphics Forum (Proceedings of the Eurographics Symposium on Rendering)",
      number       = "4",
      volume       = "40",
      month        = "June",
      year         = "2021",
      url          = "http://www-sop.inria.fr/reves/Basilic/2021/KPLD21"}

@article{kopanas22,
      title={Neural Point Catacaustics for Novel-View Synthesis of Reflections},
      author={Kopanas, Georgios and Leimk{\"u}hler, Thomas and Rainer, Gilles and Jambon, Cl{\'e}ment and Drettakis, George},
      journal={ACM Transactions on Graphics},
      volume={41},
      number={6},
      pages={Article--201},
      year={2022}
    }

@inproceedings{HansonTuPUP3DGS,
  title={Pup 3d-gs: Principled uncertainty pruning for 3d gaussian splatting},
  author={Hanson, Alex and Tu, Allen and Singla, Vasu and Jayawardhana, Mayuka and Zwicker, Matthias and Goldstein, Tom},
  booktitle={Proceedings of the Computer Vision and Pattern Recognition Conference},
  pages={5949--5958},
  year={2025}
}

@article{zhang2024lp3dgs,
  title={Lp-3dgs: Learning to prune 3d gaussian splatting},
  author={Zhang, Zhaoliang and Song, Tianchen and Lee, Yongjae and Yang, Li and Peng, Cheng and Chellappa, Rama and Fan, Deliang},
  journal={Advances in Neural Information Processing Systems},
  volume={37},
  pages={122434--122457},
  year={2024}
}

@incollection{liu2024stylegaussian,
  title={Stylegaussian: Instant 3d style transfer with gaussian splatting},
  author={Liu, Kunhao and Zhan, Fangneng and Xu, Muyu and Theobalt, Christian and Shao, Ling and Lu, Shijian},
  booktitle={SIGGRAPH Asia 2024 Technical Communications},
  pages={1--4},
  year={2024}
}

@inproceedings{kovacs2024G,
  title={G-Style: Stylized Gaussian Splatting},
  author={Kov{\'a}cs, {\'A}ron Samuel and Hermosilla, Pedro and Raidou, Renata G},
  booktitle={Computer Graphics Forum},
  volume={43},
  number={7},
  pages={e15259},
  year={2024},
  organization={Wiley Online Library}
}

@inproceedings{sun2025sparse,
  title={Sparse voxels rasterization: Real-time high-fidelity radiance field rendering},
  author={Sun, Cheng and Choe, Jaesung and Loop, Charles and Ma, Wei-Chiu and Wang, Yu-Chiang Frank},
  booktitle={Proceedings of the IEEE/CVF conference on computer vision and pattern recognition},
  pages={16187--16196},
  year={2025}
}

\newpage
\onecolumn
\begin{center}
    {\LARGE\bfseries Supplementary Materials\par}
    {\LARGE\bfseries \ourfull{}\par}
    \vspace{1cm}
\end{center}

This supplementary material provides additional technical details, experimental results, and implementation information for Floating Radiance Networks (FlaRe). We first extend the discussion of related work and report supplementary evaluations of primitive reduction, qualitative reconstruction quality, and LUT-Encoding. We then present a more detailed description of the FlaRe representation, including its planar generalized Gaussian primitives, shared auto-decoder, collision-free multiresolution LUT-Encoding, polygonal mesh proxies, rendering formulation, optimization procedure, and primitive-level deformation. Finally, we provide reproducibility details and additional analyses that complement the results presented in the main paper.

\section{Extended Related Works}

\paragraph{Radiance fields and learned appearance on explicit primitives.}
Neural Radiance Fields (NeRFs) represent scenes as continuous functions of position and viewing direction and synthesize novel views through differentiable volume rendering~\cite{mildenhall2020nerf}. Subsequent work improved anti-aliasing, unbounded-scene reconstruction, and rendering efficiency through integrated positional encodings, scene contraction, sparse grids, multiresolution hash encodings, and tensor factorizations~\cite{barron2021mipnerf,barron2022mipnerf360,barron2023zipnerf,plenoxels,instantngp,tensorf}. Despite these advances, NeRF-style methods still reconstruct images by evaluating and integrating samples along rays. In contrast, 3D Gaussian Splatting (3DGS)~\cite{kerbl20233d} achieves real-time rendering using explicit anisotropic primitives and visibility-aware rasterization, while 2DGS~\cite{2dgs} replaces volumetric Gaussians with planar primitives to improve geometric consistency. Several methods further combine explicit geometry with learned appearance. Deferred Neural Rendering~\cite{deferred} associates conventional geometry with neural textures, DNMP~\cite{dnmp} attaches latent features to local mesh elements, and Scaffold-GS~\cite{scaffoldgs} predicts view-adaptive Gaussian attributes from anchor features. Gaussian Billboards~\cite{billboard} introduce spatially varying textures over planar Gaussians, whereas EKS~\cite{eks} uses transformation-aware Gaussian feature carriers to condition an editable neural radiance field. IRIS~\cite{iris} instead aggregates features at ordered ray--primitive intersections. These approaches demonstrate the benefits of learned features on explicit structures, but are not designed around continuous local radiance functions evaluated directly on independently ray-traceable primitives. \our{} assigns each floating planar primitive a compact latent descriptor and decodes spatially varying, view-dependent color and opacity at the corresponding ray--primitive intersection using one shared network.

\paragraph{Ray-queryable neural scene representations.}
A growing body of work adapts Gaussian and explicit radiance representations to arbitrary and secondary rays. 3D Gaussian Ray Tracing (3DGRT)~\cite{3dgrt} encloses Gaussian particles in proxy geometry and uses hardware-accelerated ray intersections, while 3DGUT~\cite{3dgut} extends Gaussian rendering to distorted cameras and secondary-ray applications. RaySplats~\cite{raysplat} directly intersects rays with Gaussian confidence ellipses, and REdiSplats~\cite{redisplat} combines polygonal Gaussian proxies with ray tracing and editable scene structure. MeshSplats~\cite{meshsplat_tobiasz} converts a pretrained Gaussian representation into mesh faces, whereas LinPrim~\cite{linprim} directly optimizes polyhedral primitives compatible with mesh-based rendering tools. Beyond Gaussian-derived methods, Radiant Foam~\cite{radfoam}, Power Foam~\cite{powerfoam}, and Radiance Meshes~\cite{radiance_meshes} introduce structured volumetric or mesh-based representations for efficient differentiable ray tracing and, in some cases, rasterization. These methods primarily focus on efficient traversal, graphics-compatible geometry, or conversion from existing representations. In contrast, \our{} makes a shared decoder of continuous local radiance a fundamental component of the ray-traceable representation: every primitive independently carries a compact descriptor that is evaluated using its local coordinates and the ray direction.

\paragraph{Editable and graphics-compatible neural representations.}
Explicit scene structure is also valuable for downstream graphics operations. SuGaR~\cite{guedon2024sugar}, Gaussian Frosting~\cite{frosting}, and 2DGS~\cite{2dgs} introduce surface-aware Gaussian formulations that facilitate mesh extraction. GaMeS~\cite{waczynska2024games} binds Gaussians to mesh faces to support deformation and animation, while REdiSplats~\cite{redisplat} exposes editable ray-traceable proxy geometry. GaussianEditor~\cite{gaussian_editor} and Gaussian Grouping~\cite{gaussian_grouping} enable local manipulation through generative guidance or semantic grouping. Appearance-oriented methods such as StyleGaussian~\cite{stylegaussian} and CLIPGaussian~\cite{clipgaussian} perform image- or text-guided stylization by optimizing Gaussian appearance and geometry. These capabilities are commonly introduced through specialized representations or task-specific extensions. In \our{}, explicit geometry, learned local appearance, and direct ray queries are properties of the same scene model, allowing primitive-level editing, deformation, mesh extraction, recursive ray tracing, and descriptor-space stylization without replacing the underlying representation.

\section{Extended Results}

\subsection{Primitive Reduction}
\label{sec:compression_supp}

We expand the primitive-reduction study from the main paper by comparing \our{} against two recent primitive-reduction baselines for 3DGS on \emph{Lego} (NeRF Synthetic) and \emph{Bonsai} (Mip-NeRF\,360, indoor, half-resolution):
PUP-3DGS~\cite{HansonTuPUP3DGS}, which post-hoc prunes a pretrained densified model using a Fisher sensitivity score followed by short fine-tuning, and
LP-3DGS~\cite{zhang2024lp3dgs}, which learns a differentiable pruning mask jointly with densification.

\paragraph{Protocol.}
For PUP-3DGS we first train a full 3DGS model for $30$k iterations and then prune to approximately $50\%$, $20\%$, and $10\%$ of the remaining Gaussians (PUP $50$, $80$, and cascaded $80{\rightarrow}50$), following the authors' prune--refine recipe.
For LP-3DGS we sweep the mask sparsity weight $\lambda_{\mathrm{mask}}\in\{10^{-2},5{\cdot}10^{-3},10^{-3},5{\cdot}10^{-4},10^{-4}\}$, which induces a continuum of operating points rather than fixed target counts.
For \our{} we train from scratch for $30$k iterations under a hard upper bound on the primitive count, matched to the PUP remaining counts on each scene (and a high reference cap of $300$k on Lego / $2$M on Bonsai).
Mid-training evaluation is disabled so that reported optimization time reflects training only; metrics are measured once from the final checkpoint.

\paragraph{Results.}
The quantitative comparison in Table~\ref{tab:compression} shows that \our{} consistently provides the best PSNR--primitive-count trade-off on Bonsai, with the largest advantage at aggressive budgets. Even with a loose upper bound, its primitive count saturates well below the imposed cap while its reconstruction quality remains above that of the full 3DGS model. On Lego, the gains are smaller but remain consistent at low primitive counts; at moderate budgets, \our{} matches LP-3DGS and outperforms PUP-3DGS. Near the full model size, all three methods approach the quality of densified 3DGS, with \our{} slightly exceeding the full-model baseline.
LP-3DGS preserves quality better than PUP-3DGS at moderate reduction levels, but its learned mask does not reach the most aggressive counts attained by PUP-3DGS and by our budgeted runs.
Overall, the shared neural decoder of \our{} provides a more compact appearance model per primitive, so a given reconstruction fidelity requires substantially fewer primitives than rasterized 3DGS under either post-hoc or learned pruning.

\begin{table}[H]
\centering
\small
\setlength{\tabcolsep}{3.2pt}
\begin{tabular}{ll r ccc}
\toprule
Scene & Method & \#Prim. & PSNR $\uparrow$ & SSIM $\uparrow$ & LPIPS $\downarrow$ \\
\midrule
\multirow{10}{*}{Lego}
 & Full 3DGS & 324{,}826 & 35.77 & 0.983 & 0.016 \\
\cline{2-6}
 & \multirow{3}{*}{PUP-3DGS} & 32{,}482 & 32.70 & 0.969 & 0.039 \\
 &  & 64{,}965 & 34.13 & 0.977 & 0.024 \\
 &  & 162{,}413 & 35.75 & 0.983 & 0.016 \\
\cline{2-6}
 & \multirow{2}{*}{LP-3DGS} & 59{,}531 & 35.08 & 0.981 & 0.019 \\
 &  & 159{,}430 & 35.72 & 0.983 & 0.016 \\
\cline{2-6}
 & \multirow{4}{*}{\our{}} & 32{,}548 & 34.30 & 0.973 & 0.025 \\
 &  & 64{,}967 & 35.08 & 0.977 & 0.020 \\
 &  & 163{,}179 & 35.83 & 0.980 & 0.017 \\
 &  & 300{,}328 & \textbf{36.12} & \textbf{0.981} & \textbf{0.015} \\
\midrule
\multirow{9}{*}{Bonsai}
 & Full 3DGS & 1{,}250{,}561 & 31.91 & 0.940 & 0.206 \\
\cline{2-6}
 & \multirow{3}{*}{PUP-3DGS} & 125{,}056 & 30.52 & 0.919 & 0.251 \\
 &  & 250{,}112 & 31.22 & 0.931 & 0.226 \\
 &  & 625{,}280 & 32.23 & 0.941 & 0.205 \\
\cline{2-6}
 & \multirow{2}{*}{LP-3DGS} & 245{,}755 & 31.63 & 0.935 & 0.219 \\
 &  & 548{,}477 & 31.93 & 0.939 & 0.210 \\
\cline{2-6}
 & \multirow{3}{*}{\our{}} & 125{,}057 & 33.51 & 0.929 & 0.237 \\
 &  & 250{,}128 & 34.09 & 0.937 & 0.223 \\
 &  & 575{,}455 & \textbf{34.43} & \textbf{0.942} & 0.212 \\
\bottomrule
\end{tabular}
\caption{Quantitative primitive-reduction comparison on Lego and Bonsai.
PUP-3DGS rows correspond to $\approx$10\% / 20\% / 50\% remaining Gaussians after prune--refine.
LP-3DGS rows are representative sparsity weights ($\lambda_{\mathrm{mask}}{=}10^{-2}$ and $10^{-4}$); the complete PSNR-versus-count sweep is shown in the primitive-reduction figure in the main paper.
\our{} rows report runs with hard primitive-count caps selected to match the PUP operating points, together with high-cap reference runs ($300$k on Lego and $2$M on Bonsai).}
\label{tab:compression}
\end{table}

\subsection{Qualitative Results}

Figure~\ref{fig:qualitative_surfaces} shows cleaner walls and ceilings with fewer streaks and splat-shaped artifacts. Here, continuous prediction over each primitive footprint reduces piecewise color discontinuities, while the shared decoder gives all primitives a common appearance model. Together with depth-distortion and scale regularization, which discourage inconsistent depth contributions and oversized primitives, this yields a more coherent reconstruction of broad planar surfaces.

\begin{figure}[H]
    \centering
    \includegraphics[width=\columnwidth]{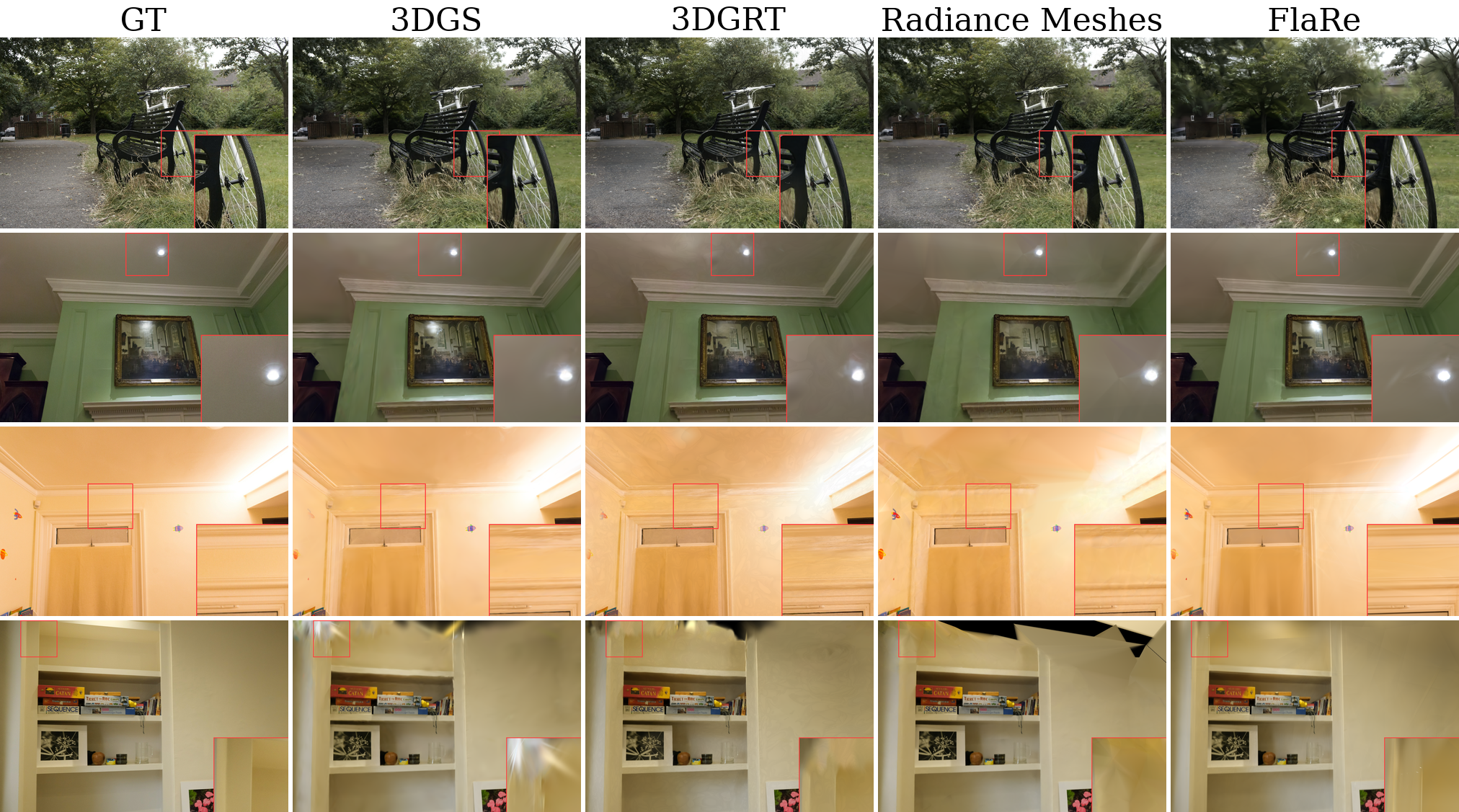}
    \caption{Qualitative comparison on scenes from Mip-NeRF\,360 and Deep Blending. \our{} yields cleaner reconstructions of broad, weakly textured surfaces such as walls and ceilings, with fewer visible rendering artifacts, while preserving fine-scale structures and high-frequency appearance details, as highlighted by the bicycle wheel and its spokes.}
    \label{fig:qualitative_surfaces}
\end{figure}

\subsection{LUT-Encoding}

\paragraph{Controlled 2D encoding experiment.}
We isolate the spatial encoding from rendering and geometry by fitting five continuous RGB functions on $[0,1]^2$: smooth, multiscale Fourier, spatially varying chirp, localized high-frequency, and discontinuous checkerboard signals. At every iteration, we sample 16,384 continuous coordinates uniformly and optimize an identical $8$--$64$--$64$--$3$ ReLU MLP for 5,000 iterations using Adam ($\mathrm{lr}=10^{-3}$, $\beta=(0.9,0.99)$). All encodings use four bilinearly interpolated levels with two features per level, identical initialization and training samples, and are evaluated on a fixed $512^2$ grid over five seeds. We report mean PSNR and encoding-only forward and backward time for batches of 262,144 coordinates. Exact target definitions and configurations are provided in our released evaluation script.

\begin{table}[h!]
\centering
\small
\setlength{\tabcolsep}{3.2pt}
\resizebox{\columnwidth}{!}{%
\begin{tabular}{lrr c c c c c c rr}
\toprule
Encoding & Enc.\ params & Coll. & \multicolumn{5}{c}{PSNR (dB) $\uparrow$} & Mean & \multicolumn{2}{c}{Time (ms) $\downarrow$} \\
\cmidrule(lr){4-8}\cmidrule(lr){10-11}
 & (K) & (\%) & Smooth & Multi. & Chirp & Local & Checker & $\uparrow$ & Fwd. & Fwd.+Bwd. \\
\midrule
\textbf{LUT (ours)} & 13.15 & 0.0 & \textbf{63.36} {\scriptsize$\pm$3.07} & \textbf{53.24} {\scriptsize$\pm$0.35} & \textbf{27.67} {\scriptsize$\pm$0.14} & \underline{43.05} {\scriptsize$\pm$0.38} & \textbf{27.33} {\scriptsize$\pm$0.13} & \textbf{42.93} {\scriptsize$\pm$0.61} & \textbf{1.123} & \textbf{3.917} \\
Hash ($T{=}2048$) & 9.06 & 31.1 & \underline{61.00} {\scriptsize$\pm$1.92} & \underline{51.33} {\scriptsize$\pm$0.13} & \underline{23.70} {\scriptsize$\pm$0.12} & 42.67 {\scriptsize$\pm$1.18} & 26.62 {\scriptsize$\pm$0.16} & 41.06 {\scriptsize$\pm$0.34} & \underline{1.208} & \underline{4.018} \\
Hash ($T{=}1024$) & 5.86 & 56.6 & 59.25 {\scriptsize$\pm$1.21} & 48.72 {\scriptsize$\pm$0.27} & 19.11 {\scriptsize$\pm$0.30} & 42.82 {\scriptsize$\pm$0.14} & 24.68 {\scriptsize$\pm$0.15} & 38.92 {\scriptsize$\pm$0.31} & 1.295 & 4.130 \\
Hash ($T{=}512$) & 3.58 & 74.3 & 48.78 {\scriptsize$\pm$0.46} & 45.14 {\scriptsize$\pm$0.35} & 16.12 {\scriptsize$\pm$0.22} & 39.51 {\scriptsize$\pm$0.65} & 20.36 {\scriptsize$\pm$0.38} & 33.98 {\scriptsize$\pm$0.26} & 1.379 & 4.250 \\
Hash (equal budget) & 13.15 & 91.6 & 59.12 {\scriptsize$\pm$0.62} & 49.47 {\scriptsize$\pm$0.48} & 20.05 {\scriptsize$\pm$0.08} & \textbf{51.72} {\scriptsize$\pm$1.28} & \underline{27.06} {\scriptsize$\pm$0.21} & \underline{41.48} {\scriptsize$\pm$0.27} & 1.293 & 4.105 \\
\bottomrule
\end{tabular}
}
\caption{Controlled 2D encoding comparison. All methods use identical bilinear interpolation, feature dimensionality, decoder, training samples, and optimization settings. Mean $\pm$ standard deviation over 5 seeds. Collision rate is measured over all virtual grid vertices. Best and second-best values are shown in \textbf{bold} and \underline{underline}, respectively.}
\label{tab:encoding_2d}
\end{table}

\paragraph{Baselines.}
Our LUT uses collision-free grids of resolutions $[16,25,40,64]$, containing 13,154 trainable scalars. We compare it with hash grids using the same virtual resolutions and per-level capacities $T\in\{2048,1024,512\}$, resulting in 9,058, 5,858, and 3,584 parameters, respectively. Dense levels are addressed directly, whereas levels exceeding $T$ use Instant-NGP-style hashing. We additionally construct an equal-budget hash grid with resolutions $[16,40,101,256]$ and $T=2360$. It contains 13,152 parameters, approximately matching the LUT, but exchanges collision-free storage for higher virtual resolution.

\paragraph{Results.}
LUT-Encoding achieves the highest average PSNR (42.93\,dB) and the lowest forward time (1.123\,ms). For the fixed-resolution hash grids, increasing collision rates from 31.1\% to 74.3\% reduces average PSNR from 41.06 to 33.98\,dB, while hashing adds 7.5--22.8\% forward overhead. The equal-budget hash grid reaches 41.48\,dB, remaining 1.45\,dB below LUT on average. LUT performs best on smooth, multiscale, chirp, and discontinuous signals because their domains are densely sampled: one-to-one addressing preserves independent vertex features, whereas collisions couple unrelated locations and their gradients. The only exception is localized detail, where the equal-budget hash grid obtains 51.72\,dB compared with 43.05\,dB for LUT. This target contains a localized 34-cycle component, exceeding the approximately 31.5-cycle Nyquist limit of the LUT's $64$-vertex level; the hash grid's virtual resolution of 256 resolves this sparse detail, while the collision cost remains spatially limited.

\begin{figure}[H]
    \centering
    \includegraphics[width=\textwidth]{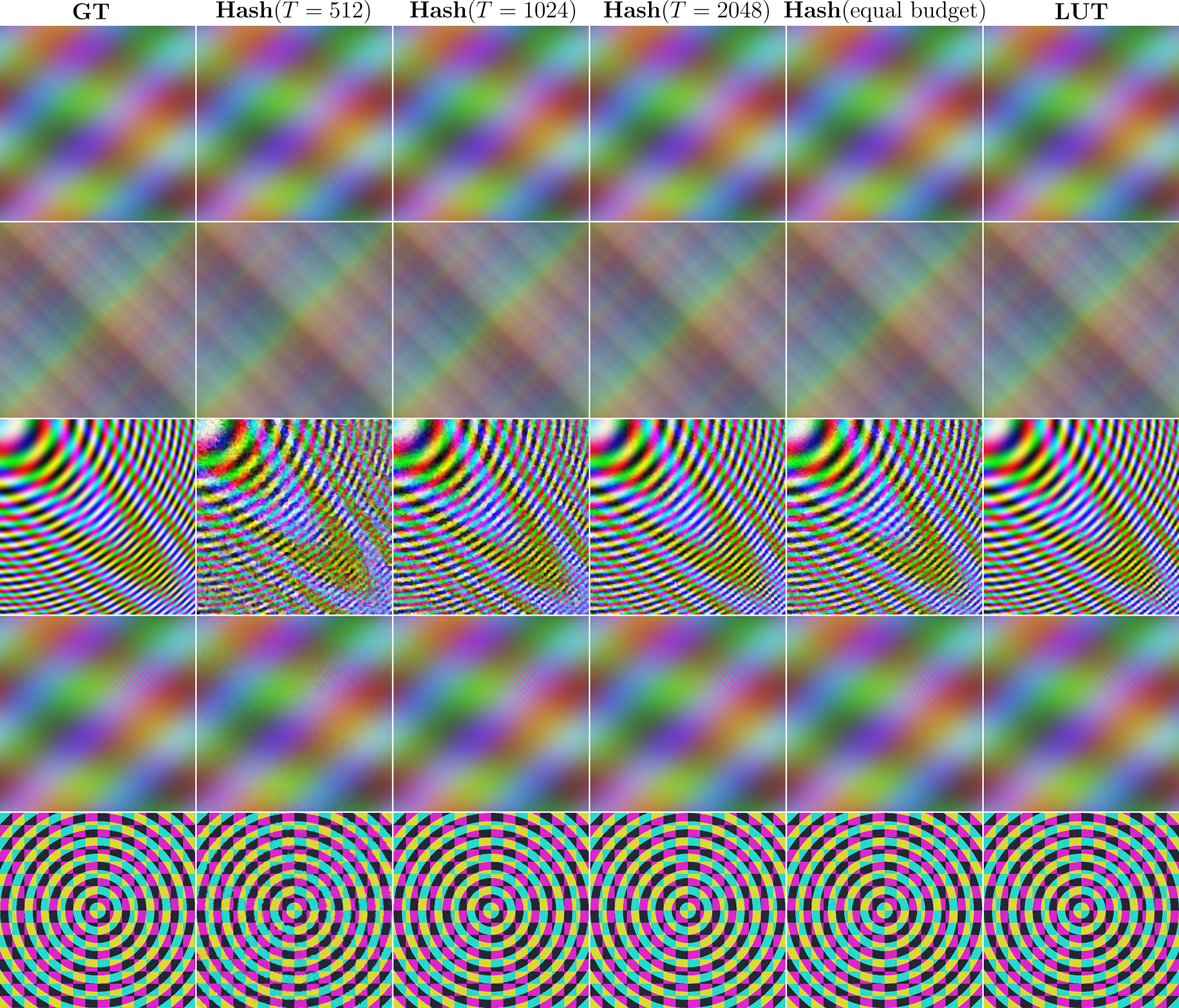}
    \caption{Qualitative comparison on the controlled 2D approximation tasks. Columns show the ground truth followed by hash-grid variants of increasing capacity, the equal-budget hash grid, and our collision-free LUT-Encoding. Rows correspond to the smooth, multiscale, chirp, localized-detail, and checkerboard signals.}
    \label{fig:lut_ablation}
\end{figure}

\section{Reproducibility}

\paragraph{Datasets.}
We evaluate on Mip-NeRF\,360~\cite{barron2022mipnerf360}, Tanks and Temples, Deep Blending, and NeRF Synthetic~\cite{mildenhall2020nerf}. These datasets are industry-standard benchmarks for novel-view synthesis and enable direct comparison with prior radiance-field and Gaussian-based methods under commonly used train/test splits.

\paragraph{Metrics.}
We report PSNR, SSIM, and LPIPS, the standard metrics for this task. They quantify pixel fidelity, structural similarity, and perceptual quality, respectively, and are the common reporting suite in the literature we compare against. Hyperparameter selection during tuning is guided by validation/test PSNR.

\paragraph{Data preprocessing.}
No dataset-specific image preprocessing is required (e.g., no custom rescaling or color normalization beyond the standard loaders used for these benchmarks). Posed RGB images and camera parameters are consumed in their usual released form. Operations such as nearest-neighbor distance estimation for initializing primitive scales are part of model initialization, not an external data-preprocessing stage.

\paragraph{Randomness.}
Training uses fixed configuration defaults. A subset of primitive parameters (e.g., latent descriptors and network weights) is randomly initialized at the start of optimization; afterward, densification/pruning and optimization are deterministic given that initialization and the data order. At the beginning of every training run seeds are set. Residual nondeterminism may arise from GPU/CUDA kernels and hardware-accelerated ray tracing.

\paragraph{Hyperparameter search.}
We tune hyperparameters by coordinate descent: for a selected macro-parameter we evaluate a small set of candidates in the range $[\mathrm{selected}/100,\,100\cdot\mathrm{selected}]$, where $\mathrm{selected}$ is the current best value (or a functionally analogous value from a previous model), keep the candidate with the best PSNR, and move to the next parameter. Final reported numbers use the configuration available in the code supplement.

\paragraph{Compute setup.}
All main experiments were run on a single NVIDIA RTX\,3090 GPU with CUDA/OptiX hardware-accelerated ray--triangle intersection. 

\section{Extended Description of Floating Radiance Networks}
\begin{figure}[H]
    \centering
    \includegraphics[width=\textwidth]{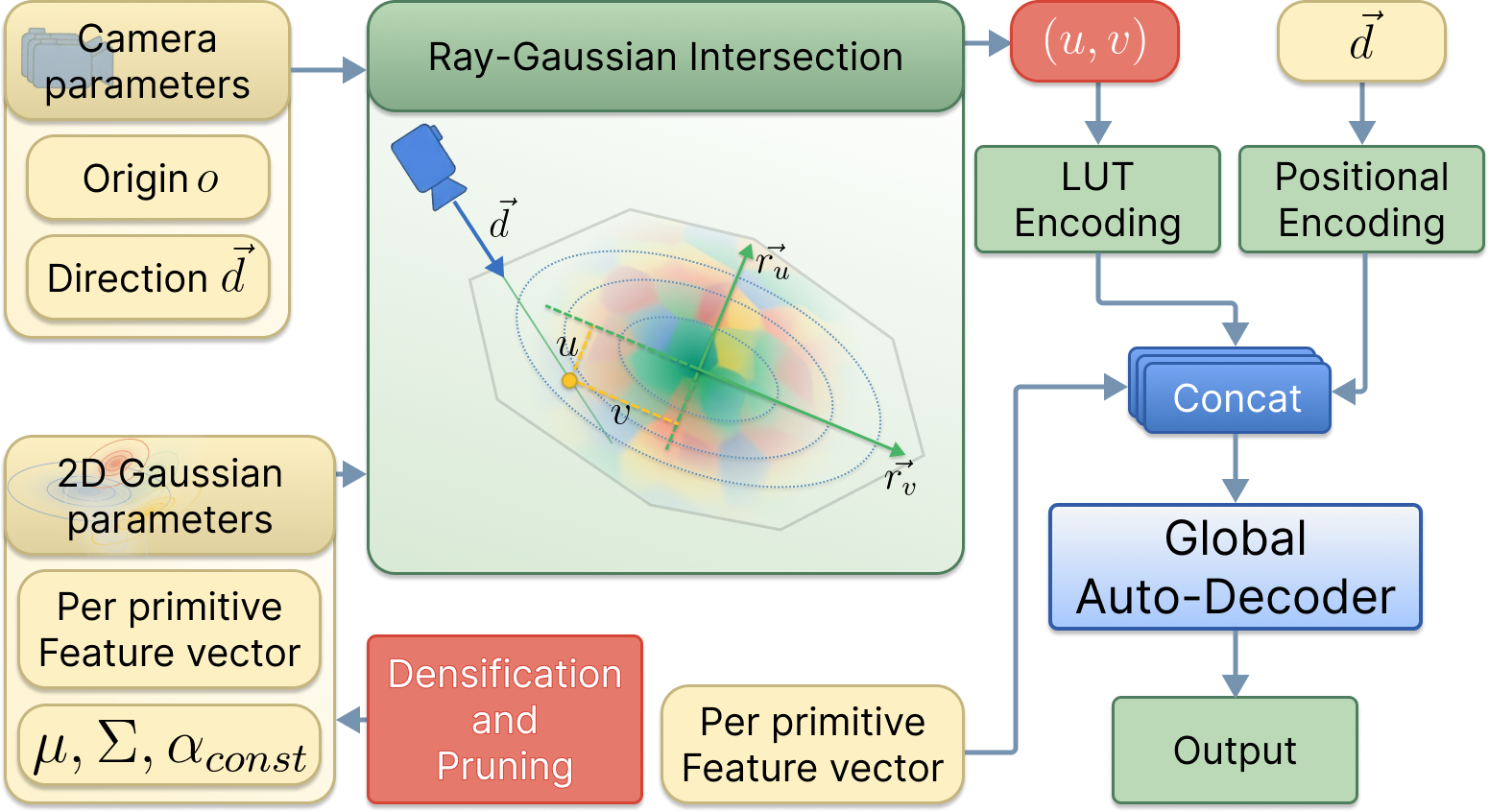}
    \caption{The Floating Radiance Networks (FlaRe) pipeline}
    \label{fig:architecture}
\end{figure}

\subsection{Gaussian Primitive Scene Representation}
FlaRe represents the scene as a collection of anisotropic planar generalized Gaussian primitives with an isotropic kernel defined in normalized local coordinates.:
$$
\mathcal{G}
=
\left\{
\left(
\mathcal{N}(\upmu_i,\Sigma_i,\kappa_i), \mathrm{c}_{\text{const},i}, \alpha_{\mathrm{const},i},
\mathrm{z}_i
\right)
\right\}_{i=1}^{N}.
$$
where:
\begin{itemize}
\item $\mathcal{N} \left( \upmu_i, \Sigma_i, \kappa_i \right)$ denotes the Isotropic Planar Generalized Gaussian Kernel parametrized by:
\begin{itemize}
\item $\upmu_i \in \mathbb{R}^3$: the trainable mean of the Gaussian;
\item $\Sigma_i \in \mathbb{R}^{3 \times 3}$: the singular covariance matrix defined via the rotation matrix $\mathrm{R}_i$ (derived from the trainable quaternion $\mathrm{q}_i$) and the scaling matrix
$
\mathrm{S}_i =
\operatorname{diag}\left(
\begin{bmatrix}
\mathrm{s}_i, 0
\end{bmatrix}
\right) = 
\operatorname{diag}\left(
\begin{bmatrix}
\exp \left( \tilde{\mathrm{s}}_i \right), 0
\end{bmatrix}
\right)
$
for the trainable parameter $\tilde{\mathrm{s}}_i \in \mathbb{R}^2$ controlling the primitive scale in its local tangent plane, admitting the following factorization:$$\Sigma_i = \mathrm{R}_i \cdot \mathrm{S}_i \cdot \mathrm{S}_i^{\mathrm{T}} \cdot \mathrm{R}_i^{\mathrm{T}}$$
\item $\kappa_i = 1 + \operatorname{softplus} \left( \tilde{\kappa}_i \right)$: the shape parameter of the Gaussian for the trainable parameter $\tilde{\kappa}_i \in \mathbb{R}$;
\end{itemize}
\item $\mathrm{c}_{\text{const},i} \in \mathbb{R}^3$ represents the trainable view-independent constant color parameter of the $i$-th Gaussian in the RGB space;
\item $\alpha_{\text{const},i} = \sigma \left( \tilde{\alpha}_{\text{const},i} \right)$ defines the constant opacity of the $i$-th Gaussian for the trainable parameter $\tilde{\alpha}_{\text{const},i} \in \mathbb{R}$;
\item $\mathrm{z}_i \in \mathbb{R}^{96}$ serves as the trainable per-primitive local 2D neural radiance field descriptor;
\item $N$ is the total number of the primitives;
\end{itemize}

\subsection{Isotropic Planar Generalized Gaussian Kernel}
Following the approach presented in Nexels~\cite{nexels}, FlaRe employs opacity modulation via an unnormalized generalized Gaussian Kernel in the underlying 2D representation primitives. This formulation enables more precise modeling of sharp boundaries and high-curvature surfaces, effectively mitigating approximation errors inherent in classical Gaussian kernels characterized by a fixed quadratic fall-off profile.
\begin{figure}[H]
    \centering
    \begin{subfigure}{0.33\columnwidth}
        \centering
        \includegraphics[width=\linewidth]{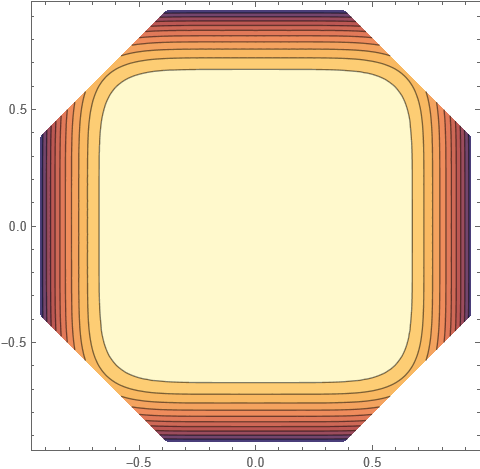}
        \caption{Rong et al. ($\gamma_1=3, \gamma_2=3$)}
        \label{fig:pdf1}
    \end{subfigure}
    \quad 
    \begin{subfigure}{0.33\columnwidth}
        \centering
        \includegraphics[width=\linewidth]{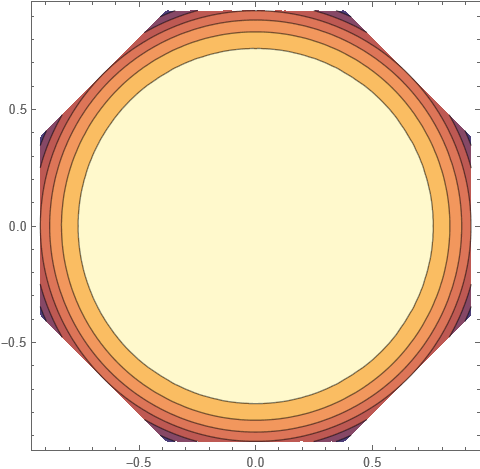}
        \caption{Ours ($\kappa=3$)}
        \label{fig:pdf2}
    \end{subfigure}
    \caption{Comparison of kernel value plots for two distinct Generalized Gaussian Kernel formulations: Rong et al. (left) vs. our Isotropic Planar Generalized Gaussian Kernel (right).}
    \label{fig:distribution-comparison}
\end{figure}
\noindent
Unlike the approach by~\cite{nexels}, where the value of the final opacity $\alpha$ is computed using the following formula:
$$
\alpha = o \exp \left( {-\frac{ { \left \lvert u \right \rvert }^{2{\gamma}_1} }{2}} \right) \exp \left( {-\frac{ { \left \lvert v \right \rvert }^{2{\gamma}_2} }{2}} \right)
$$
we adopt a modified definition of the generalized Gaussian Kernel resulting in our Isotropic Planar Generalized Gaussian Kernel, given by the following formula:
$$
\mathcal{N} \left( \upmu, \Sigma, \kappa \right) \left( \mathrm{x} \right) = \exp \left( -\frac{ { \left \lVert \mathrm{P} \mathrm{S}^{-1} \mathrm{R}^{\mathrm{T}} \left( \mathrm{x} - \upmu \right) \right \rVert }^{2{\kappa}}_2}{2{\kappa}} \right)
$$
where $\mathrm{P}$ is the projection matrix, defined as:
$$
\mathrm{P} = \begin{bmatrix}
1 & 0 & 0 \\
0 & 1 & 0
\end{bmatrix}
$$
Note that for $\left(u, v \right)$ expressed in the Gaussian local coordinate frame, the above-mentioned formula simplifies to:
$$
\mathcal{N} \left( \upmu, \Sigma, \kappa \right) \left( u, v \right) = \exp \left( -\frac{ { \left( u^2 + v^2 \right) }^{\kappa}}{2{\kappa}} \right)
$$
yielding the following value of the final modulated opacity $\alpha$:
$$
\alpha = \alpha_{\text{const}} \cdot \alpha_{\text{MLP}} \cdot \exp \left( -\frac{ { \left( u^2+v^2 \right) }^{\kappa} }{2\kappa} \right)
$$
where $\alpha_{\text{MLP}}$ denotes the opacity predicted by our auto-decoder MLP as detailed in the Subsection \textbf{Global auto-decoder}.

This approach allows us to utilize a single canonical polygon for each Gaussian, regardless of the $\kappa$ parameter, since the density function's isocontours remain isotropic. In contrast, for the density function proposed by~\cite{nexels}, the polygon's geometry does not align with the isocontours, preventing the density values from vanishing at the boundary and necessitating the computation of a unique proxy polygon for every individual value of the shape parameter.

\subsection{Global auto-decoder architecture}
\label{auto_decoder}
In contrast to conventional Gaussian-based representations (e.g., 3DGS~\cite{kerbl20233d} or 3DGRT~\cite{3dgrt}), which rely on static, per-primitive color and opacity attributes — typically optimized as trainable constants modulated by a spatial Gaussian kernel — FlaRe adopts a more flexible approach. Specifically, both the color and opacity of samples within each 2D Gaussian footprint are predicted by a dedicated 2D neural radiance field, uniquely assigned to each individual primitive and operating within its specific local coordinate frame, and encoded individually for each Gaussian primitive as a 96-element FP16 trainable latent descriptor vector. The color and opacity are then decoded using a single global auto-decoder MLP:
$$
\Phi \left( \left( u^{\left( r \right)}_i, v^{\left( r \right)}_i \right), \mathrm{\vec{d}}^{\left( r \right)}, \mathrm{z}^{\left( r \right)}_i \;; \mathcal{G}, \mathcal{F}, \Theta \right) = \left( \mathrm{c}^{\left( r \right)}_{\text{MLP},i}, \alpha^{\left( r \right)}_{\text{MLP},i} \right)
$$
where $\left( \mathrm{c}^{\left( r \right)}_{\text{MLP},i}, \alpha^{\left( r \right)}_{\text{MLP},i} \right) \in \mathbb{R}^3 \times \mathbb{R}$ denotes the color and opacity predicted by the MLP at the intersection point $\mathrm{P}^{\left( r \right)}_i$ of ray $r$ and the $i$-th primitive encountered by the ray, given the direction vector $\mathrm{\vec{d}}^{\left( r \right)}$ and per-primitive local 2D neural radiance field descriptor $\mathrm{z}^{\left( r \right)}_i$ with $\mathrm{P}^{\left( r \right)}_i$ having the coordinates: $\left( u^{\left( r \right)}_i, v^{\left( r \right)}_i \right)$ within the primitive's local reference frame. The network is conditioned on the following entities:
\begin{itemize}
\item $\mathcal{G}$ representing the set of all Gaussian primitives;
\item $\mathcal{F}$ defining the set of the LUT-Table features;
\item $\Theta$ denoting the MLP parameters.
\end{itemize}
Our global auto-decoder MLP adopts a lightweight architecture, as detailed in Figure~\ref{fig:mlp}:
\begin{figure}[H]
    \centering    \includegraphics[width=0.4\columnwidth]{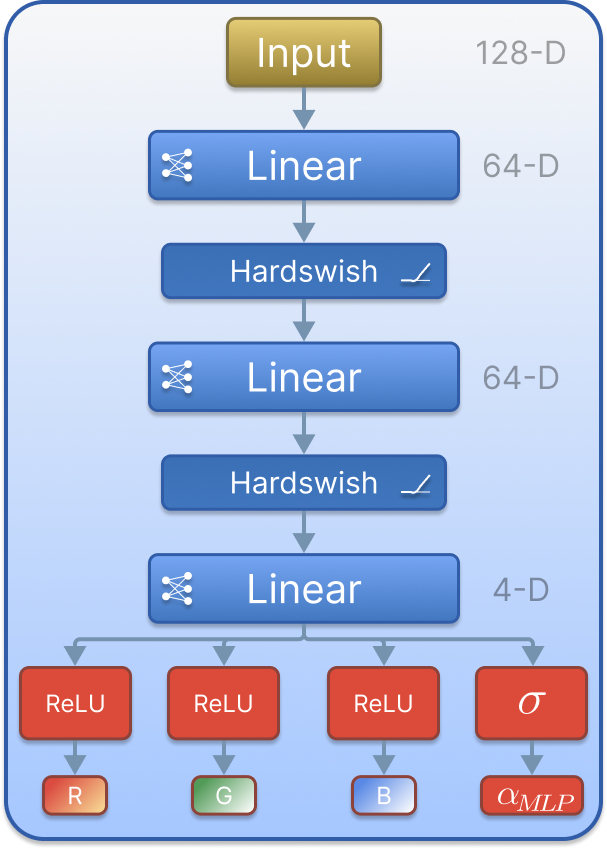}
    \caption{FlaRe MLP}
    \label{fig:mlp}
\end{figure}
\noindent
The architecture comprises a 128-wide input layer, two 64-neuron hidden layers each followed by a Hardswish activation, and a 4-neuron output layer, which branches into four channels processed by three ReLU and one $\sigma$ activation functions, respectively.

Prior to entering the network, input data are processed through specific encoding schemes: the spatial coordinates $\left( u^{\left( r \right)}_i, v^{\left( r \right)}_i \right)$ are mapped using LUT-Encoding to produce an $8$-dimensional vector ($4$ levels $\times$ $2$ feature dimensions), while the view direction vector $\mathrm{\vec{d}}^{\left( r \right)}$ is encoded via positional encoding across 4 frequency bands yielding a 24-dimensional vector ($3$ coordinates $\times$ $4$ bands $\times$ $2$ trigonometric functions). These encoded vectors are then concatenated with the $96$-dimensional individual per-primitive neural radiance field descriptor to form a 128-dimensional latent vector, which serves as the input to our MLP (see Figure~\ref{fig:architecture}). The forward pass of the neural network $\Phi$ can be expressed by the following formula:
$$
\Phi \left( \left( u^{\left( r \right)}_i, v^{\left( r \right)}_i \right), \mathrm{\vec{d}}^{\left( r \right)}, \mathrm{z}^{\left( r \right)}_i \;; \mathcal{G}, \mathcal{F}, \Theta \right) =
\mathrm{W}_3 \varphi_3 \left( \mathrm{W}_2 \varphi_2 \left( \mathrm{W}_1 \varphi_1 \left(
\begin{bmatrix}
E_{\text{LUT}} \left( u^{\left( r \right)}_i, v^{\left( r \right)}_i \right) \\
\gamma \left( \mathrm{\vec{d}}^{\left( r \right)} \right) \\
\mathrm{z}^{\left( r \right)}_i
\end{bmatrix}
\right) + \mathrm{b}_1 \right) + \mathrm{b}_2 \right) + \mathrm{b}_3
$$
where:
$$
\varphi_1 \left( \mathrm{x} \right) = \begin{bmatrix}
\operatorname{hardswish} \left( x_1 \right) \\
\vdots \\
\operatorname{hardswish} \left( x_{128} \right)
\end{bmatrix}
$$
$$
\varphi_2 \left( \mathrm{x} \right) = \begin{bmatrix}
\operatorname{hardswish} \left( x_1 \right) \\
\vdots \\
\operatorname{hardswish} \left( x_{64} \right)
\end{bmatrix}
$$
$$
\varphi_3 \left( \mathrm{x} \right) = \begin{bmatrix}
\operatorname{ReLU} \left( x_1 \right) \\
\operatorname{ReLU} \left( x_2 \right) \\
\operatorname{ReLU} \left( x_3 \right) \\
\sigma \left( x_4 \right) \\
\end{bmatrix}
$$
Our LUT-Encoding mechanism generates latent vectors functionally equivalent to the hash-based encodings proposed in Instant-NGP~\cite{instantngp}. However, unlike their approach, which relies on high-dimensional hash-grid lookups, we leverage the fact that our floating primitives already carry a significant portion of the geometric representation. Consequently, we can reduce the encoding overhead by employing a 4-level multiresolution structure with grid resolutions of: $16 \times 16$, $25 \times 25$, $40 \times 40$, and $64 \times 64$. The compact size of these 2D grids enables a direct, collision-free one-to-one mapping between grid vertices and feature vectors, effectively replacing the hashing mechanism with a straightforward lookup table (LUT) (see Figure~\ref{fig:lut}). The two-dimensional feature vectors are serialized in row-major order within fast on-chip memory across the four consecutive levels.
\begin{figure}[H]
    \centering
    \includegraphics[width=0.5\columnwidth]{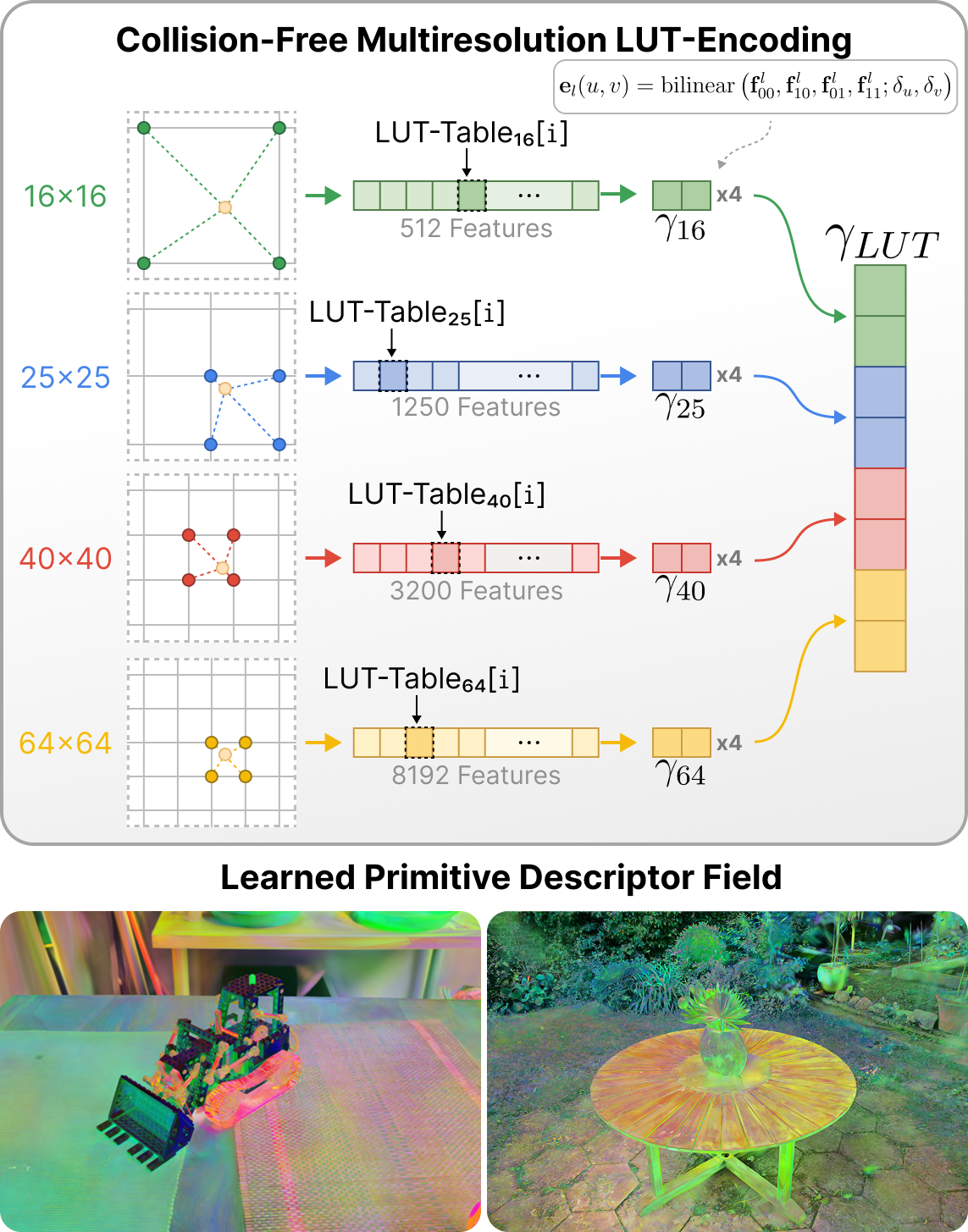}
    \caption{LUT Encoding}
    \label{fig:lut}
\end{figure}
\noindent
The calculations below illustrate how the indices of the four adjacent 2D LUT-grid entries are determined for a given level $i$. For brevity, we have omitted both the superscript $(r)$ (where $r \in \mathcal{B}$ specifies the particular ray from the batch $\mathcal{B}$) and the subscript $i$ (denoting, before notation changes, the index of the Gaussian primitive hit by ray $r$), for which the $(u,v)$ coordinates are computed. First, let us compute:
$$
\left( u^{\left( i \right)}, v^{\left( i \right)} \right) = \begin{pmatrix}
\frac{\frac{u}{R_{\ast}} + 1}{2} \cdot \left( N^{\left( i \right)} - 1 \right) \\
\\
\frac{\frac{v}{R_{\ast}} + 1}{2} \cdot \left( N^{\left( i \right)} - 1 \right)
\end{pmatrix}^{\mathrm{T}}
$$
Then, we can determine the coordinates of the top-left corner LUT feature map entry: $\left( u_{\text{floor}}^{\left( i \right)}, v_{\text{floor}}^{\left( i \right)} \right) \in \left( \mathbb{N} \cup \left \lbrace 0 \right \rbrace \right) \times \left( \mathbb{N} \cup \left \lbrace 0 \right \rbrace \right)$ used in the bilinear interpolation, as well as the interpolation coefficients: $\left( u_{\text{frac}}^{\left( i \right)}, v_{\text{frac}}^{\left( i \right)} \right) \in \mathbb{R}^2$, using the following formula:
$$
\left( u_{\text{floor}}^{\left( i \right)}, v_{\text{floor}}^{\left( i \right)} \right) = \begin{pmatrix}
\min \left \lbrace \left \lfloor \frac{\frac{u}{R_{\ast}} + 1}{2} \cdot \left( N^{\left( i \right)} - 1 \right) \right \rfloor, N^{\left( i \right)} - 2 \right \rbrace \\
\\
\min \left \lbrace \left \lfloor \frac{\frac{v}{R_{\ast}} + 1}{2} \cdot \left( N^{\left( i \right)} - 1 \right) \right \rfloor, N^{\left( i \right)} - 2 \right \rbrace
\end{pmatrix}^{\mathrm{T}}
$$
$$
\left( u_{\text{frac}}^{\left( i \right)}, v_{\text{frac}}^{\left( i \right)} \right) = \begin{pmatrix}
u^{\left( i \right)} - u_{\text{floor}}^{\left( i \right)} \\
\\
v^{\left( i \right)} - v_{\text{floor}}^{\left( i \right)}
\end{pmatrix}^{\mathrm{T}}
$$
where $i \in \left \lbrace 1, \ldots, 4 \right \rbrace$ denotes the LUT resolution level, $N^{\left( i \right)}$ defines the LUT feature map resolution at the $i$-th level and $R_{\ast}$ is the maximum radius (over all $\kappa$ values) of the Gaussian footprint in the reference frame with the identity Gaussian's covariance matrix acting as a normalization constant that ensures the coordinates of each of the four adjacent 2D LUT-grid entries remain within their bounds.

Note that although using a single normalization constant $R_{\ast}$ for all possible values of the parameter $\kappa$ may leave a significant number of LUT-grid entries unutilized, introducing individual normalization for each $\kappa$ value could lead to training instability (as varying $\kappa$ values would cause the same coordinates in the local primitives' reference frame to be mapped to different LUT entries across training iterations) and would significantly complicate the gradient formulation.

In order to derive the formula for the normalization constant $R_{\ast}$, let us take $\kappa \in \left(1, \infty \right)$. Then the Gaussian footprint radius $R\left( \kappa \right)$ for the given $\kappa$ and the fixed configurable $\alpha_{\text{min}}$ parameter is given by the following expression:
$$
R \left( \kappa \right) = \left( -2\kappa \cdot \ln \left( \alpha_{\text{min}} \right) \right)^{\frac{1}{2\kappa}}
$$
Now it can be shown using basic calculus that:
$$
R_{\ast} = \sup_{\kappa \in \left(1, \infty \right)} R\left( \kappa \right) = \begin{cases}
\sqrt{-2 \ln \left( \alpha_{\text{min}} \right)} & \; , \alpha_{\text{min}} < e^{-\frac{e}{2}} \\
\\
\left( \alpha_{\text{min}} \right)^{-\frac{1}{e}} & \; , \text{otherwise} \\
\end{cases}
$$
Let us define:
$$
\text{offsets} = \begin{bmatrix}
0 \\
0 + \left( 16 \times 16 \times 2 \right) \\
0 + \left( 16 \times 16 \times 2 \right) + \left( 25 \times 25 \times 2 \right) \\
0 + \left( 16 \times 16 \times 2 \right) + \left( 25 \times 25 \times 2 \right) + \left( 40 \times 40 \times 2 \right)
\end{bmatrix}^{\mathrm{T}} = \begin{bmatrix}
0, & 512, & 1762, & 4962
\end{bmatrix}
$$
Then, for a given LUT resolution level $i \in \left \lbrace 1, \ldots, 4 \right \rbrace$, one can compute the base memory addresses: $\text{ind}_{00}^{\left( i \right)}$, $\text{ind}_{01}^{\left( i \right)}$, $\text{ind}_{10}^{\left( i \right)}$ and $\text{ind}_{11}^{\left( i \right)}$ for the feature vectors: $\mathrm{f}_{00}^{\left( i \right)}, \mathrm{f}_{01}^{\left( i \right)}, \mathrm{f}_{10}^{\left( i \right)}, \mathrm{f}_{11}^{\left( i \right)} \in \mathbb{R}^2$ using the following formula:
$$
\left(
\text{ind}_{00}^{\left( i \right)}, \text{ind}_{01}^{\left( i \right)}, \text{ind}_{10}^{\left( i \right)}, \text{ind}_{11}^{\left( i \right)} \right) = \begin{pmatrix}
\begin{array}{l}
\text{offsets}_i + \left( \left( \left( v_{\text{floor}}^{\left( i \right)} \cdot N^{\left( i \right)} \right) + u_{\text{floor}}^{\left( i \right)} \right) \cdot 2 \right) \\
\text{offsets}_i + \left( \left( \left( v_{\text{floor}}^{\left( i \right)} \cdot N^{\left( i \right)} \right) + \left( u_{\text{floor}}^{\left( i \right)} + 1 \right) \right) \cdot 2 \right) \\
\text{offsets}_i + \left( \left( \left( \left( v_{\text{floor}}^{\left( i \right)} + 1 \right) \cdot N^{\left( i \right)} \right) + u_{\text{floor}}^{\left( i \right)} \right) \cdot 2 \right) \\
\text{offsets}_i + \left( \left( \left( \left( v_{\text{floor}}^{\left( i \right)} + 1 \right) \cdot N^{\left( i \right)} \right) + \left( u_{\text{floor}}^{\left( i \right)} + 1 \right) \right) \cdot 2 \right)
\end{array}
\end{pmatrix}^{\mathrm{T}}
$$
Consequently, as $\mathcal{F} = \left \lbrace \mathrm{f}_1, \ldots, \mathrm{f}_{6577} \right \rbrace$ we have:
{ \footnotesize
\begin{flalign*}
\left( \mathrm{f}_{00}^{\left( i \right)}, \mathrm{f}_{01}^{\left( i \right)}, \mathrm{f}_{10}^{\left( i \right)}, \mathrm{f}_{11}^{\left( i \right)} \right) &=
\left( \mathrm{f}_{\frac{\text{ind}_{00}^{\left( i \right)}}{2} + 1}, \mathrm{f}_{\frac{\text{ind}_{01}^{\left( i \right)}}{2} + 1}, \mathrm{f}_{\frac{\text{ind}_{10}^{\left( i \right)}}{2} + 1}, \mathrm{f}_{\frac{\text{ind}_{11}^{\left( i \right)}}{2} + 1} \right) =&\\
&= \left(
\begin{pmatrix}
\text{LUT} \left[ \text{ind}_{00}^{\left( i \right)} \right] \\
\text{LUT} \left[ \text{ind}_{00}^{\left( i \right)} + 1 \right] 
\end{pmatrix}^{\mathrm{T}},
\begin{pmatrix}
\text{LUT} \left[ \text{ind}_{01}^{\left( i \right)} \right] \\
\text{LUT} \left[ \text{ind}_{01}^{\left( i \right)} + 1 \right] 
\end{pmatrix}^{\mathrm{T}},
\begin{pmatrix}
\text{LUT} \left[ \text{ind}_{10}^{\left( i \right)} \right] \\
\text{LUT} \left[ \text{ind}_{10}^{\left( i \right)} + 1 \right] 
\end{pmatrix}^{\mathrm{T}},
\begin{pmatrix}
\text{LUT} \left[ \text{ind}_{11}^{\left( i \right)} \right] \\
\text{LUT} \left[ \text{ind}_{11}^{\left( i \right)} + 1 \right] 
\end{pmatrix}^{\mathrm{T}}
\right)
\end{flalign*}
}
The feature vectors $\mathrm{f}_{00}^{\left( i \right)}$, $\mathrm{f}_{01}^{\left( i \right)}$, $\mathrm{f}_{10}^{\left( i \right)}$, and $\mathrm{f}_{11}^{\left( i \right)}$ are then bilinearly interpolated analogously to the approach used in Instant-NGP:
$$
\hat{\mathrm{f}}^{\left( i \right)} =
\begin{array}{l}
\left( 1 - u_{\text{frac}}^{\left( i \right)} \right) \cdot \left( 1 - v_{\text{frac}}^{\left( i \right)} \right) \cdot \mathrm{f}_{00}^{\left( i \right)} + \\
+ \; u_{\text{frac}}^{\left( i \right)} \cdot \left( 1 - v_{\text{frac}}^{\left( i \right)} \right) \cdot \mathrm{f}_{01}^{\left( i \right)} + \\
+ \; \left( 1 - u_{\text{frac}}^{\left( i \right)} \right) \cdot v_{\text{frac}}^{\left( i \right)} \cdot \mathrm{f}_{10}^{\left( i \right)} + \\
+ \; u_{\text{frac}}^{\left( i \right)} \cdot v_{\text{frac}}^{\left( i \right)} \cdot \mathrm{f}_{11}^{\left( i \right)}
\end{array}
$$
Finally, we obtain:
$$
E_{\text{LUT}} \left( u, v \right) = {
\setlength{\arraycolsep}{2.5pt}
\begin{bmatrix}
\hat{\mathrm{f}}^{\left( 0 \right)}, & \hat{\mathrm{f}}^{\left( 1 \right)}, & \hat{\mathrm{f}}^{\left( 2 \right)}, & \hat{\mathrm{f}}^{\left( 3 \right)}
\end{bmatrix}
}
$$

\subsection{Mesh representation of 2D Gaussian primitive}
\label{mesh_representation}
In the spirit of the technique proposed in~\cite{3dgrt}, we use mesh proxies, defined as triangulated stretched regular polygons, as spatial containers to encapsulate our 2D Gaussian primitives. This allows us to leverage efficient built-in hardware-accelerated ray-triangle intersection tests, which, unlike custom \textsc{intersection} shaders, are computed on dedicated RTX cores.
\begin{figure}[H]
\centering
\includegraphics[width=0.5\textwidth]{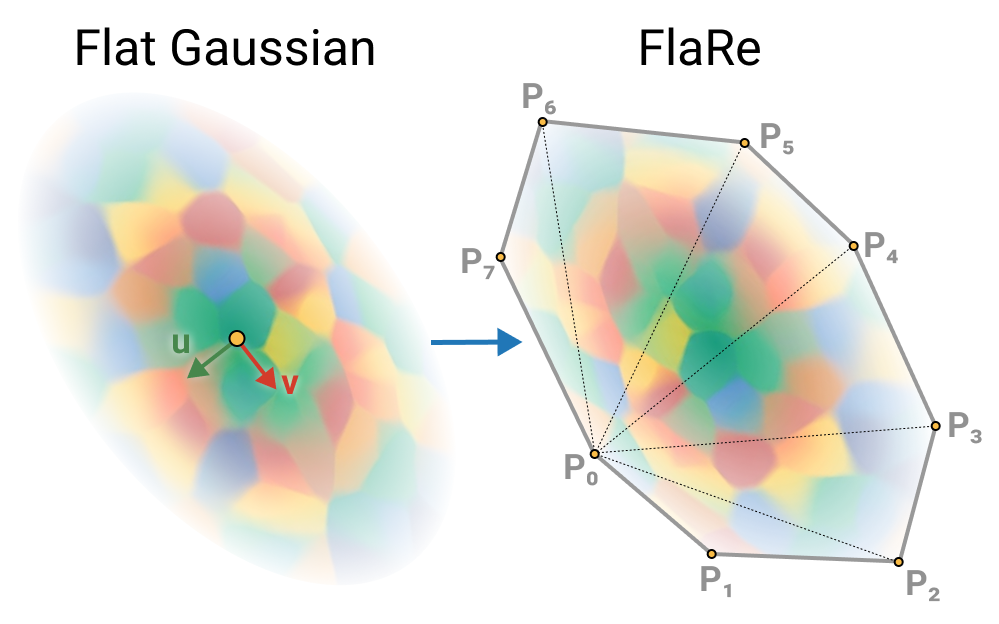}
\caption{Computing the mesh representation of the Gaussian primitive}
\end{figure}
\noindent
To minimize memory overhead, we utilize geometry instancing, where each proxy is represented as an affine transformation of a canonical mesh whose vertices are defined by the following formula:
$$
\mathrm{P}_i = {
\begin{bmatrix}
\cos \left( \frac{2\pi}{k} \cdot i \right) \\
\sin \left( \frac{2\pi}{k} \cdot i \right) \\
0
\end{bmatrix}
}
$$
where $k$ denotes the number of sides of the regular polygon. The per-Gaussian transformation matrix is given by:
$$
\mathrm{M} = \begin{bmatrix}
\mathrm{R} \cdot \mathrm{S} & \upmu^{\mathrm{T}} \\
\mathrm{0} & 1
\end{bmatrix}
$$
where:
$$
\mathrm{R} = {\begin{bmatrix}
1 - cc - dd & bc - ad & bd + ac \\
bc + ad & 1 - bb - dd & cd - ab \\
bd - ac & cd + ab & 1 - bb - cc
\end{bmatrix}
}
$$
for:
$$
s = \frac{2}{\mathrm{q}_1^2 + \mathrm{q}_2^2 + \mathrm{q}_3^2 + \mathrm{q}_4^2}
$$
$$
\begin{matrix}
bs = \mathrm{q}_2 \cdot s, & cs = \mathrm{q}_3 \cdot s, & ds = \mathrm{q}_4 \cdot s \\
ab = \mathrm{q}_1 \cdot bs, & ac = \mathrm{q}_1 \cdot cs, & ad = \mathrm{q}_1 \cdot ds \\
bb = \mathrm{q}_2 \cdot bs, & bc = \mathrm{q}_2 \cdot cs, & bd = \mathrm{q}_2 \cdot ds \\
cc = \mathrm{q}_3 \cdot cs, & cd = \mathrm{q}_3 \cdot ds, & dd = \mathrm{q}_4 \cdot ds
\end{matrix}
$$
and:
$$
\mathrm{S} = \operatorname{diag} \left( \mathrm{s}_1 \cdot {\Delta s}, \mathrm{s}_2 \cdot {\Delta s} \right)
$$
with the factor $\Delta s$ defined as follows, accounting for adaptive clamping to the target opacity $\alpha_{\text{min}}$ at the border of the Gaussian primitive:
$$
\Delta s = \begin{cases}
{ \left( -2{\kappa} \cdot \ln \left( \frac{ \alpha_{\text{min}} }{ \alpha_{\text{const}} } \right) \right) }^{\frac{1}{2{\kappa}}} & , \; \alpha_{\text{min}} \le \alpha_{\text{const}} \\
0 & , \; \text{otherwise}
\end{cases}
$$

\subsection{Mesh modification}
Given points $\mathrm{Q}_i = \mathrm{M} \cdot \mathrm{P}_i$ derived from an arbitrary transformation of the original Gaussian proxy polygon vertices $\mathrm{P}'_i$ in the world space, we will show how to adapt the 2D Gaussian parameters to the transformed scene without retraining, by directly computing the updated parameters from the base model's state and the deformed geometry $\mathrm{Q}_i$. We present a robust strategy that accounts for cases where the intrinsic structure of the proxy polygon is lost — for instance, due to non-rigid deformations that violate local coplanarity or induce orientation shifts. After computing the local centroid:
$$
\mathrm{O}' = \frac{1}{k} \sum\limits_{i=1}^k \mathrm{Q}_i
$$
of the $k$ vertices defining the transformed proxy polygon, we center the point set by translating the centroid to the origin, yielding normalized points:
$$
\mathrm{Q}'_i = \mathrm{Q}_i - \mathrm{O}'
$$
We then estimate the covariance matrix:
$$
\Sigma = \frac{1}{k} \sum\limits_{i=1}^k \mathrm{Q}'_i {\mathrm{Q}'_i}^{\mathrm{T}}
$$
of these centered points and perform SVD to extract the principal axes. This procedure identifies the best-fitting plane and the directions of maximum variance, which define the local coordinate frame of the transformed Gaussian. Since the vertices of the canonical proxy polygon are coplanar, the covariance matrix $\Sigma$ is singular or ill-conditioned. Consequently, the eigenvector corresponding to the smallest eigenvalue—nominally the surface normal—is highly sensitive to numerical instability. To ensure robustness, we compute the surface normal:
$$
\mathrm{\vec{N}} = \mathrm{\vec{U}} \times \mathrm{\vec{V}}
$$
via the cross product of the two principal eigenvectors $\mathrm{\vec{U}}$ and $\mathrm{\vec{V}}$, which span the tangent plane. This approach provides a numerically stable normal estimate, effectively mitigating the noise associated with the smallest singular vector in degenerate cases. With the basis vectors: $\mathrm{\vec{U}}, \mathrm{\vec{V}}$ spanning the tangent plane and the robustly computed surface normal $\mathrm{\vec{N}}$, we construct the rotation matrix:
$$
\mathrm{R} = \left[ \begin{array}{c|c|c} 
\mathrm{\vec{U}} & \mathrm{\vec{V}} & \mathrm{\vec{N}}
\end{array} \right]
$$
To derive the corresponding rotation quaternion $\mathrm{q}_i$, we employ the algorithm proposed by Bar-Itzhack et al., which ensures numerical stability even for near-singular configurations. To determine the Gaussian scale parameters, we project the normalized vertices $\mathrm{Q}'_i$ onto the local tangent plane defined by the basis vectors $\mathrm{\vec{U}}$ and $\mathrm{\vec{V}}$ obtaining:
$$
\hat{\mathrm{Q}'_i} = \operatorname{proj} \left( \mathrm{Q}'_i, \mathrm{\vec{N}} \right) = \mathrm{Q}'_i - \mathrm{\vec{N}}{ \left \langle \mathrm{\vec{N}}, \mathrm{Q}'_i \right \rangle }
$$
and compute the axis-aligned bounding box (AABB) of these projections in the local frame. The scale parameters along each axis are set proportional to the respective AABB dimensions, adjusted by a factor $\Delta s$ that incorporates adaptive clamping as described in the Subsection \textbf{Mesh representation of 2D Gaussian primitive}.
$$
s_1 = \log \left( \frac{1}{\Delta s} \cdot \max\limits_{i=1}^k \left| \left \langle \mathrm{\vec{U}}, \hat{\mathrm{Q}'_i} \right \rangle \right| \right)
$$
$$
s_2 = \log \left( \frac{1}{\Delta s} \cdot \max\limits_{i=1}^k \left| \left \langle \mathrm{\vec{V}}, \hat{\mathrm{Q}'_i} \right \rangle \right| \right)
$$
Finally, we obtain the Gaussian means $\upmu$ using the formula:
$$
\upmu = \mathrm{O}'
$$
While the aforementioned approach suffices for radially symmetric primitives, it becomes suboptimal for our \our{} model, where color and opacity are functions of $(u,v)$ coordinates. Specifically, when the local transformation involves shearing, the original orthogonal texture axes are mapped to non-orthogonal vectors. Consequently, the vectors $\mathrm{\vec{U}}$ and $\mathrm{\vec{V}}$ obtained via SVD, which serve as the axes of the primitive's local reference frame and define the transformed neural texture coordinates $(u,v)$, fail to coincide with the originally transformed and subsequently projected vectors $\mathrm{\vec{U}}_{\text{tex}}$ and $\mathrm{\vec{V}}_{\text{tex}}$. This misalignment leads to artifacts in the fine details modeled by our per-primitive 2D neural radiance field, despite the perfect geometric alignment of the resulting Gaussian parameters themselves. To ensure consistency between the color representation and the geometric deformation, we subsequently derive a linear transformation matrix $\mathrm{A}$, which maps the texture coordinates $(u,v)$ into the deformed space before they are fed into our MLP.
\begin{figure}[H]
\centering
\includegraphics[width=0.5\textwidth]{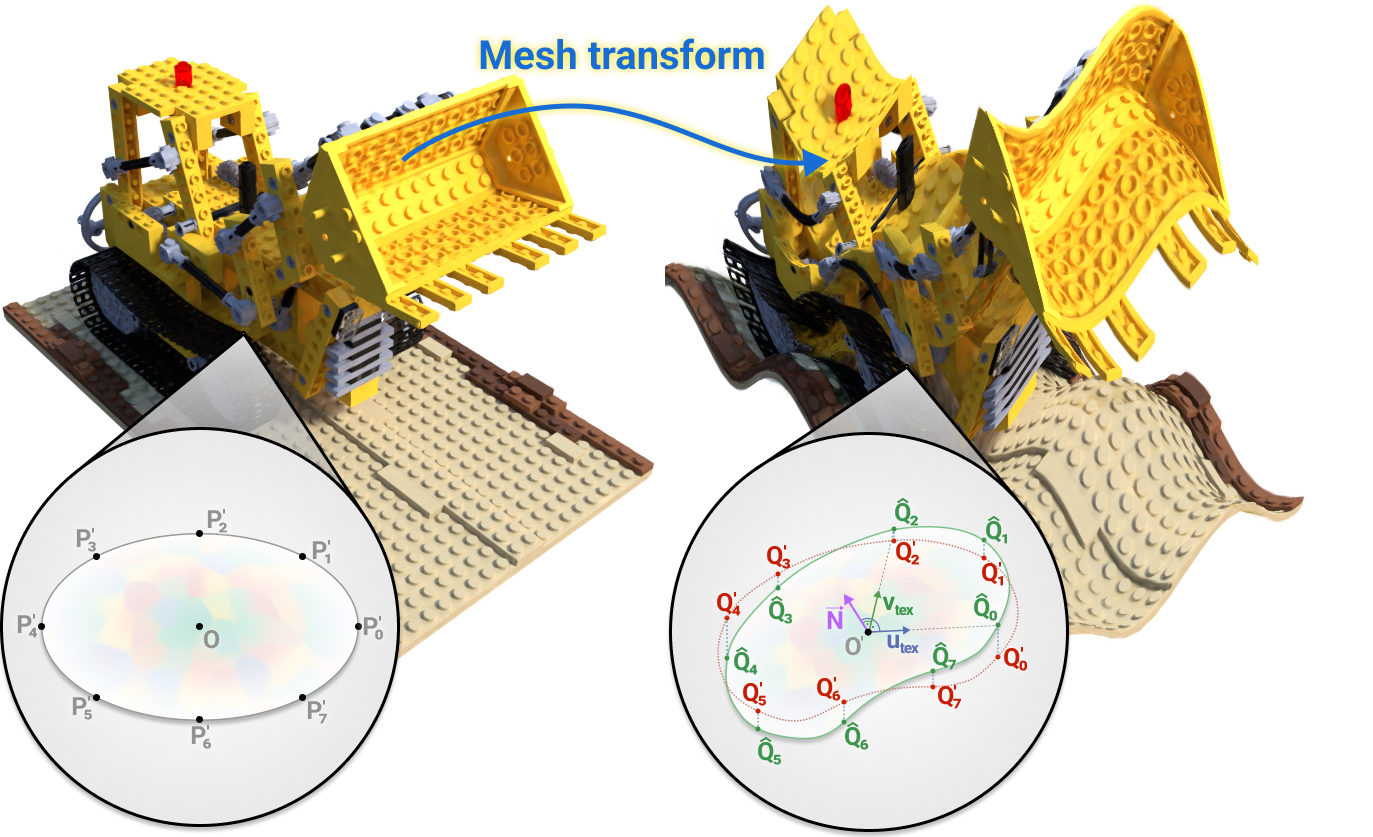}
\caption{Mesh representation transformation and subsequent retrieval of Gaussian parameters from the transformed mesh without model retraining}
\end{figure}
\noindent
If $4 \mid k$ one can simply assume that:
$$
\mathrm{\vec{U}}_{\text{tex}} = \vec{\hat{\mathrm{Q}}'}_0
$$
$$
\mathrm{\vec{V}}_{\text{tex}} = \vec{\hat{\mathrm{Q}}'}_{\frac{k}{4}}
$$
Otherwise, leveraging the fact that the vector $\vec{\mathrm{P}}'_0$ is parallel to the axis $\vec{\upvarepsilon}_1$ of the Gaussian's local reference frame, along with the known formulas (see \textbf{Mesh representation of 2D Gaussian primitive}) for
$\vec{\mathrm{P}}'_i$ in terms of the $(u,v)$ coordinates with respect to the $\vec{\upvarepsilon}_1$ and $\vec{\upvarepsilon}_2$ axes, one can express $\vec{\upvarepsilon}_2$ as a linear combination of $\vec{\upvarepsilon}_1$ and an arbitrary vector $\vec{\mathrm{P}}'_i$ for $i \ne 0$, and then compute the corresponding linear combination of $\vec{\hat{\mathrm{Q}}'}_i$ and $\mathrm{\vec{U}}_{\text{tex}} = \vec{\hat{\mathrm{Q}}'}_0$ using these same coefficients to obtain $\mathrm{\vec{V}}_{\text{tex}}$.
\newline
\newline
Let us assume, therefore, that we have determined the vectors: $\mathrm{\vec{U}}_{\text{tex}}$ and $\mathrm{\vec{V}}_{\text{tex}}$. Then, expressing the above vectors in the reference frame determined by the SVD and normalizing the coordinate frame so that the covariance matrix of the Gaussian is the identity, we obtain:
$$
\mathrm{\vec{U}}'_{\text{tex}} = \begin{bmatrix}
\frac{\left \langle \mathrm{\vec{U}}, \mathrm{\vec{U}}_{\text{tex}} \right \rangle}{s_1}, &
\frac{\left \langle \mathrm{\vec{V}}, \mathrm{\vec{U}}_{\text{tex}} \right \rangle}{s_2}
\end{bmatrix}
$$
$$
\mathrm{\vec{V}}'_{\text{tex}} = \begin{bmatrix}
\frac{\left \langle \mathrm{\vec{U}}, \mathrm{\vec{V}}_{\text{tex}} \right \rangle}{s_1}, &
\frac{\left \langle \mathrm{\vec{V}}, \mathrm{\vec{V}}_{\text{tex}} \right \rangle}{s_2}
\end{bmatrix}
$$
Then, the vectors orthogonal to them are given by:
$$
\mathrm{\vec{U}}'_{\perp,\text{tex}} = \begin{bmatrix}
-\frac{\left \langle \mathrm{\vec{V}}, \mathrm{\vec{U}}_{\text{tex}} \right \rangle}{s_2}, &
\frac{\left \langle \mathrm{\vec{U}}, \mathrm{\vec{U}}_{\text{tex}} \right \rangle}{s_1}
\end{bmatrix}
$$
$$
\mathrm{\vec{V}}'_{\perp,\text{tex}} = \begin{bmatrix}
-\frac{\left \langle \mathrm{\vec{V}}, \mathrm{\vec{V}}_{\text{tex}} \right \rangle}{s_2}, &
\frac{\left \langle \mathrm{\vec{U}}, \mathrm{\vec{V}}_{\text{tex}} \right \rangle}{s_1}
\end{bmatrix}
$$
We want to determine the scaling factors: $k_1$ and $k_2$ that will allow us to compute the target vectors: $\mathrm{\vec{U}}''_{\text{tex}}$ and $\mathrm{\vec{V}}''_{\text{tex}}$ spanning the Gaussian's tight-fitting parallelogram with sides parallel to: $\mathrm{\vec{U}}_{\text{tex}}$ and $\mathrm{\vec{V}}_{\text{tex}}$. Note that these factors satisfy the following system of two independent equations:
$$
\left \lbrace 
\begin{matrix}
\left \langle \frac{\mathrm{\vec{U}}'_{\perp,\text{tex}}}{\left \lVert \mathrm{\vec{U}}'_{\perp,\text{tex}} \right \rVert}, \mathrm{\vec{V}}'_{\text{tex}} k_2 \right \rangle = 1 \\
\left \langle \frac{\mathrm{\vec{V}}'_{\perp,\text{tex}}}{\left \lVert \mathrm{\vec{V}}'_{\perp,\text{tex}} \right \rVert}, \mathrm{\vec{U}}'_{\text{tex}} k_1 \right \rangle = 1
\end{matrix}
\right.
$$
By transforming the above system of equations, we successively obtain:
$$
\left \lbrace 
\begin{matrix}
\frac{k_2}{\left \lVert \mathrm{\vec{U}}'_{\perp,\text{tex}} \right \rVert} \left \langle \mathrm{\vec{U}}'_{\perp,\text{tex}}, \mathrm{\vec{V}}'_{\text{tex}} \right \rangle = 1 \\
\frac{k_1}{\left \lVert \mathrm{\vec{V}}'_{\perp,\text{tex}} \right \rVert} \left \langle \mathrm{\vec{V}}'_{\perp,\text{tex}}, \mathrm{\vec{U}}'_{\text{tex}} \right \rangle = 1
\end{matrix}
\right.
$$
$$
\left \lbrace 
\begin{matrix}
k_2 \frac{\left \langle \mathrm{\vec{U}}'_{\perp,\text{tex}}, \mathrm{\vec{V}}'_{\text{tex}} \right \rangle}{\left \lVert \mathrm{\vec{U}}'_{\perp,\text{tex}} \right \rVert} = 1 \\
k_1 \frac{\left \langle \mathrm{\vec{V}}'_{\perp,\text{tex}}, \mathrm{\vec{U}}'_{\text{tex}} \right \rangle}{\left \lVert \mathrm{\vec{V}}'_{\perp,\text{tex}} \right \rVert} = 1
\end{matrix}
\right.
$$
$$
\left \lbrace 
\begin{matrix}
k_1 = \frac{\left \lVert \mathrm{\vec{V}}'_{\perp,\text{tex}} \right \rVert}{\left \langle \mathrm{\vec{V}}'_{\perp,\text{tex}}, \mathrm{\vec{U}}'_{\text{tex}} \right \rangle} \\
k_2 = \frac{\left \lVert \mathrm{\vec{U}}'_{\perp,\text{tex}} \right \rVert}{\left \langle \mathrm{\vec{U}}'_{\perp,\text{tex}}, \mathrm{\vec{V}}'_{\text{tex}} \right \rangle}
\end{matrix}
\right.
$$
Hence, after transforming back to the original coordinate system, we obtain:
$$
\mathrm{\vec{U}}''_{\text{tex}} = k_1 \cdot \operatorname{diag} \cdot \left( s_1, s_2 \right) \cdot \mathrm{\vec{U}}'_{\text{tex}} = \begin{bmatrix}
k_1 \left \langle \mathrm{\vec{U}}, \mathrm{\vec{U}}_{\text{tex}} \right \rangle, &
k_1 \left \langle \mathrm{\vec{V}}, \mathrm{\vec{U}}_{\text{tex}} \right \rangle
\end{bmatrix}
$$
$$
\mathrm{\vec{V}}''_{\text{tex}} = k_2 \cdot \operatorname{diag} \cdot \left( s_1, s_2 \right) \cdot \mathrm{\vec{V}}'_{\text{tex}} = \begin{bmatrix}
k_2 \left \langle \mathrm{\vec{U}}, \mathrm{\vec{V}}_{\text{tex}} \right \rangle, &
k_2 \left \langle \mathrm{\vec{V}}, \mathrm{\vec{V}}_{\text{tex}} \right \rangle
\end{bmatrix}
$$
Now, since:
$$
(u,v)^{\mathrm{T}} = \left[ \begin{array}{c|c} 
\mathrm{\vec{U}}''_{\text{tex}} & \mathrm{\vec{V}}''_{\text{tex}} 
\end{array} \right] \cdot (u',v')^{\mathrm{T}}
$$
we finally obtain:
$$
(u',v')^{\mathrm{T}} = \mathrm{A} \cdot (u,v)^{\mathrm{T}}
$$
for:
$$
\mathrm{A} = \left[ \begin{array}{c|c} 
\mathrm{\vec{U}}''_{\text{tex}} & \mathrm{\vec{V}}''_{\text{tex}} 
\end{array} \right]^{-1}
$$
Finally, let us note that since the vectors: $\mathrm{\vec{U}}''_{\text{tex}}$ and $\mathrm{\vec{V}}''_{\text{tex}}$ span the Gaussian's tight-fitting parallelogram, and the Gaussian footprint is contained within this parallelogram, the $(u',v')$ coordinates will fall within the valid range for the encoding function implementing the LUT-encoding.

\subsection{Color aggregation along the ray}
Following the approach described in~\cite{kopanas21,kopanas22}, the FlaRe method maps 2D planar Gaussian primitives onto the image plane by blending colors for individual pixels. This is achieved by collecting samples at the intersection points of a ray, traced from the camera through the pixel center, with the Gaussian primitives' footprints, which are defined by the boundary of their proxy meshes. The final unclamped pixel color $\mathrm{\tilde{I}}^{\left( r \right)}$ for the ray $r \in \mathcal{B}$, where by $r \in \mathcal{B}$ we denote the batch of rays, is given by the following formula:
$$
\mathrm{\tilde{I}}^{\left( r \right)} = \sum_{i=1}^{\left \lvert \mathcal{G}^{\left( r \right)} \right \rvert} \mathrm{c}^{\left( r \right)} \alpha^{\left( r \right)}_i T^{\left( r \right)}_i = \sum_{i=1}^{\left \lvert \mathcal{G}^{\left( r \right)} \right \rvert} \mathrm{c}^{\left( r \right)}_{\text{const},i} \mathrm{c}^{\left( r \right)}_{\text{MLP},i} \alpha^{\left( r \right)}_i \prod_{j=1}^{i-1} \left( 1 - \alpha^{\left( r \right)}_j \right)
$$
where:
$$
\alpha^{\left( r \right)}_i = \alpha^{\left( r \right)}_{\text{const},i} \cdot \alpha^{\left( r \right)}_{\text{MLP},i} \cdot \exp \left( -\frac{ { \left( \left( u^{ \left( r \right)}_i \right)^2+\left( v^{\left( r \right)}_i \right)^2 \right) }^{\kappa^{\left( r \right)}_i} }{2\kappa^{\left( r \right)}_i} \right)
$$

\subsection{Loss function}
In our differentiable rendering framework, we define a composite loss function to guide the optimization of the scene representation. The total loss $\mathcal{L}$ is formulated as the sum of a convex combination of $\mathcal{L}_{\text{RGB},\text{FlaRe}}$ and $\mathcal{L}_{\text{RGB},\text{base}}$—representing the $\mathcal{L}_2$ photometric losses in our target FlaRe model and the simplified base model, respectively—and the common $\mathcal{L}_s$ term. The base model relies solely on the view-independent constant color parameters expressed in the RGB space, $\mathrm{c}_{\text{const},i}$, and opacity parameters ${\bar{\alpha}}_{\text{const},i}$. To prevent overfitting - which occurs in certain scenes when the FlaRe model tends to capture fine details via the 2D neural radiance field rather than the underlying geometry — we introduce a warmup phase. During this phase, the loss function smoothly transitions between the base and target FlaRe models. This warmup allows for the preliminary optimization of the geometry before the 2D neural radiance field is optimized in the target model. The loss function $\mathcal{L}$ itself is expressed as:
$$
\mathcal{L} = \lambda_1 \cdot \left( \left( 1 - \lambda \right) \cdot \mathcal{L}_{\text{RGB},\text{base}} + \lambda \cdot \mathcal{L}_{\text{RGB},\text{FlaRe}} \right) + \lambda_2 \cdot {\mathcal{L}_{s}}
$$
where:
$$
\lambda = \operatorname{clamp} \left( \frac{\text{iter}-\text{warmup\_start\_iter}}{{\text{warmup\_end\_iter}-\text{warmup\_start\_iter}}}, 0, 1 \right)
$$
and $\lambda_1, \lambda_2 \ge 0$ are non-negative hyperparameters that balance the contribution of each objective term. Both $\mathcal{L}_{\text{RGB},\text{FlaRe}}$ and $\mathcal{L}_{\text{RGB},\text{base}}$ are given by the same formula:
$$
\mathcal{L}_{\text{RGB},\text{FlaRe}} = \frac{1}{3 \left| \mathcal{B} \right|} \sum\limits_{r \in \mathcal{B}} \sum\limits_{i=1}^3 { \left( \mathrm{I}_i^{\left( r \right)} - \hat{\mathrm{I}}_i^{\left( r \right)} \right) }^2
$$
$$
\mathcal{L}_{\text{RGB},\text{base}} = \frac{1}{3 \left| \mathcal{B} \right|} \sum\limits_{r \in \mathcal{B}} \sum\limits_{i=1}^3 { \left( \mathrm{I}_i^{\left( r \right)} - \hat{\mathrm{I}}_i^{\left( r \right)} \right) }^2
$$
where $\mathrm{I}_i^{\left( r \right)}$ denotes the $i$-th color channel value of the rendered pixel corresponding to ray $r$ clamped to the interval $\left[ 0, 1 \right]$ (the value before clamping is denoted by $\tilde{\mathrm{I}}_i^{\left( r \right)}$), and $\hat{\mathrm{I}}_i^{ \left( r \right) }$ is the ground-truth value of the corresponding pixel in the reference image. Lastly, the common regularization component $\mathcal{L}_s$ defined as:
$$
\mathcal{L}_s = \frac{1}{N} \sum_{i=1}^N \left \lVert \mathrm{s}_i \right \rVert
$$
serves as a regularization term on the Gaussians' scale parameters $\mathrm{\tilde{s}}_i$, preventing them from growing excessively. This constraint promotes compactness, which effectively improves both training and inference efficiency.

We intentionally omitted the depth distortion and normal consistency regularization component from 2DGS~\cite{2dgs}. In the case of the former, we obtained worse results in terms of PSNR on test poses in the scene reconstruction task, whereas the latter is inherently incompatible with our batch training paradigm. Specifically, batch sampling does not guarantee that the rays corresponding to the direct spatial neighbors of a ray for the pixel $\left( i,j \right)$ in a training image will be present simultaneously, which are required for estimating surface normals.

\subsection{Implementation details}
Since the \textsc{OptixCoopVec} abstraction introduced in OptiX 9.0 is architecture-agnostic and does not allow the developer to manually manipulate the matrix layout within shader code — with the matrix remaining static throughout the entire \textsc{optixLaunch} execution once packed via the host-side \textsc{optixCoopVecMatrixConvert} function — and because matrix multiplication currently supports only half-precision FP16 accumulation, which (due to accumulated numerical errors and resulting rendering artifacts) precludes its application even to small networks with $64$–$128$ neurons and $2$ hidden layers, we adopted a hybrid approach in our method. In an interleaved fashion, we perform \textsc{optixLaunch} calls — used solely to determine the indices of consecutive planar Gaussian primitives hit by the ray within the \textsc{HIT\_BUFFER\_SIZE} limit and their corresponding ray $t$-parameters — along with a custom CUDA kernel designed to prepare the input matrix for our auto-decoder MLP by manually manipulating the data layout in the registers and evaluating it through the network for both the forward and backward passes. Following the 3DGRT~\cite{3dgrt} approach, we set the hit buffer size \textsc{HIT\_BUFFER\_SIZE} to \textsc{16} and execute these two operations iteratively within a loop, followed by stream compaction to remove inactive rays until all rays are fully processed.

\section{Derivation of Gradient Formulas}
\subsection{Computing upstream gradients}
\noindent
\newline
\paragraph{Computing $\dv{}{\mathrm{c}^{\left( r \right)}_{\text{const},i}}
\mathcal{L}_{\text{RGB},\text{FlaRe}}$}
\begin{flalign*}
\dv{}{\mathrm{c}^{\left( r \right)}_{\text{const},i}}
\mathcal{L}_{\text{RGB},\text{FlaRe}} &=
\dv{}{\mathrm{c}^{\left( r \right)}_{\text{const},i}} \left( \frac{1}{3 \left| \mathcal{B} \right|} \sum\limits_{r \in \mathcal{B}} \sum\limits_{j=1}^3 { \left( \mathrm{I}_j^{\left( r \right)} - \hat{\mathrm{I}}_j^{\left( r \right)} \right) }^2 \right) =&\\
&= \frac{1}{3 \left| \mathcal{B} \right|} \cdot \dv{}{\mathrm{c}^{\left( r \right)}_{\text{const},i}} \left( \sum\limits_{r \in \mathcal{B}} \sum\limits_{j=1}^3 { \left( \mathrm{I}_j^{\left( r \right)} - \hat{\mathrm{I}}_j^{\left( r \right)} \right) }^2 \right) =&\\
&= \frac{1}{3 \left| \mathcal{B} \right|} \sum\limits_{r \in \mathcal{B}} \dv{}{\mathrm{c}^{\left( r \right)}_{\text{const},i}} \left( \sum\limits_{j=1}^3 { \left( \mathrm{I}_j^{\left( r \right)} - \hat{\mathrm{I}}_j^{\left( r \right)} \right) }^2 \right) =&\\
&= \frac{1}{3 \left| \mathcal{B} \right|} \sum\limits_{r \in \mathcal{B}} \sum\limits_{j=1}^3 \dv{}{\mathrm{c}^{\left( r \right)}_{\text{const},i}} \left( { \left( \mathrm{I}_j^{\left( r \right)} - \hat{\mathrm{I}}_j^{\left( r \right)} \right) }^2 \right) =&\\
&= \frac{1}{3 \left| \mathcal{B} \right|} \sum\limits_{r \in \mathcal{B}} \sum\limits_{j=1}^3 2 \left( \mathrm{I}_j^{\left( r \right)} - \hat{\mathrm{I}}_j^{\left( r \right)} \right) \dv{}{\mathrm{c}^{\left( r \right)}_{\text{const},i}} \mathrm{I}_j^{\left( r \right)} =&\\
&= \frac{2}{3 \left| \mathcal{B} \right|} \sum\limits_{r \in \mathcal{B}} \sum\limits_{j=1}^3 \left( \mathrm{I}_j^{\left( r \right)} - \hat{\mathrm{I}}_j^{\left( r \right)} \right) \dv{}{\mathrm{c}^{\left( r \right)}_{\text{const},i}} \mathrm{I}_j^{\left( r \right)} =&\\
&= \frac{2}{3 \left| \mathcal{B} \right|} \sum\limits_{r \in \mathcal{B}} \sum\limits_{j=1}^3 \left( \mathrm{I}_j^{\left( r \right)} - \hat{\mathrm{I}}_j^{\left( r \right)} \right) \dv{}{\mathrm{c}^{\left( r \right)}_{\text{const},i}} \operatorname{clamp} \left( \mathrm{\tilde{I}}_j^{\left( r \right)}, 0, 1 \right) =&\\
&= \frac{2}{3 \left| \mathcal{B} \right|} \sum\limits_{r \in \mathcal{B}} \sum\limits_{j=1}^3 \left( \mathrm{I}_j^{\left( r \right)} - \hat{\mathrm{I}}_j^{\left( r \right)} \right) \cdot \mathds{1}_{\mathrm{\tilde{I}}_j^{\left( r \right)} \in \left[ 0, 1 \right]} \cdot \dv{}{\mathrm{c}^{\left( r \right)}_{\text{const},i}} \mathrm{\tilde{I}}_j^{\left( r \right)} =&\\
&= \frac{2}{3 \left| \mathcal{B} \right|} \sum\limits_{r \in \mathcal{B}} \sum\limits_{j=1}^3 \left( \mathrm{I}_j^{\left( r \right)} - \hat{\mathrm{I}}_j^{\left( r \right)} \right) \cdot \mathds{1}_{\mathrm{\tilde{I}}_j^{\left( r \right)} \in \left[ 0, 1 \right]} \cdot \dv{}{\mathrm{c}^{\left( r \right)}_{\text{const},i}} \left( \sum_{k=1}^{\left \lvert \mathcal{G}^{\left( r \right)} \right \rvert} \mathrm{c}^{\left( r \right)}_{\text{const},k,j} \mathrm{c}^{\left( r \right)}_{\text{MLP},k,j} \alpha^{\left( r \right)}_k T^{\left( r \right)}_k \right) =&\\
&= \frac{2}{3 \left| \mathcal{B} \right|} \sum\limits_{r \in \mathcal{B}} \sum\limits_{j=1}^3 \left( \mathrm{I}_j^{\left( r \right)} - \hat{\mathrm{I}}_j^{\left( r \right)} \right) \cdot \mathds{1}_{\mathrm{\tilde{I}}_j^{\left( r \right)} \in \left[ 0, 1 \right]} \cdot \left( \sum_{k=1}^{\left \lvert \mathcal{G}^{\left( r \right)} \right \rvert} \dv{}{\mathrm{c}^{\left( r \right)}_{\text{const},i}} \left( \mathrm{c}^{\left( r \right)}_{\text{const},k,j} \mathrm{c}^{\left( r \right)}_{\text{MLP},k,j} \alpha^{\left( r \right)}_k T^{\left( r \right)}_k \right) \right) =&\\
&= \frac{2}{3 \left| \mathcal{B} \right|} \sum\limits_{r \in \mathcal{B}} \sum\limits_{j=1}^3 \left( \mathrm{I}_j^{\left( r \right)} - \hat{\mathrm{I}}_j^{\left( r \right)} \right) \cdot \mathds{1}_{\mathrm{\tilde{I}}_j^{\left( r \right)} \in \left[ 0, 1 \right]} \cdot \left( \sum_{k=1}^{\left \lvert \mathcal{G}^{\left( r \right)} \right \rvert} \mathrm{c}^{\left( r \right)}_{\text{MLP},k,j} \alpha^{\left( r \right)}_k T^{\left( r \right)}_k \cdot \dv{\mathrm{c}^{\left( r \right)}_{\text{const},k,j}}{\mathrm{c}^{\left( r \right)}_{\text{const},i}} \right) =&\\
&= \frac{2}{3 \left| \mathcal{B} \right|} \sum\limits_{r \in \mathcal{B}} \sum\limits_{j=1}^3 \left( \mathrm{I}_j^{\left( r \right)} - \hat{\mathrm{I}}_j^{\left( r \right)} \right) \cdot \mathds{1}_{\mathrm{\tilde{I}}_j^{\left( r \right)} \in \left[ 0, 1 \right]} \cdot \left( \sum_{k=1}^{\left \lvert \mathcal{G}^{\left( r \right)} \right \rvert} \mathrm{c}^{\left( r \right)}_{\text{MLP},k,j} \alpha^{\left( r \right)}_k T^{\left( r \right)}_k \cdot \delta_{i{k}} \cdot \vec{\upvarepsilon}_j \right) =&\\
&= \frac{2}{3 \left| \mathcal{B} \right|} \sum\limits_{r \in \mathcal{B}} \sum\limits_{j=1}^3 \left( \mathrm{I}_j^{\left( r \right)} - \hat{\mathrm{I}}_j^{\left( r \right)} \right) \cdot \mathds{1}_{\mathrm{\tilde{I}}_j^{\left( r \right)} \in \left[ 0, 1 \right]} \cdot \mathrm{c}^{\left( r \right)}_{\text{MLP},i,j} \alpha^{\left( r \right)}_i T^{\left( r \right)}_i \cdot \vec{\upvarepsilon}_j =&\\
&= \frac{2}{3 \left| \mathcal{B} \right|} \sum\limits_{r \in \mathcal{B}} \left( \mathrm{I}^{\left( r \right)} - \hat{\mathrm{I}}^{\left( r \right)} \right) \cdot \mathds{1}_{\mathrm{\tilde{I}}^{\left( r \right)} \in \left[ 0, 1 \right]} \cdot \mathrm{c}^{\left( r \right)}_{\text{MLP},i} \alpha^{\left( r \right)}_i T^{\left( r \right)}_i
\end{flalign*}

\paragraph{Computing $\dv{}{\tilde{\alpha}^{\left( r \right)}_{\text{const},i}}
\mathcal{L}_{\text{RGB},\text{FlaRe}}$}
\begin{flalign*}
\dv{}{\tilde{\alpha}^{\left( r \right)}_{\text{const},i}}
\mathcal{L}_{\text{RGB},\text{FlaRe}} &=
\dv{}{\tilde{\alpha}^{\left( r \right)}_{\text{const},i}} \left( \frac{1}{3 \left| \mathcal{B} \right|} \sum\limits_{r \in \mathcal{B}} \sum\limits_{j=1}^3 { \left( \mathrm{I}_j^{\left( r \right)} - \hat{\mathrm{I}}_j^{\left( r \right)} \right) }^2 \right) =&\\
&= \frac{1}{3 \left| \mathcal{B} \right|} \cdot \dv{}{\tilde{\alpha}^{\left( r \right)}_{\text{const},i}} \left( \sum\limits_{r \in \mathcal{B}} \sum\limits_{j=1}^3 { \left( \mathrm{I}_j^{\left( r \right)} - \hat{\mathrm{I}}_j^{\left( r \right)} \right) }^2 \right) =&\\
&= \frac{1}{3 \left| \mathcal{B} \right|} \sum\limits_{r \in \mathcal{B}} \dv{}{\tilde{\alpha}^{\left( r \right)}_{\text{const},i}} \left( \sum\limits_{j=1}^3 { \left( \mathrm{I}_j^{\left( r \right)} - \hat{\mathrm{I}}_j^{\left( r \right)} \right) }^2 \right) =&\\
&= \frac{1}{3 \left| \mathcal{B} \right|} \sum\limits_{r \in \mathcal{B}} \sum\limits_{j=1}^3 \dv{}{\tilde{\alpha}^{\left( r \right)}_{\text{const},i}} \left( { \left( \mathrm{I}_j^{\left( r \right)} - \hat{\mathrm{I}}_j^{\left( r \right)} \right) }^2 \right) =&\\
&= \frac{1}{3 \left| \mathcal{B} \right|} \sum\limits_{r \in \mathcal{B}} \sum\limits_{j=1}^3 2 \left( \mathrm{I}_j^{\left( r \right)} - \hat{\mathrm{I}}_j^{\left( r \right)} \right) \dv{}{\tilde{\alpha}^{\left( r \right)}_{\text{const},i}} \mathrm{I}_j^{\left( r \right)} =&\\
&= \frac{2}{3 \left| \mathcal{B} \right|} \sum\limits_{r \in \mathcal{B}} \sum\limits_{j=1}^3 \left( \mathrm{I}_j^{\left( r \right)} - \hat{\mathrm{I}}_j^{\left( r \right)} \right) \dv{}{\tilde{\alpha}^{\left( r \right)}_{\text{const},i}} \mathrm{I}_j^{\left( r \right)} =&\\
&= \frac{2}{3 \left| \mathcal{B} \right|} \sum\limits_{r \in \mathcal{B}} \sum\limits_{j=1}^3 \left( \mathrm{I}_j^{\left( r \right)} - \hat{\mathrm{I}}_j^{\left( r \right)} \right) \dv{}{\tilde{\alpha}^{\left( r \right)}_{\text{const},i}} \operatorname{clamp} \left( \mathrm{\tilde{I}}_j^{\left( r \right)}, 0, 1 \right) =&\\
&= \frac{2}{3 \left| \mathcal{B} \right|} \sum\limits_{r \in \mathcal{B}} \sum\limits_{j=1}^3 \left( \mathrm{I}_j^{\left( r \right)} - \hat{\mathrm{I}}_j^{\left( r \right)} \right) \cdot \mathds{1}_{\mathrm{\tilde{I}}_j^{\left( r \right)} \in \left[ 0, 1 \right]} \cdot \dv{\mathrm{\tilde{I}}_j^{\left( r \right)}}{\tilde{\alpha}^{\left( r \right)}_{\text{const},i}} =&\\
&= \frac{2}{3 \left| \mathcal{B} \right|} \sum\limits_{r \in \mathcal{B}} \sum\limits_{j=1}^3 \left( \mathrm{I}_j^{\left( r \right)} - \hat{\mathrm{I}}_j^{\left( r \right)} \right) \cdot \mathds{1}_{\mathrm{\tilde{I}}_j^{\left( r \right)} \in \left[ 0, 1 \right]} \cdot \dv{\mathrm{\tilde{I}}_j^{\left( r \right)}}{\alpha^{\left( r \right)}_i} \cdot \dv{\alpha^{\left( r \right)}_i}{\tilde{\alpha}^{\left( r \right)}_{\text{const},i}}
\end{flalign*}

\noindent
\newline
\paragraph{Computing $\dv{}{\tilde{\kappa}^{\left( r \right)}_i}
\mathcal{L}_{\text{RGB},\text{FlaRe}}$}
\begin{flalign*}
\dv{}{\tilde{\kappa}^{\left( r \right)}_i}
\mathcal{L}_{\text{RGB},\text{FlaRe}} &=
\dv{}{\tilde{\kappa}^{\left( r \right)}_i} \left( \frac{1}{3 \left| \mathcal{B} \right|} \sum\limits_{r \in \mathcal{B}} \sum\limits_{j=1}^3 { \left( \mathrm{I}_j^{\left( r \right)} - \hat{\mathrm{I}}_j^{\left( r \right)} \right) }^2 \right) =&\\
&= \frac{1}{3 \left| \mathcal{B} \right|} \cdot \dv{}{\tilde{\kappa}^{\left( r \right)}_i} \left( \sum\limits_{r \in \mathcal{B}} \sum\limits_{j=1}^3 { \left( \mathrm{I}_j^{\left( r \right)} - \hat{\mathrm{I}}_j^{\left( r \right)} \right) }^2 \right) =&\\
&= \frac{1}{3 \left| \mathcal{B} \right|} \sum\limits_{r \in \mathcal{B}} \dv{}{\tilde{\kappa}^{\left( r \right)}_i} \left( \sum\limits_{j=1}^3 { \left( \mathrm{I}_j^{\left( r \right)} - \hat{\mathrm{I}}_j^{\left( r \right)} \right) }^2 \right) =&\\
&= \frac{1}{3 \left| \mathcal{B} \right|} \sum\limits_{r \in \mathcal{B}} \sum\limits_{j=1}^3 \dv{}{\tilde{\kappa}^{\left( r \right)}_i} \left( { \left( \mathrm{I}_j^{\left( r \right)} - \hat{\mathrm{I}}_j^{\left( r \right)} \right) }^2 \right) =&\\
&= \frac{1}{3 \left| \mathcal{B} \right|} \sum\limits_{r \in \mathcal{B}} \sum\limits_{j=1}^3 2 \left( \mathrm{I}_j^{\left( r \right)} - \hat{\mathrm{I}}_j^{\left( r \right)} \right) \dv{}{\tilde{\kappa}^{\left( r \right)}_i} \mathrm{I}_j^{\left( r \right)} =&\\
&= \frac{2}{3 \left| \mathcal{B} \right|} \sum\limits_{r \in \mathcal{B}} \sum\limits_{j=1}^3 \left( \mathrm{I}_j^{\left( r \right)} - \hat{\mathrm{I}}_j^{\left( r \right)} \right) \dv{}{\tilde{\kappa}^{\left( r \right)}_i} \mathrm{I}_j^{\left( r \right)} =&\\
&= \frac{2}{3 \left| \mathcal{B} \right|} \sum\limits_{r \in \mathcal{B}} \sum\limits_{j=1}^3 \left( \mathrm{I}_j^{\left( r \right)} - \hat{\mathrm{I}}_j^{\left( r \right)} \right) \dv{}{\tilde{\kappa}^{\left( r \right)}_i} \operatorname{clamp} \left( \mathrm{\tilde{I}}_j^{\left( r \right)}, 0, 1 \right) =&\\
&= \frac{2}{3 \left| \mathcal{B} \right|} \sum\limits_{r \in \mathcal{B}} \sum\limits_{j=1}^3 \left( \mathrm{I}_j^{\left( r \right)} - \hat{\mathrm{I}}_j^{\left( r \right)} \right) \cdot \mathds{1}_{\mathrm{\tilde{I}}_j^{\left( r \right)} \in \left[ 0, 1 \right]} \cdot \dv{\mathrm{\tilde{I}}_j^{\left( r \right)}}{\tilde{\kappa}^{\left( r \right)}_i} =&\\
&= \frac{2}{3 \left| \mathcal{B} \right|} \sum\limits_{r \in \mathcal{B}} \sum\limits_{j=1}^3 \left( \mathrm{I}_j^{\left( r \right)} - \hat{\mathrm{I}}_j^{\left( r \right)} \right) \cdot \mathds{1}_{\mathrm{\tilde{I}}_j^{\left( r \right)} \in \left[ 0, 1 \right]} \cdot \dv{\mathrm{\tilde{I}}_j^{\left( r \right)}}{\alpha^{\left( r \right)}_i} \cdot \dv{\alpha^{\left( r \right)}_i}{\tilde{\kappa}^{\left( r \right)}_i}
\end{flalign*}

\noindent
\newline
\paragraph{Computing $\dv{}{p} \mathcal{L}_{\text{RGB},\text{FlaRe}}$ for $p \in \pi_{\mathrm{z}} \left( \mathcal{G} \right) \cup \mathcal{F} \cup \Theta \cup { \left \lbrace u_i^{\left( r\right)} \right \rbrace}_{
\begin{subarray}{l}
r \in \mathcal{B} \\ i \in \left \lbrace 1, \ldots, \left \lvert \mathcal{G}^{\left( r \right)} \right \rvert \right \rbrace
\end{subarray}
} \cup { \left \lbrace v_i^{\left( r\right)} \right \rbrace}_{
\begin{subarray}{l}
r \in \mathcal{B} \\ i \in \left \lbrace 1, \ldots, \left \lvert \mathcal{G}^{\left( r \right)} \right \rvert \right \rbrace
\end{subarray}
}$}
\noindent
\newline
\newline
Let $p \in \pi_{\mathrm{z}} \left( \mathcal{G} \right) \cup \mathcal{F} \cup \Theta \cup { \left \lbrace u_i^{\left( r\right)} \right \rbrace}_{
\begin{subarray}{l}
r \in \mathcal{B} \\ i \in \left \lbrace 1, \ldots, \left \lvert \mathcal{G}^{\left( r \right)} \right \rvert \right \rbrace
\end{subarray}
} \cup { \left \lbrace v_i^{\left( r\right)} \right \rbrace}_{
\begin{subarray}{l}
r \in \mathcal{B} \\ i \in \left \lbrace 1, \ldots, \left \lvert \mathcal{G}^{\left( r \right)} \right \rvert \right \rbrace
\end{subarray}
}$. Then:
\begin{flalign*}
\dv{}{p}
\mathcal{L}_{\text{RGB},\text{FlaRe}} &=
\dv{}{p} \left( \frac{1}{3 \left| \mathcal{B} \right|} \sum\limits_{r \in \mathcal{B}} \sum\limits_{j=1}^3 { \left( \mathrm{I}_j^{\left( r \right)} - \hat{\mathrm{I}}_j^{\left( r \right)} \right) }^2 \right) =&\\
&= \frac{1}{3 \left| \mathcal{B} \right|} \cdot \dv{}{p} \left( \sum\limits_{r \in \mathcal{B}} \sum\limits_{j=1}^3 { \left( \mathrm{I}_j^{\left( r \right)} - \hat{\mathrm{I}}_j^{\left( r \right)} \right) }^2 \right) =&\\
&= \frac{1}{3 \left| \mathcal{B} \right|} \sum\limits_{r \in \mathcal{B}} \dv{}{p} \left( \sum\limits_{j=1}^3 { \left( \mathrm{I}_j^{\left( r \right)} - \hat{\mathrm{I}}_j^{\left( r \right)} \right) }^2 \right) =&\\
&= \frac{1}{3 \left| \mathcal{B} \right|} \sum\limits_{r \in \mathcal{B}} \sum\limits_{j=1}^3 \dv{}{p} \left( { \left( \mathrm{I}_j^{\left( r \right)} - \hat{\mathrm{I}}_j^{\left( r \right)} \right) }^2 \right) =&\\
&= \frac{1}{3 \left| \mathcal{B} \right|} \sum\limits_{r \in \mathcal{B}} \sum\limits_{j=1}^3 2 \left( \mathrm{I}_j^{\left( r \right)} - \hat{\mathrm{I}}_j^{\left( r \right)} \right) \dv{}{p} \mathrm{I}_j^{\left( r \right)} =&\\
&= \frac{2}{3 \left| \mathcal{B} \right|} \sum\limits_{r \in \mathcal{B}} \sum\limits_{j=1}^3 \left( \mathrm{I}_j^{\left( r \right)} - \hat{\mathrm{I}}_j^{\left( r \right)} \right) \dv{}{p} \mathrm{I}_j^{\left( r \right)} =&\\
&= \frac{2}{3 \left| \mathcal{B} \right|} \sum\limits_{r \in \mathcal{B}} \sum\limits_{j=1}^3 \left( \mathrm{I}_j^{\left( r \right)} - \hat{\mathrm{I}}_j^{\left( r \right)} \right) \dv{}{p} \operatorname{clamp} \left( \mathrm{\tilde{I}}_j^{\left( r \right)}, 0, 1 \right) =&\\
&= \frac{2}{3 \left| \mathcal{B} \right|} \sum\limits_{r \in \mathcal{B}} \sum\limits_{j=1}^3 \left( \mathrm{I}_j^{\left( r \right)} - \hat{\mathrm{I}}_j^{\left( r \right)} \right) \cdot \mathds{1}_{\mathrm{\tilde{I}}_j^{\left( r \right)} \in \left[ 0, 1 \right]} \cdot \dv{\mathrm{\tilde{I}}_j^{\left( r \right)}}{p}
\end{flalign*}
Let us put:
{
\small
$$
\delta_{4,i}^{\left( r \right)} = \begin{bmatrix}
\frac{2}{3 \left| \mathcal{B} \right|} \left( \mathrm{I}_1^{\left( r \right)} - \hat{\mathrm{I}}_1^{\left( r \right)} \right) \cdot \mathds{1}_{\mathrm{\tilde{I}}_1^{\left( r \right)} \in \left[ 0, 1 \right]} \cdot \mathrm{c}^{\left( r \right)}_{\text{const},i,1} \alpha^{\left( r \right)}_i T^{\left( r \right)}_i \cdot \dv{}{p} \Phi_1 \left( \left( u^{\left( r \right)}_i, v^{\left( r \right)}_i \right), \mathrm{\vec{d}}^{\left( r \right)}, \mathrm{z}^{\left( r \right)}_i \;; \mathcal{G}, \mathcal{F}, \Theta \right) \\
\frac{2}{3 \left| \mathcal{B} \right|} \left( \mathrm{I}_2^{\left( r \right)} - \hat{\mathrm{I}}_2^{\left( r \right)} \right) \cdot \mathds{1}_{\mathrm{\tilde{I}}_2^{\left( r \right)} \in \left[ 0, 1 \right]} \cdot \mathrm{c}^{\left( r \right)}_{\text{const},i,2} \alpha^{\left( r \right)}_i T^{\left( r \right)}_i \cdot \dv{}{p} \Phi_2 \left( \left( u^{\left( r \right)}_i, v^{\left( r \right)}_i \right), \mathrm{\vec{d}}^{\left( r \right)}, \mathrm{z}^{\left( r \right)}_i \;; \mathcal{G}, \mathcal{F}, \Theta \right) \\
\frac{2}{3 \left| \mathcal{B} \right|} \left( \mathrm{I}_3^{\left( r \right)} - \hat{\mathrm{I}}_3^{\left( r \right)} \right) \cdot \mathds{1}_{\mathrm{\tilde{I}}_3^{\left( r \right)} \in \left[ 0, 1 \right]} \cdot \mathrm{c}^{\left( r \right)}_{\text{const},i,3} \alpha^{\left( r \right)}_i T^{\left( r \right)}_i \cdot \dv{}{p} \Phi_3 \left( \left( u^{\left( r \right)}_i, v^{\left( r \right)}_i \right), \mathrm{\vec{d}}^{\left( r \right)}, \mathrm{z}^{\left( r \right)}_i \;; \mathcal{G}, \mathcal{F}, \Theta \right) \\
\frac{2}{3 \left| \mathcal{B} \right|} \left( \sum\limits_{j=1}^3 \left( \mathrm{I}_j^{\left( r \right)} - \hat{\mathrm{I}}_j^{\left( r \right)} \right) \cdot \mathds{1}_{\mathrm{\tilde{I}}_j^{\left( r \right)} \in \left[ 0, 1 \right]} \cdot \dv{\mathrm{\tilde{I}}_j^{\left( r \right)}}{\alpha^{\left( r \right)}_i} \right) \cdot \alpha^{\left( r \right)}_{\text{const},i} \cdot \exp \left( -\frac{ { \left( \left( u^{ \left( r \right)}_i \right)^2+\left( v^{\left( r \right)}_i \right)^2 \right) }^{\kappa^{\left( r \right)}_i} }{2\kappa^{\left( r \right)}_i} \right) \cdot \dv{}{p} \Phi_4 \left( \left( u^{\left( r \right)}_i, v^{\left( r \right)}_i \right), \mathrm{\vec{d}}^{\left( r \right)}, \mathrm{z}^{\left( r \right)}_i \;; \mathcal{G}, \mathcal{F}, \Theta \right) \\
\end{bmatrix}
$$
}
Since:
\begin{flalign*}
\dv{\alpha^{\left( r \right)}_i}{p} &=
\dv{}{p} \left( \alpha^{\left( r \right)}_{\text{const},i} \cdot \alpha^{\left( r \right)}_{\text{MLP},i} \cdot \exp \left( -\frac{ { \left( \left( u^{ \left( r \right)}_i \right)^2+\left( v^{\left( r \right)}_i \right)^2 \right) }^{\kappa^{\left( r \right)}_i} }{2\kappa^{\left( r \right)}_i} \right) \right) =&\\
&= \alpha^{\left( r \right)}_{\text{const},i} \cdot \exp \left( -\frac{ { \left( \left( u^{ \left( r \right)}_i \right)^2+\left( v^{\left( r \right)}_i \right)^2 \right) }^{\kappa^{\left( r \right)}_i} }{2\kappa^{\left( r \right)}_i} \right) \cdot \dv{}{p} \alpha^{\left( r \right)}_{\text{MLP},i} =&\\
&= \alpha^{\left( r \right)}_{\text{const},i} \cdot \exp \left( -\frac{ { \left( \left( u^{ \left( r \right)}_i \right)^2+\left( v^{\left( r \right)}_i \right)^2 \right) }^{\kappa^{\left( r \right)}_i} }{2\kappa^{\left( r \right)}_i} \right) \cdot \dv{}{p} \Phi_4 \left( \left( u^{\left( r \right)}_i, v^{\left( r \right)}_i \right), \mathrm{\vec{d}}^{\left( r \right)}, \mathrm{z}^{\left( r \right)}_i \;; \mathcal{G}, \mathcal{F}, \Theta \right)
\end{flalign*}
and:
\begin{flalign*}
\dv{\mathrm{\tilde{I}}_j^{\left( r \right)}}{\mathrm{c}^{\left( r \right)}_{\text{MLP},i,j}} &=
\dv{}{\mathrm{c}^{\left( r \right)}_{\text{MLP},i,j}} \left( \sum_{k=1}^{\left \lvert \mathcal{G}^{\left( r \right)} \right \rvert} \mathrm{c}^{\left( r \right)}_{\text{const},k,j} \mathrm{c}^{\left( r \right)}_{\text{MLP},k,j} \alpha^{\left( r \right)}_k T^{\left( r \right)}_k \right) =&\\
&= \sum_{k=1}^{\left \lvert \mathcal{G}^{\left( r \right)} \right \rvert} \dv{}{\mathrm{c}^{\left( r \right)}_{\text{MLP},i,j}} \left( \mathrm{c}^{\left( r \right)}_{\text{const},k,j} \mathrm{c}^{\left( r \right)}_{\text{MLP},k,j} \alpha^{\left( r \right)}_k T^{\left( r \right)}_k \right) =&\\
&= \sum_{k=1}^{\left \lvert \mathcal{G}^{\left( r \right)} \right \rvert} \mathrm{c}^{\left( r \right)}_{\text{const},k,j} \alpha^{\left( r \right)}_k T^{\left( r \right)}_k \cdot \dv{\mathrm{c}^{\left( r \right)}_{\text{MLP},k,j}}{\mathrm{c}^{\left( r \right)}_{\text{MLP},i,j}} =&\\
&= \sum_{k=1}^{\left \lvert \mathcal{G}^{\left( r \right)} \right \rvert} \mathrm{c}^{\left( r \right)}_{\text{const},k,j} \alpha^{\left( r \right)}_k T^{\left( r \right)}_k \cdot \delta_{i{k}} =&\\
&= \mathrm{c}^{\left( r \right)}_{\text{const},i,j} \alpha^{\left( r \right)}_i T^{\left( r \right)}_i
\end{flalign*}
we have:
\begin{flalign*}
\dv{}{p} \mathcal{L}_{\text{RGB},\text{FlaRe}} &=
\frac{2}{3 \left| \mathcal{B} \right|} \sum\limits_{r \in \mathcal{B}} \sum\limits_{j=1}^3 \left( \mathrm{I}_j^{\left( r \right)} - \hat{\mathrm{I}}_j^{\left( r \right)} \right) \cdot \mathds{1}_{\mathrm{\tilde{I}}_j^{\left( r \right)} \in \left[ 0, 1 \right]} \cdot \dv{\mathrm{\tilde{I}}_j^{\left( r \right)}}{p} =&\\
&= \frac{2}{3 \left| \mathcal{B} \right|} \sum\limits_{r \in \mathcal{B}} \sum\limits_{j=1}^3 \left( \mathrm{I}_j^{\left( r \right)} - \hat{\mathrm{I}}_j^{\left( r \right)} \right) \cdot \mathds{1}_{\mathrm{\tilde{I}}_j^{\left( r \right)} \in \left[ 0, 1 \right]} \cdot \sum_{k=1}^{\left \lvert \mathcal{G}^{\left( r \right)} \right \rvert} \left( \dv{\mathrm{\tilde{I}}_j^{\left( r \right)}}{\alpha^{\left( r \right)}_k} \cdot \dv{\alpha^{\left( r \right)}_k}{p} + \dv{\mathrm{\tilde{I}}_j^{\left( r \right)}}{\mathrm{c}^{\left( r \right)}_{\text{MLP},k,j}} \cdot \dv{\mathrm{c}^{\left( r \right)}_{\text{MLP},k,j}}{p} \right) =&\\
&= \sum\limits_{r \in \mathcal{B}} \sum_{k=1}^{\left \lvert \mathcal{G}^{\left( r \right)} \right \rvert} \left \langle \delta_{4,k}^{\left( r \right)}, \dv{\Phi \left( \left( u^{\left( r \right)}_k, v^{\left( r \right)}_k \right), \mathrm{\vec{d}}^{\left( r \right)}, \mathrm{z}^{\left( r \right)}_k \;; \mathcal{G}, \mathcal{F}, \Theta \right)}{p} \right \rangle
\end{flalign*}

\noindent
\newline
\paragraph{Computing $\dv{}{p} \mathcal{L}_{\text{RGB},\text{FlaRe}}$ for $p \in \pi_{\upmu} \left( \mathcal{G} \right) \cup \pi_{\mathrm{\tilde{s}}} \left( \mathcal{G} \right) \cup \pi_{\mathrm{q}} \left( \mathcal{G} \right)$}
\noindent
\newline
\newline
Let $p \in \pi_{\upmu} \left( \mathcal{G} \right) \cup \pi_{\mathrm{\tilde{s}}} \left( \mathcal{G} \right) \cup \pi_{\mathrm{q}} \left( \mathcal{G} \right)$. Then:
\begin{flalign*}
\dv{}{p}
\mathcal{L}_{\text{RGB},\text{FlaRe}} &=
\dv{}{p} \left( \frac{1}{3 \left| \mathcal{B} \right|} \sum\limits_{r \in \mathcal{B}} \sum\limits_{j=1}^3 { \left( \mathrm{I}_j^{\left( r \right)} - \hat{\mathrm{I}}_j^{\left( r \right)} \right) }^2 \right) =&\\
&= \frac{1}{3 \left| \mathcal{B} \right|} \cdot \dv{}{p} \left( \sum\limits_{r \in \mathcal{B}} \sum\limits_{j=1}^3 { \left( \mathrm{I}_j^{\left( r \right)} - \hat{\mathrm{I}}_j^{\left( r \right)} \right) }^2 \right) =&\\
&= \frac{1}{3 \left| \mathcal{B} \right|} \sum\limits_{r \in \mathcal{B}} \dv{}{p} \left( \sum\limits_{j=1}^3 { \left( \mathrm{I}_j^{\left( r \right)} - \hat{\mathrm{I}}_j^{\left( r \right)} \right) }^2 \right) =&\\
&= \frac{1}{3 \left| \mathcal{B} \right|} \sum\limits_{r \in \mathcal{B}} \sum\limits_{j=1}^3 \dv{}{p} \left( { \left( \mathrm{I}_j^{\left( r \right)} - \hat{\mathrm{I}}_j^{\left( r \right)} \right) }^2 \right) =&\\
&= \frac{1}{3 \left| \mathcal{B} \right|} \sum\limits_{r \in \mathcal{B}} \sum\limits_{j=1}^3 2 \left( \mathrm{I}_j^{\left( r \right)} - \hat{\mathrm{I}}_j^{\left( r \right)} \right) \dv{}{p} \mathrm{I}_j^{\left( r \right)} =&\\
&= \frac{2}{3 \left| \mathcal{B} \right|} \sum\limits_{r \in \mathcal{B}} \sum\limits_{j=1}^3 \left( \mathrm{I}_j^{\left( r \right)} - \hat{\mathrm{I}}_j^{\left( r \right)} \right) \dv{}{p} \mathrm{I}_j^{\left( r \right)} =&\\
&= \frac{2}{3 \left| \mathcal{B} \right|} \sum\limits_{r \in \mathcal{B}} \sum\limits_{j=1}^3 \left( \mathrm{I}_j^{\left( r \right)} - \hat{\mathrm{I}}_j^{\left( r \right)} \right) \dv{}{p} \operatorname{clamp} \left( \mathrm{\tilde{I}}_j^{\left( r \right)}, 0, 1 \right) =&\\
&= \frac{2}{3 \left| \mathcal{B} \right|} \sum\limits_{r \in \mathcal{B}} \sum\limits_{j=1}^3 \left( \mathrm{I}_j^{\left( r \right)} - \hat{\mathrm{I}}_j^{\left( r \right)} \right) \cdot \mathds{1}_{\mathrm{\tilde{I}}_j^{\left( r \right)} \in \left[ 0, 1 \right]} \cdot \dv{\mathrm{\tilde{I}}_j^{\left( r \right)}}{p}
\end{flalign*}
Since:
\begin{flalign*}
\dv{\alpha^{\left( r \right)}_i}{u^{\left( r \right)}_i} &=
\dv{}{u^{\left( r \right)}_i} \left( \alpha^{\left( r \right)}_{\text{const},i} \cdot \alpha^{\left( r \right)}_{\text{MLP},i} \cdot \exp \left( -\frac{ { \left( \left( u^{ \left( r \right)}_i \right)^2+\left( v^{\left( r \right)}_i \right)^2 \right) }^{\kappa^{\left( r \right)}_i} }{2\kappa^{\left( r \right)}_i} \right) \right) =&\\
\\
\\
&= \alpha^{\left( r \right)}_{\text{const},i} \cdot \alpha^{\left( r \right)}_{\text{MLP},i} \cdot \dv{}{u^{\left( r \right)}_i} \exp \left( -\frac{ { \left( \left( u^{ \left( r \right)}_i \right)^2+\left( v^{\left( r \right)}_i \right)^2 \right) }^{\kappa^{\left( r \right)}_i} }{2\kappa^{\left( r \right)}_i} \right) +&\\
&+ \alpha^{\left( r \right)}_{\text{const},i} \cdot \exp \left( -\frac{ { \left( \left( u^{ \left( r \right)}_i \right)^2+\left( v^{\left( r \right)}_i \right)^2 \right) }^{\kappa^{\left( r \right)}_i} }{2\kappa^{\left( r \right)}_i} \right) \cdot \dv{}{u^{\left( r \right)}_i} \alpha^{\left( r \right)}_{\text{MLP},i} =&\\
\\
\\
&= \alpha^{\left( r \right)}_{\text{const},i} \cdot \alpha^{\left( r \right)}_{\text{MLP},i} \cdot \exp \left( -\frac{ { \left( \left( u^{ \left( r \right)}_i \right)^2+\left( v^{\left( r \right)}_i \right)^2 \right) }^{\kappa^{\left( r \right)}_i} }{2\kappa^{\left( r \right)}_i} \right) \cdot \left( -\frac{1}{2\kappa^{\left( r \right)}_i} \right) \cdot {\kappa^{\left( r \right)}_i} \cdot { \left( \left( u^{ \left( r \right)}_i \right)^2+\left( v^{\left( r \right)}_i \right)^2 \right) }^{\kappa^{\left( r \right)}_i - 1} \cdot {2 u^{ \left( r \right)}_i} +&\\
&+ \alpha^{\left( r \right)}_{\text{const},i} \cdot \exp \left( -\frac{ { \left( \left( u^{ \left( r \right)}_i \right)^2+\left( v^{\left( r \right)}_i \right)^2 \right) }^{\kappa^{\left( r \right)}_i} }{2\kappa^{\left( r \right)}_i} \right) \cdot \dv{}{u^{\left( r \right)}_i} \Phi_4 \left( \left( u^{\left( r \right)}_i, v^{\left( r \right)}_i \right), \mathrm{\vec{d}}^{\left( r \right)}, \mathrm{z}^{\left( r \right)}_i \;; \mathcal{G}, \mathcal{F}, \Theta \right) =&\\
\\
\\
&= \alpha^{\left( r \right)}_{\text{const},i} \cdot \alpha^{\left( r \right)}_{\text{MLP},i} \cdot \exp \left( -\frac{ { \left( \left( u^{ \left( r \right)}_i \right)^2+\left( v^{\left( r \right)}_i \right)^2 \right) }^{\kappa^{\left( r \right)}_i} }{2\kappa^{\left( r \right)}_i} \right) \cdot \left( -\frac{2\kappa^{\left( r \right)}_i}{2\kappa^{\left( r \right)}_i} \right) \cdot { \left( \left( u^{ \left( r \right)}_i \right)^2+\left( v^{\left( r \right)}_i \right)^2 \right) }^{\kappa^{\left( r \right)}_i - 1} \cdot {u^{ \left( r \right)}_i} +&\\
&+ \alpha^{\left( r \right)}_{\text{const},i} \cdot \exp \left( -\frac{ { \left( \left( u^{ \left( r \right)}_i \right)^2+\left( v^{\left( r \right)}_i \right)^2 \right) }^{\kappa^{\left( r \right)}_i} }{2\kappa^{\left( r \right)}_i} \right) \cdot \dv{}{u^{\left( r \right)}_i} \Phi_4 \left( \left( u^{\left( r \right)}_i, v^{\left( r \right)}_i \right), \mathrm{\vec{d}}^{\left( r \right)}, \mathrm{z}^{\left( r \right)}_i \;; \mathcal{G}, \mathcal{F}, \Theta \right) =&\\
\\
\\
&= -\alpha^{\left( r \right)}_{\text{const},i} \cdot \alpha^{\left( r \right)}_{\text{MLP},i} \cdot \exp \left( -\frac{ { \left( \left( u^{ \left( r \right)}_i \right)^2+\left( v^{\left( r \right)}_i \right)^2 \right) }^{\kappa^{\left( r \right)}_i} }{2\kappa^{\left( r \right)}_i} \right) \cdot { \left( \left( u^{ \left( r \right)}_i \right)^2+\left( v^{\left( r \right)}_i \right)^2 \right) }^{\kappa^{\left( r \right)}_i - 1} \cdot {u^{ \left( r \right)}_i} +&\\
&+ \alpha^{\left( r \right)}_{\text{const},i} \cdot \exp \left( -\frac{ { \left( \left( u^{ \left( r \right)}_i \right)^2+\left( v^{\left( r \right)}_i \right)^2 \right) }^{\kappa^{\left( r \right)}_i} }{2\kappa^{\left( r \right)}_i} \right) \cdot \dv{}{u^{\left( r \right)}_i} \Phi_4 \left( \left( u^{\left( r \right)}_i, v^{\left( r \right)}_i \right), \mathrm{\vec{d}}^{\left( r \right)}, \mathrm{z}^{\left( r \right)}_i \;; \mathcal{G}, \mathcal{F}, \Theta \right) =&\\
\\
\\
&= -\alpha^{\left( r \right)}_i \cdot { \left( \left( u^{ \left( r \right)}_i \right)^2+\left( v^{\left( r \right)}_i \right)^2 \right) }^{\kappa^{\left( r \right)}_i - 1} \cdot {u^{ \left( r \right)}_i} +&\\
&+ \alpha^{\left( r \right)}_{\text{const},i} \cdot \exp \left( -\frac{ { \left( \left( u^{ \left( r \right)}_i \right)^2+\left( v^{\left( r \right)}_i \right)^2 \right) }^{\kappa^{\left( r \right)}_i} }{2\kappa^{\left( r \right)}_i} \right) \cdot \dv{}{u^{\left( r \right)}_i} \Phi_4 \left( \left( u^{\left( r \right)}_i, v^{\left( r \right)}_i \right), \mathrm{\vec{d}}^{\left( r \right)}, \mathrm{z}^{\left( r \right)}_i \;; \mathcal{G}, \mathcal{F}, \Theta \right) =&\\
\end{flalign*}
and (the derivation is analogous to the above):
\begin{flalign*}
\dv{\alpha^{\left( r \right)}_i}{v^{\left( r \right)}_i} &=
\dv{}{v^{\left( r \right)}_i} \left( \alpha^{\left( r \right)}_{\text{const},i} \cdot \alpha^{\left( r \right)}_{\text{MLP},i} \cdot \exp \left( -\frac{ { \left( \left( u^{ \left( r \right)}_i \right)^2+\left( v^{\left( r \right)}_i \right)^2 \right) }^{\kappa^{\left( r \right)}_i} }{2\kappa^{\left( r \right)}_i} \right) \right) =&\\
\\
\\
&= -\alpha^{\left( r \right)}_i \cdot { \left( \left( u^{ \left( r \right)}_i \right)^2+\left( v^{\left( r \right)}_i \right)^2 \right) }^{\kappa^{\left( r \right)}_i - 1} \cdot {v^{ \left( r \right)}_i} +&\\
&+ \alpha^{\left( r \right)}_{\text{const},i} \cdot \exp \left( -\frac{ { \left( \left( u^{ \left( r \right)}_i \right)^2+\left( v^{\left( r \right)}_i \right)^2 \right) }^{\kappa^{\left( r \right)}_i} }{2\kappa^{\left( r \right)}_i} \right) \cdot \dv{}{v^{\left( r \right)}_i} \Phi_4 \left( \left( u^{\left( r \right)}_i, v^{\left( r \right)}_i \right), \mathrm{\vec{d}}^{\left( r \right)}, \mathrm{z}^{\left( r \right)}_i \;; \mathcal{G}, \mathcal{F}, \Theta \right) =&\\
\end{flalign*}
given that:
\newline
\begin{flalign*}
\dv{\mathrm{\tilde{I}}_j^{\left( r \right)}}{p} &=
\sum_{k=1}^{\left \lvert \mathcal{G}^{\left( r \right)} \right \rvert} \left( \dv{\mathrm{\tilde{I}}_j^{\left( r \right)}}{\alpha^{\left( r \right)}_k} \cdot \left( \dv{\alpha^{\left( r \right)}_k}{u^{\left( r \right)}_i} \cdot \dv{u^{\left( r \right)}_i}{p} + \dv{\alpha^{\left( r \right)}_k}{v^{\left( r \right)}_i} \cdot \dv{v^{\left( r \right)}_i}{p} \right) + \dv{\mathrm{\tilde{I}}_j^{\left( r \right)}}{\mathrm{c}^{\left( r \right)}_{\text{MLP},k,j}} \cdot \left( \dv{\mathrm{c}^{\left( r \right)}_{\text{MLP},k,j}}{u^{\left( r \right)}_i} \cdot \dv{u^{\left( r \right)}_i}{p} + \dv{\mathrm{c}^{\left( r \right)}_{\text{MLP},k,j}}{v^{\left( r \right)}_i} \cdot \dv{v^{\left( r \right)}_i}{p} \right) \right) =&\\
&= \sum_{k=1}^{\left \lvert \mathcal{G}^{\left( r \right)} \right \rvert} \left( \left( \dv{\mathrm{\tilde{I}}_j^{\left( r \right)}}{\alpha^{\left( r \right)}_k} \cdot \dv{\alpha^{\left( r \right)}_k}{u^{\left( r \right)}_i} + \dv{\mathrm{\tilde{I}}_j^{\left( r \right)}}{\mathrm{c}^{\left( r \right)}_{\text{MLP},k,j}} \cdot \dv{\mathrm{c}^{\left( r \right)}_{\text{MLP},k,j}}{u^{\left( r \right)}_i} \right) \cdot \dv{u^{\left( r \right)}_i}{p} + \left( \dv{\mathrm{\tilde{I}}_j^{\left( r \right)}}{\alpha^{\left( r \right)}_k} \cdot \dv{\alpha^{\left( r \right)}_k}{v^{\left( r \right)}_i} + \dv{\mathrm{\tilde{I}}_j^{\left( r \right)}}{\mathrm{c}^{\left( r \right)}_{\text{MLP},k,j}} \cdot \dv{\mathrm{c}^{\left( r \right)}_{\text{MLP},k,j}}{v^{\left( r \right)}_i} \right) \cdot \dv{v^{\left( r \right)}_i}{p} \right)
\end{flalign*}
\newline
\newline
after a few algebraic steps, we obtain:
\newline
\begin{flalign*}
& \dv{}{p} \mathcal{L}_{\text{RGB},\text{FlaRe}} =&\\
\\
&= \sum\limits_{r \in \mathcal{B}} \sum_{i=1}^{\left \lvert \mathcal{G}^{\left( r \right)} \right \rvert} \left \langle \delta_{4,i}^{\left( r \right)}, \dv{\Phi \left( \left( u^{\left( r \right)}_i, v^{\left( r \right)}_i \right), \mathrm{\vec{d}}^{\left( r \right)}, \mathrm{z}^{\left( r \right)}_i \;; \mathcal{G}, \mathcal{F}, \Theta \right)}{u^{\left( r \right)}_i} \right \rangle \cdot \dv{u^{\left( r \right)}_i}{p} +&\\
&+ \sum\limits_{r \in \mathcal{B}} \sum_{i=1}^{\left \lvert \mathcal{G}^{\left( r \right)} \right \rvert} \left \langle \delta_{4,i}^{\left( r \right)}, \dv{\Phi \left( \left( u^{\left( r \right)}_i, v^{\left( r \right)}_i \right), \mathrm{\vec{d}}^{\left( r \right)}, \mathrm{z}^{\left( r \right)}_i \;; \mathcal{G}, \mathcal{F}, \Theta \right)}{v^{\left( r \right)}_i} \right \rangle \cdot \dv{v^{\left( r \right)}_i}{p} -&\\
&- \frac{2}{3 \left| \mathcal{B} \right|} \sum\limits_{r \in \mathcal{B}} \sum\limits_{j=1}^3 \left( \mathrm{I}_j^{\left( r \right)} - \hat{\mathrm{I}}_j^{\left( r \right)} \right) \cdot \mathds{1}_{\mathrm{\tilde{I}}_j^{\left( r \right)} \in \left[ 0, 1 \right]} \cdot \sum_{i=1}^{\left \lvert \mathcal{G}^{\left( r \right)} \right \rvert} \dv{\mathrm{\tilde{I}}_j^{\left( r \right)}}{\alpha^{\left( r \right)}_i} \cdot \alpha^{\left( r \right)}_i \cdot { \left( \left( u^{ \left( r \right)}_i \right)^2+\left( v^{\left( r \right)}_i \right)^2 \right) }^{\kappa^{\left( r \right)}_i - 1} \cdot \left( {u^{ \left( r \right)}_i} \cdot \dv{u^{\left( r \right)}_i}{p} + {v^{ \left( r \right)}_i} \cdot \dv{v^{\left( r \right)}_i}{p} \right)
\end{flalign*}

\noindent
\newline
\paragraph{Computing $\dv{\mathcal{L}_s}{\tilde{\mathrm{s}}}$}
\begin{flalign*}
\dv{\mathcal{L}_s}{\mathrm{\tilde{s}}_{i,j}} &=
\dv{}{\tilde{\mathrm{s}}_{i,j}} \left( \frac{1}{N} \sum_{k=1}^N \left \lVert \mathrm{s}_k \right \rVert \right) =&\\
&= \frac{1}{N} \cdot \dv{}{\tilde{\mathrm{s}}_{i,j}} \left( \sum_{k=1}^N \left \lVert \mathrm{s}_k \right \rVert \right) =&\\
&= \frac{1}{N} \cdot \sum_{k=1}^N \dv{}{\tilde{\mathrm{s}}_{i,j}} \left \lVert \mathrm{s}_k \right \rVert =&\\
&= \frac{1}{N} \cdot \sum_{k=1}^N \dv{}{\tilde{\mathrm{s}}_{i,j}} \sqrt{ \left \langle \mathrm{s}_k, \mathrm{s}_k \right \rangle } =&\\
&= \frac{1}{N} \cdot \sum_{k=1}^N \frac{1}{ 2 \sqrt{ \left \langle \mathrm{s}_k, \mathrm{s}_k \right \rangle } } \cdot \dv{}{\tilde{\mathrm{s}}_{i,j}} \left \langle \mathrm{s}_k, \mathrm{s}_k \right \rangle =&\\
&= \frac{1}{N} \cdot \sum_{k=1}^N \frac{1}{ 2 \sqrt{ \left \langle \mathrm{s}_k, \mathrm{s}_k \right \rangle } } \cdot \dv{}{\mathrm{s}_{i,j}} \left \langle \mathrm{s}_k, \mathrm{s}_k \right \rangle \cdot \dv{\mathrm{s}_{i,j}}{\tilde{\mathrm{s}}_{i,j}} =&\\
&= \frac{1}{N} \cdot \sum_{k=1}^N \frac{1}{ 2 \sqrt{ \left \langle \mathrm{s}_k, \mathrm{s}_k \right \rangle } } \cdot 2 \left \langle \mathrm{s}_k, \dv{\mathrm{s}_k}{\mathrm{s}_{i,j}} \right \rangle \cdot \dv{\mathrm{s}_{i,j}}{\tilde{\mathrm{s}}_{i,j}} =&\\
&= \frac{1}{N} \cdot \sum_{k=1}^N \frac{1}{ 2 \sqrt{ \left \langle \mathrm{s}_k, \mathrm{s}_k \right \rangle } } \cdot 2 \left \langle \mathrm{s}_k, \delta_{i{k}} \cdot \vec{\varepsilon}_j \right \rangle \cdot \dv{\mathrm{s}_{i,j}}{\tilde{\mathrm{s}}_{i,j}} =&\\
&= \frac{1}{N} \cdot \sum_{k=1}^N \frac{1}{ 2 \sqrt{ \left \langle \mathrm{s}_k, \mathrm{s}_k \right \rangle } } \cdot 2 \left \langle \mathrm{s}_k, \vec{\varepsilon}_j \right \rangle \cdot \delta_{i{k}} \cdot \dv{\mathrm{s}_{i,j}}{\tilde{\mathrm{s}}_{i,j}} =&\\
&= \frac{1}{N} \cdot \frac{1}{ 2 \sqrt{ \left \langle \mathrm{s}_i, \mathrm{s}_i \right \rangle } } \cdot 2 \left \langle \mathrm{s}_i, \vec{\varepsilon}_j \right \rangle \cdot \dv{\mathrm{s}_{i,j}}{\tilde{\mathrm{s}}_{i,j}} =&\\
&= \frac{1}{N} \cdot \frac{1}{ 2 \sqrt{ \left \langle \mathrm{s}_i, \mathrm{s}_i \right \rangle } } \cdot 2 \cdot \mathrm{s}_{i,j} \cdot \dv{\mathrm{s}_{i,j}}{\tilde{\mathrm{s}}_{i,j}} =&\\
&= \frac{1}{N} \cdot \frac{\mathrm{s}_{i,j}}{ \sqrt{ \left \langle \mathrm{s}_i, \mathrm{s}_i \right \rangle } } \cdot \dv{\mathrm{s}_{i,j}}{\tilde{\mathrm{s}}_{i,j}} =&\\
&= \frac{1}{N} \cdot \frac{\mathrm{s}_{i,j}}{ \sqrt{ \left \langle \mathrm{s}_i, \mathrm{s}_i \right \rangle } } \cdot \dv{}{\tilde{\mathrm{s}}_{i,j}} \exp \left( \mathrm{\tilde{s}}_{i,j} \right) =&\\
&= \frac{1}{N} \cdot \frac{\mathrm{s}_{i,j}}{ \sqrt{ \left \langle \mathrm{s}_i, \mathrm{s}_i \right \rangle } } \cdot \exp \left( \mathrm{\tilde{s}}_{i,j} \right) \cdot 1 =&\\
&= \frac{1}{N} \cdot \frac{\mathrm{s}_{i,j}}{ \sqrt{ \left \langle \mathrm{s}_i, \mathrm{s}_i \right \rangle } } \cdot \mathrm{s}_{i,j} =&\\
&= \frac{1}{N} \cdot \frac{\mathrm{s}_{i,j}^2}{ \sqrt{ \left \langle \mathrm{s}_i, \mathrm{s}_i \right \rangle } }
\end{flalign*}
Hence:
\begin{flalign*}
& \dv{\mathcal{L}_s}{\mathrm{\tilde{s}}_i} = \frac{1}{N} \cdot \frac{\mathrm{s}_i^2}{ \sqrt{ \left \langle \mathrm{s}_i, \mathrm{s}_i \right \rangle } }
&
\end{flalign*}

\paragraph{Computing $\dv{\mathrm{\tilde{I}}^{\left( r \right)}}{\alpha^{\left( r \right)}_i}$}
\noindent
\newline
\newline
As for $\dv{\mathrm{\tilde{I}}^{\left( r \right)}}{\alpha^{\left( r \right)}_i}$, it can be efficiently computed making use of the following three lemmas:
\newline
\begin{lemma}
\label{lemma1}
Let $r \in \mathcal{B}$. Then:
$$
\dv{\mathrm{\tilde{I}}^{\left( r \right)}}{\alpha_1^{\left( r \right)}} = \frac{-\mathrm{\tilde{I}}^{\left( r \right)} + \mathrm{c}_1^{\left( r \right)}}{1-\alpha_1^{\left( r \right)}}
$$
\end{lemma}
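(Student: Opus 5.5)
The plan is to differentiate the compositing formula directly, writing $\mathrm{c}_k^{(r)} = \mathrm{c}^{(r)}_{\text{const},k}\mathrm{c}^{(r)}_{\text{MLP},k}$ (componentwise), so that
$$
\mathrm{\tilde{I}}^{(r)} = \sum_{k=1}^{|\mathcal{G}^{(r)}|} \mathrm{c}_k^{(r)} \alpha_k^{(r)} T_k^{(r)}, \qquad T_k^{(r)} = \prod_{j=1}^{k-1}\bigl(1-\alpha_j^{(r)}\bigr).
$$
In this lemma, $\alpha_1^{(r)}$ is treated as an independent variable, with the colors $\mathrm{c}_k^{(r)}$ and the other opacities held fixed. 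The only terms that depend on it are the explicit factor in the $k=1$ summand and the factor $(1-\alpha_1^{(r)})$ in every $T_k^{(r)}$ with $k\ge 2$, while $T_1^{(r)}=1$.

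The first step splits the sum as $\mathrm{\tilde{I}}^{(r)} = \mathrm{c}_1^{(r)}\alpha_1^{(r)} + (1-\alpha_1^{(r)})\,\mathrm{R}^{(r)}$. Here
$$
\mathrm{R}^{(r)} = \sum_{k\ge 2} \mathrm{c}_k^{(r)}\alpha_k^{(r)}\prod_{j=2}^{k-1}\bigl(1-\alpha_j^{(r)}\bigr)
$$
is independent of $\alpha_1^{(r)}$. Differentiating gives
$$
\dv{\mathrm{\tilde{I}}^{(r)}}{\alpha_1^{(r)}} = \mathrm{c}_1^{(r)} - \mathrm{R}^{(r)}.
$$

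The second step eliminates $\mathrm{R}^{(r)}$ using the same decomposition: $\mathrm{R}^{(r)} = \bigl(\mathrm{\tilde{I}}^{(r)} - \mathrm{c}_1^{(r)}\alpha_1^{(r)}\bigr)/(1-\alpha_1^{(r)})$. Substituting and putting everything over the common denominator, we get
$$
\dv{\mathrm{\tilde{I}}^{(r)}}{\alpha_1^{(r)}} = \frac{\mathrm{c}_1^{(r)}(1-\alpha_1^{(r)}) - \mathrm{\tilde{I}}^{(r)} + \mathrm{c}_1^{(r)}\alpha_1^{(r)}}{1-\alpha_1^{(r)}}.
$$
The numerator simplifies to $\mathrm{c}_1^{(r)} - \mathrm{\tilde{I}}^{(r)}$, which is the stated formula.

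The only delicate point is the division by $1-\alpha_1^{(r)}$, which needs $\alpha_1^{(r)}\neq 1$. This holds because $\alpha_1^{(r)} = \alpha_{\text{const}}\,\alpha_{\text{MLP}}\,\mathcal{N} < 1$, as $\alpha_{\text{const}}=\sigma(\cdot)<1$ and the other two factors are at most $1$. I would note this explicitly. I would also point out that the same splitting, applied at index $i$ with prefix transmittance $T_i^{(r)}$, is what the companion lemmas presumably generalize.
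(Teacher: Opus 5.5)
Your proof is correct and is essentially the paper's argument. The paper likewise differentiates to get $\mathrm{c}_1^{(r)}$ minus the $\alpha_1^{(r)}$-free remainder, then multiplies by $1-\alpha_1^{(r)}$ and adds and subtracts $\mathrm{c}_1^{(r)}\alpha_1^{(r)}$ to recognize $\mathrm{\tilde{I}}^{(r)}$, which is exactly your elimination of $\mathrm{R}^{(r)}$. Your explicit check that $\alpha_1^{(r)}<1$ (sigmoid factors times $\mathcal{N}\le 1$) justifies the division, which the paper only addresses afterwards through ray termination.
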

\begin{proof}
Since:
$$
\mathrm{\tilde{I}}^{\left( r \right)} =
\sum\limits_{i=1}^{\left \lvert \mathcal{G}^{\left( r \right)} \right \rvert} \mathrm{c}_i^{\left( r \right)} \alpha_i^{\left( r \right)} \prod\limits_{j=1}^{i-1} \left( 1 - \alpha_j^{\left( r \right)} \right)
$$
we have:
$$
\dv{\mathrm{\tilde{I}}^{\left( r \right)}}{\alpha_1^{\left( r \right)}} =
\mathrm{c}_1^{\left( r \right)} - \sum\limits_{i=2}^{\left \lvert \mathcal{G}^{\left( r \right)} \right \rvert} \mathrm{c}_i^{\left( r \right)} \alpha_i^{\left( r \right)} \prod\limits_{\substack{j=1 \\ j \ne 1 }}^{i-1} \left( 1-\alpha_j^{\left( r \right)} \right)
$$
Let us multiply both sides of the equation by $1-\alpha_1^{\left( r \right)}$. Then:
\begin{flalign*}
\dv{\mathrm{\tilde{I}}^{\left( r \right)}}{\alpha_1^{\left( r \right)}}\left( 1 - \alpha_1^{\left( r \right)} \right) &= \mathrm{c}_1^{\left( r \right)} \left( 1 - \alpha_1^{\left( r \right)} \right) - \sum\limits_{i=2}^{\left \lvert \mathcal{G}^{\left( r \right)} \right \rvert} \mathrm{c}_i^{\left( r \right)} \alpha_i^{\left( r \right)} \prod\limits_{j=1}^{i-1} \left( 1 - \alpha_j^{\left( r \right)} \right) =&\\
&= \mathrm{c}_1^{\left( r \right)} \left( 1 - \alpha_1^{\left( r \right)} \right) - \left( 0 + \sum\limits_{i=2}^{\left \lvert \mathcal{G}^{\left( r \right)} \right \rvert} \mathrm{c}_i^{\left( r \right)} \alpha_i^{\left( r \right)} \prod\limits_{j=1}^{i-1} \left( 1 - \alpha_j^{\left( r \right)} \right) \right) =&\\
&= \mathrm{c}_1^{\left( r \right)} \left( 1 - \alpha_1^{\left( r \right)} \right) - \left( \left( \mathrm{c}_1^{\left( r \right)} \alpha_1^{\left( r \right)} - \mathrm{c}_1^{\left( r \right)} \alpha_1^{\left( r \right)} \right) + \sum\limits_{i=2}^{\left \lvert \mathcal{G}^{\left( r \right)} \right \rvert} \mathrm{c}_i^{\left( r \right)} \alpha_i^{\left( r \right)} \prod\limits_{j=1}^{i-1} \left( 1 - \alpha_j^{\left( r \right)} \right) \right) =&\\
&= \mathrm{c}_1^{\left( r \right)} \left( \left( 1 - \alpha_1^{\left( r \right)} \right) + \alpha_1^{\left( r \right)} \right) - \sum\limits_{i=1}^{\left \lvert \mathcal{G}^{\left( r \right)} \right \rvert} \mathrm{c}_i^{\left( r \right)} \alpha_i^{\left( r \right)} \prod\limits_{j=1}^{i-1} \left( 1-\alpha_j \right) =&\\
&= \mathrm{c}_1^{\left( r \right)} \cdot 1 - \mathrm{\tilde{I}}^{\left( r \right)} =&\\
&= \mathrm{c}_1^{\left( r \right)} - \mathrm{\tilde{I}}^{\left( r \right)} =&\\
&= -\mathrm{\tilde{I}}^{\left( r \right)} + \mathrm{c}_1^{\left( r \right)}
\end{flalign*}
Hence:
$$
\dv{\mathrm{\tilde{I}}^{\left( r \right)}}{\alpha_1^{\left( r \right)}} = \frac{-\mathrm{\tilde{I}}^{\left( r \right)} + \mathrm{c}_1^{\left( r \right)}}{1-\alpha_1^{\left( r \right)}}
$$
\qedhere
\end{proof}

\begin{lemma}
\label{lemma2}
Let $r \in \mathcal{B}$ and $1 \le i < \left \lvert \mathcal{G} \left( r \right) \right \rvert$. Then:
$$
\dv{\mathrm{\tilde{I}}^{\left( r \right)}}{\alpha_{i+1}^{\left( r \right)}} = \frac{ \dv{\mathrm{\tilde{I}}^{\left( r \right)}}{\alpha_i^{\left( r \right)}} \cdot \left( 1 - \alpha_i^{\left( r \right)} \right) + \left( \mathrm{c}_{i+1}^{\left( r \right)} - \mathrm{c}_i^{\left( r \right)} \right) \prod\limits_{k=1}^i \left( 1 - \alpha_k^{\left( r \right)} \right)}{1 - \alpha_{i+1}^{\left( r \right)}}
$$
\end{lemma}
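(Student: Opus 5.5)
We write the derivative with respect to $\alpha_m^{(r)}$ explicitly, exactly as in the proof of Lemma~\ref{lemma1}, and then compare consecutive indices. Drop the superscript $(r)$, write $n=|\mathcal{G}^{(r)}|$, and set $T_m=\prod_{k=1}^{m-1}(1-\alpha_k)$. Differentiating $\tilde{\mathrm{I}}=\sum_{l=1}^{n}\mathrm{c}_l\alpha_l T_l$ term by term gives
\[
\dv{\tilde{\mathrm{I}}}{\alpha_m} = \mathrm{c}_m T_m - \sum_{l=m+1}^{n}\mathrm{c}_l\alpha_l\prod_{\substack{k=1\\k\neq m}}^{l-1}(1-\alpha_k).
\]
Terms with $l<m$ do not involve $\alpha_m$. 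The term $l=m$ contributes $\mathrm{c}_mT_m$. Each term with $l>m$ contains the factor $(1-\alpha_m)$ exactly once, so its derivative is minus the same product with that factor removed.

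Next, multiply this identity by $(1-\alpha_m)$, as in Lemma~\ref{lemma1}, which restores the missing factor. This yields the clean form
\[
(1-\alpha_m)\dv{\tilde{\mathrm{I}}}{\alpha_m} = \mathrm{c}_m T_{m+1} - \sum_{l=m+1}^{n}\mathrm{c}_l\alpha_l T_l =: D_m .
\]
Apply this with $m=i$ and with $m=i+1$, and subtract. The tail sums differ only by the single term $l=i+1$, namely $-\mathrm{c}_{i+1}\alpha_{i+1}T_{i+1}$. Using $T_{i+2}=T_{i+1}(1-\alpha_{i+1})$, we obtain
\[
D_{i+1}-D_i = \mathrm{c}_{i+1}T_{i+1}(1-\alpha_{i+1}) + \mathrm{c}_{i+1}\alpha_{i+1}T_{i+1} - \mathrm{c}_iT_{i+1} = (\mathrm{c}_{i+1}-\mathrm{c}_i)\prod_{k=1}^{i}(1-\alpha_k).
\]
Therefore
\[
(1-\alpha_{i+1})\dv{\tilde{\mathrm{I}}}{\alpha_{i+1}} = (1-\alpha_i)\dv{\tilde{\mathrm{I}}}{\alpha_i} + (\mathrm{c}_{i+1}-\mathrm{c}_i)\prod_{k=1}^{i}(1-\alpha_k).
\]
Dividing by $1-\alpha_{i+1}$ gives the claimed recursion.

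The main obstacle is purely bookkeeping: the product indices must be tracked exactly, and the $\alpha_{i+1}$ cross term must cancel correctly. The argument also needs $\alpha_{i+1}\neq 1$ for the division, an assumption already implicit in Lemma~\ref{lemma1}. In the model it holds because $\alpha_{\mathrm{const}}$ and $\alpha_{\mathrm{MLP}}$ are sigmoids and the kernel value is at most $1$, so $\alpha<1$. As a sanity check, setting $T_1=1$ and taking $m=1$ in the formula for $D_m$ recovers the numerator $\mathrm{c}_1-\tilde{\mathrm{I}}$ of Lemma~\ref{lemma1}.
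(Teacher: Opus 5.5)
Your proof is correct and follows essentially the paper's route: both write the explicit derivative, multiply by $1-\alpha_m$ to restore the full transmittance products, and compare consecutive indices. The paper reaches the shifted form (index $i$ in terms of $i-1$) through a chain of add-and-subtract steps, whereas you compute the difference $D_{i+1}-D_i$ directly, which is the same computation with cleaner bookkeeping.
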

\begin{proof}
Since:
$$
\dv{\mathrm{\tilde{I}}^{\left( r \right)}}{\alpha_i^{\left( r \right)}} =
\mathrm{c}_i^{\left( r \right)} \prod\limits_{j=1}^{i-1} \left( 1 - \alpha_j^{\left( r \right)} \right) - \sum\limits_{j=i+1}^{\left \lvert \mathcal{G} \left( r \right) \right \rvert} \mathrm{c}_j^{\left( r \right)} \alpha_j^{\left( r \right)} \prod\limits_{\substack{k=1 \\ k \ne i }}^{j-1} \left( 1 - \alpha_k^{\left( r \right)} \right)
$$
after multiplying both sides of the equation by $1-\alpha_i^{\left( r \right)}$ we obtain:
\begin{flalign*}
\dv{\mathrm{\tilde{I}}^{\left( r \right)}}{\alpha_i^{\left( r \right)}}\left( 1 - \alpha_i^{\left( r \right)} \right) &= \mathrm{c}_i^{\left( r \right)} \left( 1 - \alpha_i^{\left( r \right)} \right) \prod\limits_{j=1}^{i-1} \left( 1 - \alpha_j^{\left( r \right)} \right) - \sum\limits_{j=i+1}^{\left \lvert \mathcal{G} \left( r \right) \right \rvert} \mathrm{c}_j^{\left( r \right)} \alpha_j^{\left( r \right)} \prod\limits_{k=1}^{j-1} \left( 1 - \alpha_k^{\left( r \right)} \right) =&\\
 &= \mathrm{c}_i^{\left( r \right)} \left( 1 - \alpha_i^{\left( r \right)} \right) \prod\limits_{j=1}^{i-1} \left( 1 - \alpha_j^{\left( r \right)} \right) + 0 - \sum\limits_{j=i+1}^{\left \lvert \mathcal{G} \left( r \right) \right \rvert} \mathrm{c}_j^{\left( r \right)} \alpha_j^{\left( r \right)} \prod\limits_{k=1}^{j-1} \left( 1 - \alpha_k^{\left( r \right)} \right) =&\\
 &= \mathrm{c}_i^{\left( r \right)} \left( 1 - \alpha_i^{\left( r \right)} \right) \prod\limits_{j=1}^{i-1} \left( 1 - \alpha_j^{\left( r \right)} \right) + 0 \cdot \prod\limits_{j=1}^{i-1} \left( 1 - \alpha_j^{\left( r \right)} \right) - \sum\limits_{j=i+1}^{\left \lvert \mathcal{G} \left( r \right) \right \rvert} \mathrm{c}_j^{\left( r \right)} \alpha_j^{\left( r \right)} \prod\limits_{k=1}^{j-1} \left( 1 - \alpha_k^{\left( r \right)} \right) =&\\
 &= \mathrm{c}_i^{\left( r \right)} \left( 1 - \alpha_i^{\left( r \right)} \right) \prod\limits_{j=1}^{i-1} \left( 1 - \alpha_j^{\left( r \right)} \right) + \left( \mathrm{c}_i^{\left( r \right)} \alpha_i^{\left( r \right)} - \mathrm{c}_i^{\left( r \right)} \alpha_i^{\left( r \right)} \right) \prod\limits_{j=1}^{i-1} \left( 1 - \alpha_j^{\left( r \right)} \right) - \sum\limits_{j=i+1}^{\left \lvert \mathcal{G} \left( r \right) \right \rvert} \mathrm{c}_j^{\left( r \right)} \alpha_j^{\left( r \right)} \prod\limits_{k=1}^{j-1} \left( 1 - \alpha_k^{\left( r \right)} \right) =&\\
 &= \mathrm{c}_i^{\left( r \right)} \left( \left( 1 -\alpha_i^{\left( r \right)} \right) + \alpha_i^{\left( r \right)} \right) \prod\limits_{j=1}^{i-1} \left( 1 -\alpha_j^{\left( r \right)} \right) - \sum\limits_{j=i}^{\left \lvert \mathcal{G} \left( r \right) \right \rvert} \mathrm{c}_j^{\left( r \right)} \alpha_j^{\left( r \right)} \prod\limits_{k=1}^{j-1} \left( 1 - \alpha_k^{\left( r \right)} \right) =&\\
 &= \mathrm{c}_i^{\left( r \right)} \cdot 1 \cdot \prod\limits_{j=1}^{i-1} \left( 1 - \alpha_j^{\left( r \right)} \right) - \sum\limits_{j=i}^{\left \lvert \mathcal{G} \left( r \right) \right \rvert} \mathrm{c}_j^{\left( r \right)} \alpha_j^{\left( r \right)} \prod\limits_{k=1}^{j-1} \left( 1 - \alpha_k^{\left( r \right)} \right) =&\\
 &= \mathrm{c}_i^{\left( r \right)} \prod\limits_{j=1}^{i-1} \left( 1 -\alpha_j^{\left( r \right)} \right) + 0 - \sum\limits_{j=i}^{\left \lvert \mathcal{G} \left( r \right) \right \rvert} \mathrm{c}_j^{\left( r \right)} \alpha_j^{\left( r \right)} \prod\limits_{k=1}^{j-1} \left( 1 - \alpha_k^{\left( r \right)} \right) =&\\
 &= \mathrm{c}_i^{\left( r \right)} \prod\limits_{j=1}^{i-1} \left( 1 -\alpha_j^{\left( r \right)} \right) + 0 \cdot \prod\limits_{j=1}^{i-1} \left( 1 - \alpha_j^{\left( r \right)} \right) - \sum\limits_{j=i}^{\left \lvert \mathcal{G} \left( r \right) \right \rvert} \mathrm{c}_j^{\left( r \right)} \alpha_j^{\left( r \right)} \prod\limits_{k=1}^{j-1} \left( 1 - \alpha_k^{\left( r \right)} \right) =&\\
 &= \mathrm{c}_i^{\left( r \right)} \prod\limits_{j=1}^{i-1} \left( 1 -\alpha_j^{\left( r \right)} \right) + \left( \mathrm{c}_{i-1}^{\left( r \right)} - \mathrm{c}_{i-1}^{\left( r \right)} \right) \prod\limits_{j=1}^{i-1} \left( 1 - \alpha_j^{\left( r \right)} \right) - \sum\limits_{j=i}^{\left \lvert \mathcal{G} \left( r \right) \right \rvert} \mathrm{c}_j^{\left( r \right)} \alpha_j^{\left( r \right)} \prod\limits_{k=1}^{j-1} \left( 1 - \alpha_k^{\left( r \right)} \right) =&\\
 &= \left( \mathrm{c}_i^{\left( r \right)} - \mathrm{c}_{i-1}^{\left( r \right)} \right) \prod\limits_{j=1}^{i-1} \left( 1 - \alpha_j^{\left( r \right)} \right) + \mathrm{c}_{i-1}^{\left( r \right)} \prod\limits_{j=1}^{i-1} \left( 1 - \alpha_j^{\left( r \right)} \right) - \sum\limits_{j=i}^{\left \lvert \mathcal{G} \left( r \right) \right \rvert} \mathrm{c}_j^{\left( r \right)} \alpha_j^{\left( r \right)} \prod\limits_{k=1}^{j-1} \left( 1 - \alpha_k^{\left( r \right)} \right) =&\\
 &= \left( \mathrm{c}_i^{\left( r \right)} - \mathrm{c}_{i-1}^{\left( r \right)} \right) \prod\limits_{j=1}^{i-1} \left( 1 - \alpha_j^{\left( r \right)} \right) + \left( 1 - \alpha_{i-1}^{\left( r \right)} \right) \left( \mathrm{c}_{i-1}^{\left( r \right)} \prod \limits_{j=1}^{i-2} \left( 1 - \alpha_j^{\left( r \right)} \right) - \sum\limits_{j=i}^{\left \lvert \mathcal{G} \left( r \right) \right \rvert} \mathrm{c}_j^{\left( r \right)} \alpha_j^{\left( r \right)} \prod\limits_{\substack{k=1 \\ k \ne i-1}}^{j-1} \left( 1 - \alpha_k^{\left( r \right)} \right) \right) =&\\
 &= \left( \mathrm{c}_i^{\left( r \right)} - \mathrm{c}_{i-1}^{\left( r \right)} \right) \prod\limits_{j=1}^{i-1} \left( 1 - \alpha_j^{\left( r \right)} \right) + \left( 1 - \alpha_{i-1}^{\left( r \right)} \right) \dv{\mathrm{\tilde{I}}^{\left( r \right)}}{\alpha_{i-1}^{\left( r \right)}} =&\\
 &= \left( 1 - \alpha_{i-1}^{\left( r \right)} \right) \dv{\mathrm{\tilde{I}}^{\left( r \right)}}{\alpha_{i-1}^{\left( r \right)}} + \left( \mathrm{c}_i^{\left( r \right)} - \mathrm{c}_{i-1}^{\left( r \right)} \right) \prod\limits_{j=1}^{i-1} \left( 1 - \alpha_j^{\left( r \right)} \right) =&\\
 &= \dv{\mathrm{\tilde{I}}^{\left( r \right)}}{\alpha_{i-1}^{\left( r \right)}} \left( 1 - \alpha_{i-1}^{\left( r \right)} \right) + \left( \mathrm{c}_i^{\left( r \right)} - \mathrm{c}_{i-1}^{\left( r \right)} \right) \prod\limits_{j=1}^{i-1} \left( 1 - \alpha_j^{\left( r \right)} \right)
 \end{flalign*}
Hence:
$$
\dv{\mathrm{\tilde{I}}^{\left( r \right)}}{\alpha_i^{\left( r \right)}} = \frac{\dv{\mathrm{\tilde{I}}^{\left( r \right)}}{\alpha_{i-1}^{\left( r \right)}} \left( 1 - \alpha_{i-1}^{\left( r \right)} \right) + \left( \mathrm{c}_i^{\left( r \right)} - \mathrm{c}_{i-1}^{\left( r \right)} \right) \prod\limits_{j=1}^{i-1} \left( 1 - \alpha_j^{\left( r \right)} \right)}{1 - \alpha_i^{\left( r \right)}}
$$
\qedhere
\end{proof}

\begin{lemma}
\label{lemma3}
Let $r \in \mathcal{B}$ and $1 \le i \le \left \lvert \mathcal{G} \left( r \right) \right \rvert$. Then:
$$
\dv{\mathrm{\tilde{I}}^{\left( r \right)}}{\alpha_i^{\left( r \right)}} = \frac{-\mathrm{\tilde{I}}^{\left( r \right)} + \sum\limits_{j=1}^{i-1} \mathrm{c}_j^{\left( r \right)} \alpha_j^{\left( r \right)} \prod\limits_{k=1}^{j-1} \left( 1 - \alpha_k^{\left( r \right)} \right) + \mathrm{c}_i^{\left( r \right)} \prod\limits_{k=1}^{i-1} \left( 1 - \alpha_k^{\left( r \right)} \right)}{1 - \alpha_i^{\left( r \right)}}
$$
\end{lemma}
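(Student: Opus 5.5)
Two routes are available: induction on $i$ using Lemma~\ref{lemma1} as the base case and Lemma~\ref{lemma2} as the inductive step, or a direct computation mirroring the proof of Lemma~\ref{lemma1}. I will use the direct computation, since it avoids carrying the recursion's telescoping terms, and then note that the induction gives the same result as a consistency check.

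Write $T_j = \prod_{k=1}^{j-1}(1-\alpha_k^{(r)})$, dropping the superscript $(r)$. First differentiate $\tilde{\mathrm{I}} = \sum_j \mathrm{c}_j\alpha_j T_j$ with respect to $\alpha_i$. Terms with $j<i$ do not depend on $\alpha_i$ and vanish. The term $j=i$ contributes $\mathrm{c}_i T_i$. Each term with $j>i$ contains the factor $(1-\alpha_i)$ exactly once, so it contributes $-\mathrm{c}_j\alpha_j\prod_{k<j,\,k\neq i}(1-\alpha_k)$. This is precisely the expression stated at the start of the proof of Lemma~\ref{lemma2}.

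Next, multiply by $(1-\alpha_i)$. The tail becomes $-\sum_{j>i}\mathrm{c}_j\alpha_j T_j$, and the leading term becomes $\mathrm{c}_iT_i - \mathrm{c}_i\alpha_iT_i$. The term $-\mathrm{c}_i\alpha_iT_i$ is exactly the $j=i$ summand of $\tilde{\mathrm{I}}$, so the negative pieces combine to $-\sum_{j\ge i}\mathrm{c}_j\alpha_jT_j$. Writing this as $-\tilde{\mathrm{I}} + \sum_{j=1}^{i-1}\mathrm{c}_j\alpha_jT_j$ gives
\[
\dv{\tilde{\mathrm{I}}}{\alpha_i}(1-\alpha_i) = -\tilde{\mathrm{I}} + \sum_{j=1}^{i-1}\mathrm{c}_j\alpha_jT_j + \mathrm{c}_iT_i .
\]
Dividing by $1-\alpha_i$ yields the claim. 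For $i=1$ the middle sum is empty and $T_1=1$, so this recovers Lemma~\ref{lemma1}.

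The only delicate step is the division by $1-\alpha_i^{(r)}$, which requires $\alpha_i^{(r)}\neq 1$. The formula implicitly assumes this, as do Lemmas~\ref{lemma1} and~\ref{lemma2}. It holds in the model because $\alpha_{\mathrm{const}}$ and $\alpha_{\mathrm{MLP}}$ are sigmoid outputs lying in $(0,1)$ and the kernel value is at most $1$, so $\alpha_i^{(r)}<1$. I will state this explicitly. Beyond that, the main care point is the index bookkeeping when absorbing the $j=i$ summand into $\tilde{\mathrm{I}}$.

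For the inductive check, assume the formula for $i$. Substituting it into Lemma~\ref{lemma2}, the numerator becomes $-\tilde{\mathrm{I}}+\sum_{j<i}\mathrm{c}_j\alpha_jT_j+\mathrm{c}_iT_i+(\mathrm{c}_{i+1}-\mathrm{c}_i)T_{i+1}$. Since $T_{i+1}=T_i(1-\alpha_i)$, we have $\mathrm{c}_iT_i-\mathrm{c}_iT_{i+1}=\mathrm{c}_i\alpha_iT_i$, which extends the sum to $j\le i$ and leaves $\mathrm{c}_{i+1}T_{i+1}$. This agrees with the direct computation.
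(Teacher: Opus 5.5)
Your proof is correct, but it does not follow the paper's route for this lemma. The paper argues by induction on $i$. The base case is Lemma~\ref{lemma1}. In the inductive step, the paper rewrites the claimed right-hand side for $i+1$ as $\frac{1-\alpha_i}{1-\alpha_{i+1}}$ times the claimed right-hand side for $i$, plus $\frac{(\mathrm{c}_{i+1}-\mathrm{c}_i)\prod_{k=1}^{i}(1-\alpha_k)}{1-\alpha_{i+1}}$. It then applies the inductive hypothesis and the recursion of Lemma~\ref{lemma2}; this is your consistency check run backwards.

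You instead go directly from the expanded derivative to the closed form. You multiply by $1-\alpha_i$, absorb the $j=i$ summand into the tail, and recognise $\sum_{j\ge i}\mathrm{c}_j\alpha_jT_j$ as $\tilde{\mathrm{I}}$ minus the prefix sum. The identity $(1-\alpha_i)\,\partial\tilde{\mathrm{I}}/\partial\alpha_i=\mathrm{c}_iT_i-\sum_{j\ge i}\mathrm{c}_j\alpha_jT_j$ already appears as an intermediate line in the paper's proof of Lemma~\ref{lemma2}. The paper never closes it off against $\tilde{\mathrm{I}}$ and detours through the recursion instead.

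Your route has three advantages:
\begin{itemize}
\item It is shorter and self-contained.
\item It treats every $i$ uniformly, including $i=|\mathcal{G}^{(r)}|$, where the tail is empty.
\item It makes Lemma~\ref{lemma2} a corollary rather than a prerequisite, since subtracting consecutive instances of the closed form recovers it.
\end{itemize}
The paper's route mainly isolates the one-step recursion as a stand-alone identity. The $\mathcal{O}(1)$-per-sample evaluation claimed afterwards needs only the closed form plus a prefix sum and prefix product.

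Your explicit justification of the division is a point both proofs rely on: $\alpha_i<1$ because it is a product of two sigmoid outputs and a kernel value at most $1$. The paper addresses this only later, in its remark about terminating rays once the transmittance drops below a threshold. That remark really concerns the floating-point version of the same issue.
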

\begin{proof}
We will prove the lemma by induction. As for the base case it immediately follows from Lemma~\ref{lemma1}. Let us assume then that for some fixed $i \ge 1$ we have:
$$
\dv{\mathrm{\tilde{I}}^{\left( r \right)}}{\alpha_i^{\left( r \right)}} = \frac{-\mathrm{\tilde{I}}^{\left( r \right)} + \sum\limits_{j=1}^{i-1} \mathrm{c}_j^{\left( r \right)} \alpha_j^{\left( r \right)} \prod\limits_{k=1}^{j-1} \left( 1 - \alpha_k^{\left( r \right)} \right) + \mathrm{c}_i^{\left( r \right)} \prod\limits_{k=1}^{i-1} \left( 1 - \alpha_k^{\left( r \right)} \right)}{1 - \alpha_i^{\left( r \right)}}
$$
We will show that:
$$
\dv{\mathrm{\tilde{I}}^{\left( r \right)}}{\alpha_{i+1}^{\left( r \right)}} = \frac{-\mathrm{\tilde{I}}^{\left( r \right)} + \sum\limits_{j=1}^i \mathrm{c}_j^{\left( r \right)} \alpha_j^{\left( r \right)} \prod\limits_{k=1}^{j-1} \left( 1 - \alpha_k^{\left( r \right)} \right) + \mathrm{c}_{i+1}^{\left( r \right)} \prod\limits_{k=1}^i \left( 1 - \alpha_k^{\left( r \right)} \right)}{1 - \alpha_{i+1}^{\left( r \right)}}
$$
Rewriting the RHS yields:
\begin{flalign*}
& \frac{-\mathrm{\tilde{I}}^{\left( r \right)} + \sum\limits_{j=1}^i \mathrm{c}_j^{\left( r \right)} \alpha_j^{\left( r \right)} \prod\limits_{k=1}^{j-1} \left( 1 - \alpha_k^{\left( r \right)} \right) + \mathrm{c}_{i+1}^{\left( r \right)} \prod\limits_{k=1}^i \left( 1 - \alpha_k^{\left( r \right)} \right)}{1 - \alpha_{i+1}^{\left( r \right)}} =&\\
&= \frac{-\mathrm{\tilde{I}}^{\left( r \right)} + \left( \sum\limits_{j=1}^{i-1} \mathrm{c}_j^{\left( r \right)} \alpha_j^{\left( r \right)} \prod\limits_{k=1}^{j-1} \left( 1 - \alpha_k^{\left( r \right)} \right) + \mathrm{c}_i^{\left( r \right)} \alpha_i^{\left( r \right)} \prod\limits_{k=1}^{i-1} \left( 1 - \alpha_k^{\left( r \right)} \right) \right) + \mathrm{c}_{i+1}^{\left( r \right)} \prod\limits_{k=1}^i \left( 1 - \alpha_k^{\left( r \right)} \right) + 0}{1 - \alpha_{i+1}^{\left( r \right)}} =&\\
&= \frac{-\mathrm{\tilde{I}}^{\left( r \right)} + \sum\limits_{j=1}^{i-1} \mathrm{c}_j^{\left( r \right)} \alpha_j^{\left( r \right)} \prod\limits_{k=1}^{j-1} \left( 1 - \alpha_k^{\left( r \right)} \right) + \mathrm{c}_i^{\left( r \right)} \alpha_i^{\left( r \right)} \prod\limits_{k=1}^{i-1} \left( 1 - \alpha_k^{\left( r \right)} \right) + \mathrm{c}_{i+1}^{\left( r \right)} \prod\limits_{k=1}^i \left( 1 - \alpha_k^{\left( r \right)} \right) + 0 \prod\limits_{k=1}^{i-1} \left( 1 - \alpha_k^{\left( r \right)} \right)}{1 - \alpha_{i+1}^{\left( r \right)}} =&\\
&= \frac{-\mathrm{\tilde{I}}^{\left( r \right)} + \sum\limits_{j=1}^{i-1} \mathrm{c}_j^{\left( r \right)} \alpha_j^{\left( r \right)} \prod\limits_{k=1}^{j-1} \left( 1 - \alpha_k^{\left( r \right)} \right) + \mathrm{c}_i^{\left( r \right)} \alpha_i^{\left( r \right)} \prod\limits_{k=1}^{i-1} \left( 1 - \alpha_k^{\left( r \right)} \right) + \mathrm{c}_{i+1}^{\left( r \right)} \prod\limits_{k=1}^i \left( 1 - \alpha_k^{\left( r \right)} \right) + \left( \mathrm{c}_i^{\left( r \right)} - \mathrm{c}_i^{\left( r \right)} \right) \prod\limits_{k=1}^{i-1} \left( 1 - \alpha_k^{\left( r \right)} \right)}{1 - \alpha_{i+1}^{\left( r \right)}} =&\\
&= \frac{-\mathrm{\tilde{I}}^{\left( r \right)} + \sum\limits_{j=1}^{i-1} \mathrm{c}_j^{\left( r \right)} \alpha_j^{\left( r \right)} \prod\limits_{k=1}^{j-1} \left( 1 - \alpha_k^{\left( r \right)} \right) + \mathrm{c}_i^{\left( r \right)} \prod\limits_{k=1}^{i-1} \left( 1 - \alpha_k^{\left( r \right)} \right) + \left( \mathrm{c}_i^{\left( r \right)} \alpha_i^{\left( r \right)} - \mathrm{c}_i^{\left( r \right)} \right) \prod\limits_{k=1}^{i-1} \left( 1 - \alpha_k^{\left( r \right)} \right) + \mathrm{c}_{i+1}^{\left( r \right)} \prod\limits_{k=1}^i \left( 1 - \alpha_k^{\left( r \right)} \right)}{1 - \alpha_{i+1}^{\left( r \right)}} =&\\
&= \frac{-\mathrm{\tilde{I}}^{\left( r \right)} + \sum\limits_{j=1}^{i-1} \mathrm{c}_j^{\left( r \right)} \alpha_j^{\left( r \right)} \prod\limits_{k=1}^{j-1} \left( 1 - \alpha_k^{\left( r \right)} \right) + \mathrm{c}_i^{\left( r \right)} \prod\limits_{k=1}^{i-1} \left( 1 - \alpha_k^{\left( r \right)} \right) - \mathrm{c}_i^{\left( r \right)} \left( 1 - \alpha_i^{\left( r \right)} \right) \prod\limits_{k=1}^{i-1} \left( 1 - \alpha_k^{\left( r \right)} \right) + \mathrm{c}_{i+1}^{\left( r \right)} \left( 1 - \alpha_i^{\left( r \right)} \right) \prod\limits_{k=1}^{i-1} \left( 1 - \alpha_k^{\left( r \right)} \right)}{1 - \alpha_{i+1}^{\left( r \right)}} =&\\
&= \frac{-\mathrm{\tilde{I}}^{\left( r \right)} + \sum\limits_{j=1}^{i-1} \mathrm{c}_j^{\left( r \right)} \alpha_j^{\left( r \right)} \prod\limits_{k=1}^{j-1} \left( 1 - \alpha_k^{\left( r \right)} \right) + \mathrm{c}_i^{\left( r \right)} \prod\limits_{k=1}^{i-1} \left( 1 - \alpha_k^{\left( r \right)} \right) + \left( -\mathrm{c}_i^{\left( r \right)} \left( 1 - \alpha_i^{\left( r \right)} \right) + \mathrm{c}_{i+1}^{\left( r \right)} \left( 1 - \alpha_i^{\left( r \right)} \right) \right) \prod\limits_{k=1}^{i-1} \left( 1 - \alpha_k^{\left( r \right)} \right)}{1 - \alpha_{i+1}^{\left( r \right)}} =&\\
&= \frac{-\mathrm{\tilde{I}}^{\left( r \right)} + \sum\limits_{j=1}^{i-1} \mathrm{c}_j^{\left( r \right)} \alpha_j^{\left( r \right)} \prod\limits_{k=1}^{j-1} \left( 1 - \alpha_k^{\left( r \right)} \right) + \mathrm{c}_i^{\left( r \right)} \prod\limits_{k=1}^{i-1} \left( 1 - \alpha_k^{\left( r \right)} \right)}{1 - \alpha_{i+1}^{\left( r \right)}} + \frac{ \left( \mathrm{c}_{i+1}^{\left( r \right)} - \mathrm{c}_i^{\left( r \right)} \right) \left( 1 - \alpha_i^{\left( r \right)} \right) \prod\limits_{k=1}^{i-1} \left( 1 - \alpha_k^{\left( r \right)} \right)}{1 - \alpha_{i+1}^{\left( r \right)}} =&\\
&= \frac{-\mathrm{\tilde{I}}^{\left( r \right)} + \sum\limits_{j=1}^{i-1} \mathrm{c}_j^{\left( r \right)} \alpha_j^{\left( r \right)} \prod\limits_{k=1}^{j-1} \left( 1 - \alpha_k^{\left( r \right)} \right) + \mathrm{c}_i^{\left( r \right)} \prod\limits_{k=1}^{i-1} \left( 1 - \alpha_k^{\left( r \right)} \right)}{1 - \alpha_{i+1}^{\left( r \right)}} \cdot 1 + \frac{ \left( \mathrm{c}_{i+1}^{\left( r \right)} - \mathrm{c}_i^{\left( r \right)} \right) \prod\limits_{k=1}^i \left( 1 - \alpha_k^{\left( r \right)} \right)}{1 - \alpha_{i+1}^{\left( r \right)}} =&\\
&= \frac{-\mathrm{\tilde{I}}^{\left( r \right)} + \sum\limits_{j=1}^{i-1} \mathrm{c}_j^{\left( r \right)} \alpha_j^{\left( r \right)} \prod\limits_{k=1}^{j-1} \left( 1 - \alpha_k^{\left( r \right)} \right) + \mathrm{c}_i^{\left( r \right)} \prod\limits_{k=1}^{i-1} \left( 1 - \alpha_k^{\left( r \right)} \right)}{1 - \alpha_{i+1}^{\left( r \right)}} \cdot \frac{1 - \alpha_i^{\left( r \right)}}{1 - \alpha_i^{\left( r \right)}} + \frac{ \left( \mathrm{c}_{i+1}^{\left( r \right)} - \mathrm{c}_i^{\left( r \right)} \right) \prod\limits_{k=1}^i \left( 1 - \alpha_k^{\left( r \right)} \right)}{1 - \alpha_{i+1}^{\left( r \right)}} =&\\
&= \frac{-\mathrm{\tilde{I}}^{\left( r \right)} + \sum\limits_{j=1}^{i-1} \mathrm{c}_j^{\left( r \right)} \alpha_j^{\left( r \right)} \prod\limits_{k=1}^{j-1} \left( 1 - \alpha_k^{\left( r \right)} \right) + \mathrm{c}_i^{\left( r \right)} \prod\limits_{k=1}^{i-1} \left( 1 - \alpha_k^{\left( r \right)} \right)}{1 - \alpha_i^{\left( r \right)}} \cdot \frac{1 - \alpha_i^{\left( r \right)}}{1 - \alpha_{i+1}^{\left( r \right)}} + \frac{ \left( \mathrm{c}_{i+1}^{\left( r \right)} - \mathrm{c}_i^{\left( r \right)} \right) \prod\limits_{k=1}^i \left( 1 - \alpha_k^{\left( r \right)} \right)}{1 - \alpha_{i+1}^{\left( r \right)}}
\end{flalign*}
By inductive hypothesis we get:
$$
\frac{-\mathrm{\tilde{I}}^{\left( r \right)} + \sum\limits_{j=1}^i \mathrm{c}_j^{\left( r \right)} \alpha_j^{\left( r \right)} \prod\limits_{k=1}^{j-1} \left( 1 - \alpha_k^{\left( r \right)} \right) + \mathrm{c}_{i+1}^{\left( r \right)} \prod\limits_{k=1}^i \left( 1 - \alpha_k^{\left( r \right)} \right)}{1 - \alpha_i^{\left( r \right)}} = \dv{\mathrm{\tilde{I}}^{\left( r \right)}}{\alpha_i^{\left( r \right)}}
$$
Therefore:
\begin{flalign*}
& \frac{-\mathrm{\tilde{I}}^{\left( r \right)} + \sum\limits_{j=1}^i \mathrm{c}_j^{\left( r \right)} \alpha_j^{\left( r \right)} \prod\limits_{k=1}^{j-1} \left( 1 - \alpha_k^{\left( r \right)} \right) + \mathrm{c}_{i+1}^{\left( r \right)} \prod\limits_{k=1}^i \left( 1 - \alpha_k^{\left( r \right)} \right)}{1 - \alpha_{i+1}^{\left( r \right)}} =&\\
&= \dv{\mathrm{\tilde{I}}^{\left( r \right)}}{\alpha_i^{\left( r \right)}} \cdot \frac{1 - \alpha_i^{\left( r \right)}}{1 - \alpha_{i+1}^{\left( r \right)}} + \frac{ \left( \mathrm{c}_{i+1}^{\left( r \right)} - \mathrm{c}_i^{\left( r \right)} \right) \prod\limits_{k=1}^i \left( 1 - \alpha_k^{\left( r \right)} \right)}{1 - \alpha_{i+1}^{\left( r \right)}} =&\\
&= \frac{ \dv{\mathrm{\tilde{I}}^{\left( r \right)}}{\alpha_i^{\left( r \right)}} \cdot \left( 1 - \alpha_i^{\left( r \right)} \right) + \left( \mathrm{c}_{i+1}^{\left( r \right)} - \mathrm{c}_i^{\left( r \right)} \right) \prod\limits_{k=1}^i \left( 1 - \alpha_k^{\left( r \right)} \right)}{1 - \alpha_{i+1}^{\left( r \right)}}
\end{flalign*} 
Thus by Lemma~\ref{lemma2}:
$$
\frac{-\mathrm{\tilde{I}}^{\left( r \right)} + \sum\limits_{j=1}^i \mathrm{c}_j^{\left( r \right)} \alpha_j^{\left( r \right)} \prod\limits_{k=1}^{j-1} \left( 1 - \alpha_k^{\left( r \right)} \right) + \mathrm{c}_{i+1}^{\left( r \right)} \prod\limits_{k=1}^i \left( 1 - \alpha_k^{\left( r \right)} \right)}{1 - \alpha_{i+1}^{\left( r \right)}} = \dv{\mathrm{\tilde{I}}^{\left( r \right)}}{\alpha_{i+1}^{\left( r \right)}}
$$
Consequently by mathematical induction, the Lemma~\ref{lemma3} holds for all $i \le \left \lvert \mathcal{G} \left( r \right) \right \rvert$.
\qedhere
\end{proof}
\noindent
\newline
Finally, let us note that while computing $\dv{\mathrm{\tilde{I}}^{\left( r \right)}}{\alpha_i^{\left( r \right)}}$ for each $i \in \left \lbrace 1, \ldots, \left \lvert \mathcal{G} \left( r \right) \right \rvert \right \rbrace$ in the naïve manner directly from the gradient formula:
$$
\dv{\mathrm{\tilde{I}}^{\left( r \right)}}{\alpha_i^{\left( r \right)}} =
\mathrm{c}_i^{\left( r \right)} \prod\limits_{j=1}^{i-1} \left( 1 - \alpha_j^{\left( r \right)} \right) - \sum\limits_{j=i+1}^{\left \lvert \mathcal{G} \left( r \right) \right \rvert} \mathrm{c}_j^{\left( r \right)} \alpha_j^{\left( r \right)} \prod\limits_{\substack{k=1 \\ k \ne i }}^{j-1} \left( 1 - \alpha_k^{\left( r \right)} \right)
$$
requires $\mathcal{O}(N)$ operations for single $i \in \left \lbrace 1, \ldots, \left \lvert \mathcal{G} \left( r \right) \right \rvert \right \rbrace$, by employing the above lemmas along with a single prefix sum and a single prefix product, we can perform each step in constant time $\mathcal{O}(1)$, yielding a total of $\mathcal{O}(N)$ steps for the entire ray. Finally, let us note that the formula given by Lemma~\ref{lemma3} is undefined when $\alpha_i^{\left( r \right)} = 1$. However, since in our algorithm we terminate the ray whenever the transmittance $T_i^{\left( r \right)}$ falls below a certain configurable threshold $\varepsilon$, we assume that the $i$-th intersected planar Gaussian primitive for which $T_i^{\left( r \right)} < \varepsilon$ is the last primitive intersected by the ray and apply to it the regular formula above, which does not require division by $1 - \alpha_i^{\left( r \right)}$, thereby protecting us against division by zero.
\noindent
\newline
\paragraph{Computing $\dv{\alpha^{\left( r \right)}_i}{\tilde{\alpha}^{\left( r \right)}_{\text{const},i}}$}
\noindent
\newline
\begin{flalign*}
\dv{\alpha^{\left( r \right)}_i}{\tilde{\alpha}^{\left( r \right)}_{\text{const},i}} &=
\dv{}{\tilde{\alpha}^{\left( r \right)}_{\text{const},i}} \left( \alpha^{\left( r \right)}_{\text{const},i} \cdot \alpha^{\left( r \right)}_{\text{MLP},i} \cdot \exp \left( -\frac{ { \left( \left( u^{ \left( r \right)}_i \right)^2+\left( v^{\left( r \right)}_i \right)^2 \right) }^{\kappa^{\left( r \right)}_i} }{2\kappa^{\left( r \right)}_i} \right) \right) =&\\
&= \alpha^{\left( r \right)}_{\text{MLP},i} \cdot \exp \left( -\frac{ { \left( \left( u^{ \left( r \right)}_i \right)^2+\left( v^{\left( r \right)}_i \right)^2 \right) }^{\kappa^{\left( r \right)}_i} }{2\kappa^{\left( r \right)}_i} \right) \cdot \dv{\alpha^{\left( r \right)}_{\text{const},i}}{\tilde{\alpha}^{\left( r \right)}_{\text{const},i}} =&\\
&= \alpha^{\left( r \right)}_{\text{MLP},i} \cdot \exp \left( -\frac{ { \left( \left( u^{ \left( r \right)}_i \right)^2+\left( v^{\left( r \right)}_i \right)^2 \right) }^{\kappa^{\left( r \right)}_i} }{2\kappa^{\left( r \right)}_i} \right) \cdot \dv{}{\tilde{\alpha}^{\left( r \right)}_{\text{const},i}} \sigma \left( \tilde{\alpha}^{\left( r \right)}_{\text{const},i} \right) =&\\
&= \alpha^{\left( r \right)}_{\text{MLP},i} \cdot \exp \left( -\frac{ { \left( \left( u^{ \left( r \right)}_i \right)^2+\left( v^{\left( r \right)}_i \right)^2 \right) }^{\kappa^{\left( r \right)}_i} }{2\kappa^{\left( r \right)}_i} \right) \cdot \dv{}{\tilde{\alpha}^{\left( r \right)}_{\text{const},i}} \left( \frac{1}{1 + e^{-\tilde{\alpha}^{\left( r \right)}_{\text{const},i}}}\right) =&\\
&= \alpha^{\left( r \right)}_{\text{MLP},i} \cdot \exp \left( -\frac{ { \left( \left( u^{ \left( r \right)}_i \right)^2+\left( v^{\left( r \right)}_i \right)^2 \right) }^{\kappa^{\left( r \right)}_i} }{2\kappa^{\left( r \right)}_i} \right) \cdot \left( -\frac{1}{\left( 1 + e^{-\tilde{\alpha}^{\left( r \right)}_{\text{const},i}} \right)^2} \right) \cdot \left( -e^{-\tilde{\alpha}^{\left( r \right)}_{\text{const},i}} \right) =&\\
&= \alpha^{\left( r \right)}_{\text{MLP},i} \cdot \exp \left( -\frac{ { \left( \left( u^{ \left( r \right)}_i \right)^2+\left( v^{\left( r \right)}_i \right)^2 \right) }^{\kappa^{\left( r \right)}_i} }{2\kappa^{\left( r \right)}_i} \right) \cdot \frac{e^{-\tilde{\alpha}^{\left( r \right)}_{\text{const},i}}}{\left( 1 + e^{-\tilde{\alpha}^{\left( r \right)}_{\text{const},i}} \right)^2} =&\\
&= \alpha^{\left( r \right)}_{\text{MLP},i} \cdot \exp \left( -\frac{ { \left( \left( u^{ \left( r \right)}_i \right)^2+\left( v^{\left( r \right)}_i \right)^2 \right) }^{\kappa^{\left( r \right)}_i} }{2\kappa^{\left( r \right)}_i} \right) \cdot \frac{e^{-\tilde{\alpha}^{\left( r \right)}_{\text{const},i}} + 0}{\left( 1 + e^{-\tilde{\alpha}^{\left( r \right)}_{\text{const},i}} \right)^2} =&\\
&= \alpha^{\left( r \right)}_{\text{MLP},i} \cdot \exp \left( -\frac{ { \left( \left( u^{ \left( r \right)}_i \right)^2+\left( v^{\left( r \right)}_i \right)^2 \right) }^{\kappa^{\left( r \right)}_i} }{2\kappa^{\left( r \right)}_i} \right) \cdot \frac{e^{-\tilde{\alpha}^{\left( r \right)}_{\text{const},i}} + \left( 1 - 1 \right)}{\left( 1 + e^{-\tilde{\alpha}^{\left( r \right)}_{\text{const},i}} \right)^2} =&\\
&= \alpha^{\left( r \right)}_{\text{MLP},i} \cdot \exp \left( -\frac{ { \left( \left( u^{ \left( r \right)}_i \right)^2+\left( v^{\left( r \right)}_i \right)^2 \right) }^{\kappa^{\left( r \right)}_i} }{2\kappa^{\left( r \right)}_i} \right) \cdot \frac{\left( 1 + e^{-\tilde{\alpha}^{\left( r \right)}_{\text{const},i}} \right) - 1}{\left( 1 + e^{-\tilde{\alpha}^{\left( r \right)}_{\text{const},i}} \right)^2} =&\\
&= \alpha^{\left( r \right)}_{\text{MLP},i} \cdot \exp \left( -\frac{ { \left( \left( u^{ \left( r \right)}_i \right)^2+\left( v^{\left( r \right)}_i \right)^2 \right) }^{\kappa^{\left( r \right)}_i} }{2\kappa^{\left( r \right)}_i} \right) \cdot \left( \frac{1}{1 + e^{-\tilde{\alpha}^{\left( r \right)}_{\text{const},i}}} - \frac{1}{\left( 1 + e^{-\tilde{\alpha}^{\left( r \right)}_{\text{const},i}} \right)^2} \right) =&\\
&= \alpha^{\left( r \right)}_{\text{MLP},i} \cdot \exp \left( -\frac{ { \left( \left( u^{ \left( r \right)}_i \right)^2+\left( v^{\left( r \right)}_i \right)^2 \right) }^{\kappa^{\left( r \right)}_i} }{2\kappa^{\left( r \right)}_i} \right) \cdot \frac{1}{1 + e^{-\tilde{\alpha}^{\left( r \right)}_{\text{const},i}}} \cdot \left( 1 - \frac{1}{1 + e^{-\tilde{\alpha}^{\left( r \right)}_{\text{const},i}}} \right) =&\\
&= \alpha^{\left( r \right)}_{\text{MLP},i} \cdot \exp \left( -\frac{ { \left( \left( u^{ \left( r \right)}_i \right)^2+\left( v^{\left( r \right)}_i \right)^2 \right) }^{\kappa^{\left( r \right)}_i} }{2\kappa^{\left( r \right)}_i} \right) \cdot \sigma \left( \tilde{\alpha}^{\left( r \right)}_{\text{const},i} \right) \cdot \left( 1 - \sigma \left( \tilde{\alpha}^{\left( r \right)}_{\text{const},i} \right) \right) =&\\
&= \alpha^{\left( r \right)}_{\text{MLP},i} \cdot \exp \left( -\frac{ { \left( \left( u^{ \left( r \right)}_i \right)^2+\left( v^{\left( r \right)}_i \right)^2 \right) }^{\kappa^{\left( r \right)}_i} }{2\kappa^{\left( r \right)}_i} \right) \cdot \alpha^{\left( r \right)}_{\text{const},i} \cdot \left( 1 - \alpha^{\left( r \right)}_{\text{const},i} \right) =&\\
&= \alpha^{\left( r \right)}_{\text{const},i} \cdot \alpha^{\left( r \right)}_{\text{MLP},i} \cdot \exp \left( -\frac{ { \left( \left( u^{ \left( r \right)}_i \right)^2+\left( v^{\left( r \right)}_i \right)^2 \right) }^{\kappa^{\left( r \right)}_i} }{2\kappa^{\left( r \right)}_i} \right) \cdot \left( 1 - \alpha^{\left( r \right)}_{\text{const},i} \right) =&\\
&= \alpha^{\left( r \right)}_i \cdot \left( 1 - \alpha^{\left( r \right)}_{\text{const},i} \right)
\end{flalign*}

\paragraph{Computing $\dv{\alpha^{\left( r \right)}_i}{\tilde{\kappa}_i^{\left( r \right)}}$}
\noindent
\newline
{
\small
\begin{flalign*}
& \dv{\alpha^{\left( r \right)}_i}{\tilde{\kappa}_i^{\left( r \right)}} =&\\
&= \dv{}{\tilde{\kappa}_i^{\left( r \right)}} \left( \alpha^{\left( r \right)}_{\text{const},i} \cdot \alpha^{\left( r \right)}_{\text{MLP},i} \cdot \exp \left( -\frac{ { \left( \left( u^{ \left( r \right)}_i \right)^2+\left( v^{\left( r \right)}_i \right)^2 \right) }^{\kappa^{\left( r \right)}_i} }{2\kappa^{\left( r \right)}_i} \right) \right) =&\\
&= \alpha^{\left( r \right)}_{\text{const},i} \cdot \alpha^{\left( r \right)}_{\text{MLP},i} \cdot \dv{}{\tilde{\kappa}^{\left( r \right)}_i} \exp \left( -\frac{ { \left( \left( u^{ \left( r \right)}_i \right)^2+\left( v^{\left( r \right)}_i \right)^2 \right) }^{\kappa^{\left( r \right)}_i} }{2\kappa^{\left( r \right)}_i} \right) =&\\
&= \alpha^{\left( r \right)}_{\text{const},i} \cdot \alpha^{\left( r \right)}_{\text{MLP},i} \cdot \exp \left( -\frac{ { \left( \left( u^{ \left( r \right)}_i \right)^2+\left( v^{\left( r \right)}_i \right)^2 \right) }^{\kappa^{\left( r \right)}_i} }{2\kappa^{\left( r \right)}_i} \right) \cdot \dv{}{\tilde{\kappa}^{\left( r \right)}_i} \left( -\frac{ { \left( \left( u^{ \left( r \right)}_i \right)^2+\left( v^{\left( r \right)}_i \right)^2 \right) }^{\kappa^{\left( r \right)}_i} }{2\kappa^{\left( r \right)}_i} \right) =&\\
&= -\alpha^{\left( r \right)}_i \cdot \dv{}{\tilde{\kappa}^{\left( r \right)}_i} \left( \frac{ { \left( \left( u^{ \left( r \right)}_i \right)^2+\left( v^{\left( r \right)}_i \right)^2 \right) }^{\kappa^{\left( r \right)}_i} }{2\kappa^{\left( r \right)}_i} \right) =&\\
&= -\alpha^{\left( r \right)}_i \cdot \left( \frac{ \dv{}{\tilde{\kappa}^{\left( r \right)}_i} \left( { \left( \left( u^{ \left( r \right)}_i \right)^2+\left( v^{\left( r \right)}_i \right)^2 \right) }^{\kappa^{\left( r \right)}_i} \right) \cdot \left( 2\kappa^{\left( r \right)}_i \right) - { \left( \left( u^{ \left( r \right)}_i \right)^2+\left( v^{\left( r \right)}_i \right)^2 \right) }^{\kappa^{\left( r \right)}_i} \cdot \dv{}{\tilde{\kappa}^{\left( r \right)}_i} \left( 2\kappa^{\left( r \right)}_i \right) }{ \left( 2\kappa^{\left( r \right)}_i \right)^2 } \right) =&\\
&= -\alpha^{\left( r \right)}_i \cdot \frac{ \left( { \left( \left( u^{ \left( r \right)}_i \right)^2+\left( v^{\left( r \right)}_i \right)^2 \right) }^{\kappa^{\left( r \right)}_i} \cdot \ln \left( \left( u^{ \left( r \right)}_i \right)^2+\left( v^{\left( r \right)}_i \right)^2 \right) \cdot \dv{\kappa^{\left( r \right)}_i}{\tilde{\kappa}^{\left( r \right)}_i} \right) \cdot \left( 2\kappa^{\left( r \right)}_i \right) - \left( 2 \cdot { \left( \left( u^{ \left( r \right)}_i \right)^2+\left( v^{\left( r \right)}_i \right)^2 \right) }^{\kappa^{\left( r \right)}_i} \cdot \dv{\kappa^{\left( r \right)}_i}{ \tilde{\kappa}^{\left( r \right)}_i} \right) }{ 4 \left( \kappa^{\left( r \right)}_i \right)^2 } =&\\
&= -\alpha^{\left( r \right)}_i \cdot \frac{ 2{\kappa^{\left( r \right)}_i} \cdot { \left( \left( u^{ \left( r \right)}_i \right)^2+\left( v^{\left( r \right)}_i \right)^2 \right) }^{\kappa^{\left( r \right)}_i} \cdot \ln \left( \left( u^{ \left( r \right)}_i \right)^2+\left( v^{\left( r \right)}_i \right)^2 \right) \cdot \dv{\kappa^{\left( r \right)}_i}{\tilde{\kappa}^{\left( r \right)}_i} - 2 \cdot { \left( \left( u^{ \left( r \right)}_i \right)^2+\left( v^{\left( r \right)}_i \right)^2 \right) }^{\kappa^{\left( r \right)}_i} \cdot \dv{\kappa^{\left( r \right)}_i}{ \tilde{\kappa}^{\left( r \right)}_i} }{ 4 \left( \kappa^{\left( r \right)}_i \right)^2 } =&\\
&= -\alpha^{\left( r \right)}_i \cdot \frac{ \kappa^{\left( r \right)}_i \cdot { \left( \left( u^{ \left( r \right)}_i \right)^2+\left( v^{\left( r \right)}_i \right)^2 \right) }^{\kappa^{\left( r \right)}_i} \cdot \ln \left( \left( u^{ \left( r \right)}_i \right)^2+\left( v^{\left( r \right)}_i \right)^2 \right) \cdot \dv{\kappa^{\left( r \right)}_i}{\tilde{\kappa}^{\left( r \right)}_i} - { \left( \left( u^{ \left( r \right)}_i \right)^2+\left( v^{\left( r \right)}_i \right)^2 \right) }^{\kappa^{\left( r \right)}_i} \cdot \dv{\kappa^{\left( r \right)}_i}{ \tilde{\kappa}^{\left( r \right)}_i} }{ 2 \cdot \left( \kappa^{\left( r \right)}_i \right)^2 } =&\\
&= \alpha^{\left( r \right)}_i \cdot \frac{ { \left( \left( u^{ \left( r \right)}_i \right)^2+\left( v^{\left( r \right)}_i \right)^2 \right) }^{\kappa^{\left( r \right)}_i} }{ 2 \kappa^{\left( r \right)}_i } \cdot \left( \frac{1}{ \kappa^{\left( r \right)}_i } - \ln \left( \left( u^{ \left( r \right)}_i \right)^2+\left( v^{\left( r \right)}_i \right)^2 \right) \right) \cdot \dv{\kappa^{\left( r \right)}_i}{ \tilde{\kappa}^{\left( r \right)}_i} =&\\
&= \alpha^{\left( r \right)}_i \cdot \frac{ { \left( \left( u^{ \left( r \right)}_i \right)^2+\left( v^{\left( r \right)}_i \right)^2 \right) }^{\kappa^{\left( r \right)}_i} }{ 2 \kappa^{\left( r \right)}_i } \cdot \left( \frac{1}{ \kappa^{\left( r \right)}_i } - \ln \left( \left( u^{ \left( r \right)}_i \right)^2+\left( v^{\left( r \right)}_i \right)^2 \right) \right) \cdot \dv{}{\tilde{\kappa}^{\left( r \right)}_i} \left( \ln \left( 1 + \exp \left( \tilde{\kappa}^{\left( r \right)}_i \right) \right)\right)=&\\
&= \alpha^{\left( r \right)}_i \cdot \frac{ { \left( \left( u^{ \left( r \right)}_i \right)^2+\left( v^{\left( r \right)}_i \right)^2 \right) }^{\kappa^{\left( r \right)}_i} }{ 2 \kappa^{\left( r \right)}_i } \cdot \left( \frac{1}{ \kappa^{\left( r \right)}_i } - \ln \left( \left( u^{ \left( r \right)}_i \right)^2+\left( v^{\left( r \right)}_i \right)^2 \right) \right) \cdot \frac{\exp \left( \tilde{\kappa}^{\left( r \right)}_i \right)}{1 + \exp \left( \tilde{\kappa}^{\left( r \right)}_i \right)} =&\\
&= \alpha^{\left( r \right)}_i \cdot \frac{ { \left( \left( u^{ \left( r \right)}_i \right)^2+\left( v^{\left( r \right)}_i \right)^2 \right) }^{\kappa^{\left( r \right)}_i} }{ 2 \kappa^{\left( r \right)}_i } \cdot \left( \frac{1}{ \kappa^{\left( r \right)}_i } - \ln \left( \left( u^{ \left( r \right)}_i \right)^2+\left( v^{\left( r \right)}_i \right)^2 \right) \right) \cdot \frac{\exp \left( \tilde{\kappa}^{\left( r \right)}_i \right)}{1 + \exp \left( \tilde{\kappa}^{\left( r \right)}_i \right)} \cdot 1 =&\\
&= \alpha^{\left( r \right)}_i \cdot \frac{ { \left( \left( u^{ \left( r \right)}_i \right)^2+\left( v^{\left( r \right)}_i \right)^2 \right) }^{\kappa^{\left( r \right)}_i} }{ 2 \kappa^{\left( r \right)}_i } \cdot \left( \frac{1}{ \kappa^{\left( r \right)}_i } - \ln \left( \left( u^{ \left( r \right)}_i \right)^2+\left( v^{\left( r \right)}_i \right)^2 \right) \right) \cdot \frac{\exp \left( \tilde{\kappa}^{\left( r \right)}_i \right)}{1 + \exp \left( \tilde{\kappa}^{\left( r \right)}_i \right)} \cdot \frac{ \frac{1}{\exp \left( \tilde{\kappa}^{\left( r \right)}_i \right)} }{ \frac{1}{\exp \left( \tilde{\kappa}^{\left( r \right)}_i \right)} } =&\\
&= \alpha^{\left( r \right)}_i \cdot \frac{ { \left( \left( u^{ \left( r \right)}_i \right)^2+\left( v^{\left( r \right)}_i \right)^2 \right) }^{\kappa^{\left( r \right)}_i} }{ 2 \kappa^{\left( r \right)}_i } \cdot \left( \frac{1}{ \kappa^{\left( r \right)}_i } - \ln \left( \left( u^{ \left( r \right)}_i \right)^2+\left( v^{\left( r \right)}_i \right)^2 \right) \right) \cdot \frac{1}{ \frac{1}{\exp \left( \tilde{\kappa}^{\left( r \right)}_i \right)} + 1 } =&\\
&= \alpha^{\left( r \right)}_i \cdot \frac{ { \left( \left( u^{ \left( r \right)}_i \right)^2+\left( v^{\left( r \right)}_i \right)^2 \right) }^{\kappa^{\left( r \right)}_i} }{ 2 \kappa^{\left( r \right)}_i } \cdot \left( \frac{1}{ \kappa^{\left( r \right)}_i } - \ln \left( \left( u^{ \left( r \right)}_i \right)^2+\left( v^{\left( r \right)}_i \right)^2 \right) \right) \cdot \frac{1}{ \exp \left( -\tilde{\kappa}^{\left( r \right)}_i \right) + 1 } =&\\
&= \alpha^{\left( r \right)}_i \cdot \frac{ { \left( \left( u^{ \left( r \right)}_i \right)^2+\left( v^{\left( r \right)}_i \right)^2 \right) }^{\kappa^{\left( r \right)}_i} }{ 2 \kappa^{\left( r \right)}_i } \cdot \left( \frac{1}{ \kappa^{\left( r \right)}_i } - \ln \left( \left( u^{ \left( r \right)}_i \right)^2+\left( v^{\left( r \right)}_i \right)^2 \right) \right) \cdot \frac{1}{ 1 + \exp \left( -\tilde{\kappa}^{\left( r \right)}_i \right) } =&\\
&= \alpha^{\left( r \right)}_i \cdot \frac{ { \left( \left( u^{ \left( r \right)}_i \right)^2+\left( v^{\left( r \right)}_i \right)^2 \right) }^{\kappa^{\left( r \right)}_i} }{ 2 \kappa^{\left( r \right)}_i } \cdot \left( \frac{1}{ \kappa^{\left( r \right)}_i } - \ln \left( \left( u^{ \left( r \right)}_i \right)^2+\left( v^{\left( r \right)}_i \right)^2 \right) \right) \cdot \sigma \left( \tilde{\kappa}^{\left( r \right)}_i \right) =&\\
\end{flalign*}
}

\subsection{Computing MLP backpropagation gradients}
\noindent
\newline
As detailed in the Subsection \textbf{Global auto-decoder}:
$$
\Phi \left( \left( u^{\left( r \right)}_i, v^{\left( r \right)}_i \right), \mathrm{\vec{d}}^{\left( r \right)}, \mathrm{z}^{\left( r \right)}_i \;; \mathcal{G}, \mathcal{F}, \Theta \right) =
\mathrm{W}_3 \varphi_3 \left( \mathrm{W}_2 \varphi_2 \left( \mathrm{W}_1 \varphi_1 \left(
\begin{bmatrix}
E_{\text{LUT}} \left( u^{\left( r \right)}_i, v^{\left( r \right)}_i \right) \\
\gamma \left( \mathrm{\vec{d}}^{\left( r \right)} \right) \\
\mathrm{z}^{\left( r \right)}_i
\end{bmatrix}
\right) + \mathrm{b}_1 \right) + \mathrm{b}_2 \right) + \mathrm{b}_3
$$
For brevity, we will omit both the superscript $(r)$ (where $r \in \mathcal{B}$ specifies the particular ray from the batch $\mathcal{B}$) and the subscript $i$ (denoting the index of the Gaussian primitive hit by ray $r$). Let us put:
$$
\mathrm{x}_0 \coloneqq \begin{bmatrix}
E_{\text{LUT}} \left( u, v \right) \\
\gamma \left( \mathrm{\vec{d}} \right) \\
\mathrm{z}
\end{bmatrix}
$$
$$
\mathrm{h}_1 \coloneqq \mathrm{W}_1 \varphi_1 \left( \mathrm{x}_0 \right) + \mathrm{b}_1, \; \mathrm{x}_1 \coloneqq \varphi_1 \left( \mathrm{h}_1 \right)
$$
$$
\mathrm{h}_2 \coloneqq \mathrm{W}_2 \varphi_2 \left( \mathrm{x}_1 \right) + \mathrm{b}_2, \; \mathrm{x}_2 \coloneqq \varphi_2 \left( \mathrm{h}_2 \right)
$$
$$
\mathrm{h}_3 \coloneqq \mathrm{W}_3 \varphi_3 \left( \mathrm{x}_2 \right) + \mathrm{b}_3, \; \mathrm{x}_3 \coloneqq \varphi_3 \left( \mathrm{h}_3 \right)
$$
and:
$$
\delta_3 \coloneqq \mathrm{W}_3^{\mathrm{T}} \left( \delta_4 \odot \varphi'_3 \left( \mathrm{h}_3 \right) \right)
$$
$$
\delta_2 \coloneqq \mathrm{W}_2^{\mathrm{T}} \left( \delta_3 \odot \varphi'_2 \left( \mathrm{h}_2 \right) \right)
$$
$$
\delta_1 \coloneqq \mathrm{W}_1^{\mathrm{T}} \left( \delta_2 \odot \varphi'_1 \left( \mathrm{h}_1 \right) \right)
$$
Before deriving formulas for the inner products: $\left \langle \delta_4, \dv{\Phi}{\mathrm{W}_3} \right \rangle$, $\left \langle \delta_4, \dv{\Phi}{\mathrm{b}_3} \right \rangle$, $\left \langle \delta_4, \dv{\Phi}{\mathrm{W}_2} \right \rangle$, $\left \langle \delta_4, \dv{\Phi}{\mathrm{b}_2} \right \rangle$, $\left \langle \delta_4, \dv{\Phi}{\mathrm{W}_1} \right \rangle$, $\left \langle \delta_4, \dv{\Phi}{\mathrm{b}_1} \right \rangle$, $\left \langle \delta_4, \dv{\Phi}{\mathrm{z}} \right \rangle$, $\left \langle \delta_4, \dv{\Phi}{\mathcal{F}} \right \rangle$ that define the gradients of the loss function $\mathcal{L}$ with respect the parameters: $\mathrm{W}_3$, $\mathrm{b}_3$, $\mathrm{W}_2$, $\mathrm{b}_2$, $\mathrm{W}_1$, $\mathrm{b}_1$, $\mathrm{z}$, $\mathcal{F}$ we will show how to compute the auxiliary identities: $\left \langle \delta_4, \dv{\Phi}{\mathrm{x}_3} \right \rangle$, $\left \langle \delta_4, \dv{\Phi}{\mathrm{x}_2} \right \rangle$, $\left \langle \delta_4, \dv{\Phi}{\mathrm{x}_1} \right \rangle$ and $\left \langle \delta_4, \dv{\Phi}{\mathrm{x}_0} \right \rangle$. Before doing so, however, we will state and prove a lemma that, as we shall see later, will prove useful in deriving the relevant formulas.
\newline

\begin{lemma}
\label{lemma4}
Let $\mathrm{a}, \mathrm{b}, \mathrm{c} \in \mathbb{R}^n$. Then the mixed Hadamard product: $\left \langle \mathrm{a}, \mathrm{b} \odot \mathrm{c} \right \rangle$ is invariant under any permutation of the arguments: $\mathrm{a}$, $\mathrm{b}$, and $\mathrm{c}$.
\end{lemma}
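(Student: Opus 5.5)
The plan is to expand the mixed product coordinatewise and reduce the claim to the commutativity of multiplication in $\mathbb{R}$. By the definitions of the standard inner product and the Hadamard product, $(\mathrm{b} \odot \mathrm{c})_k = b_k c_k$. Hence
$$
\left \langle \mathrm{a}, \mathrm{b} \odot \mathrm{c} \right \rangle = \sum_{k=1}^n a_k \left( b_k c_k \right) = \sum_{k=1}^n a_k b_k c_k ,
$$
which is a symmetric trilinear expression in the three vectors.

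For any permutation $\pi$ of the triple $(\mathrm{a}, \mathrm{b}, \mathrm{c})$, apply the same expansion to the permuted product, e.g. $\left \langle \mathrm{b}, \mathrm{c} \odot \mathrm{a} \right \rangle = \sum_k b_k c_k a_k$. Each summand is the product of the same three real numbers $a_k, b_k, c_k$ in a different order. By associativity and commutativity of real multiplication, the summands agree termwise, so the sums are equal.

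To be fully explicit while keeping the argument short, one can check only two generating permutations of $S_3$: the transposition $\mathrm{b} \leftrightarrow \mathrm{c}$ and the transposition $\mathrm{a} \leftrightarrow \mathrm{b}$. The first follows from commutativity of $\odot$. The second follows from the termwise identity $a_k(b_kc_k) = b_k(a_kc_k)$. Since these transpositions generate $S_3$, invariance under every permutation follows.

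There is no real obstacle here; the only care needed is stating the coordinate definitions of $\langle\cdot,\cdot\rangle$ and $\odot$ clearly. In the backpropagation steps that follow, the identity will be used in the form $\langle \delta, \mathrm{u} \odot \mathrm{v} \rangle = \langle \delta \odot \mathrm{v}, \mathrm{u} \rangle$ to move the activation derivatives $\varphi'_\ell(\mathrm{h}_\ell)$ onto the upstream gradient.
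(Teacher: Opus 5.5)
Your proposal is correct and takes essentially the same route as the paper: both expand $\langle \mathrm{a}, \mathrm{b}\odot\mathrm{c}\rangle$ coordinatewise to $\sum_i a_i b_i c_i$ and reduce the invariance to commutativity and associativity of real multiplication. The only cosmetic difference is that the paper checks the two cyclic permutations explicitly and gets the remaining ones from commutativity of $\odot$, whereas you check two transpositions that generate $S_3$.
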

\begin{proof}
\begin{flalign*}
\left \langle \mathrm{a}, \mathrm{b} \odot \mathrm{c} \right \rangle &=
\sum_{i=1}^n \mathrm{a}_i \left( \mathrm{b} \odot \mathrm{c} \right)_i =
\sum_{i=1}^n \mathrm{a}_i \left( \mathrm{b}_i \cdot \mathrm{c}_i \right) =
\sum_{i=1}^n \mathrm{a}_i \cdot \mathrm{b}_i \cdot \mathrm{c}_i =
\sum_{i=1}^n \mathrm{b}_i \cdot \mathrm{c}_i \cdot \mathrm{a}_i =
\sum_{i=1}^n \mathrm{b}_i \cdot \left( \mathrm{c}_i \cdot \mathrm{a}_i \right) =&\\
&= \sum_{i=1}^n \mathrm{b}_i \cdot \left( \mathrm{c} \odot \mathrm{a} \right)_i =
\left \langle \mathrm{b}, \mathrm{c} \odot \mathrm{a} \right \rangle
\end{flalign*}
On the other hand:
\begin{flalign*}
\left \langle \mathrm{a}, \mathrm{b} \odot \mathrm{c} \right \rangle &=
\sum_{i=1}^n \mathrm{a}_i \left( \mathrm{b} \odot \mathrm{c} \right)_i =
\sum_{i=1}^n \mathrm{a}_i \left( \mathrm{b}_i \cdot \mathrm{c}_i \right) =
\sum_{i=1}^n \mathrm{a}_i \cdot \mathrm{b}_i \cdot \mathrm{c}_i =
\sum_{i=1}^n \mathrm{c}_i \cdot \mathrm{a}_i \cdot \mathrm{b}_i =
\sum_{i=1}^n \mathrm{c}_i \cdot \left( \mathrm{a}_i \cdot \mathrm{b}_i \right) =&\\
&= \sum_{i=1}^n \mathrm{c}_i \cdot \left( \mathrm{a} \odot \mathrm{b} \right)_i =
\left \langle \mathrm{c}, \mathrm{a} \odot \mathrm{b} \right \rangle
\end{flalign*}
The proof of invariance under the remaining three possible permutations is trivial and follows immediately from the commutativity of the Hadamard product.
\qedhere
\end{proof}

\paragraph{Computing $\left \langle \delta_4, \dv{\Phi}{\mathrm{x}_3} \right \rangle$}
\begin{flalign*}
& \left \langle \delta_4, \dv{\Phi}{\mathrm{x}_{3,i}} \right \rangle =
\left \langle \delta_4, \dv{\mathrm{x}_3}{\mathrm{x}_{3,i}} \right \rangle =
\left \langle \delta_4, \vec{\upvarepsilon}_i \right \rangle =
\delta_{4,i} &
\end{flalign*}
Hence:
\begin{flalign*}
& \left \langle \delta_4, \dv{\Phi}{\mathrm{x}_3} \right \rangle =
\delta_4 &
\end{flalign*}

\paragraph{Computing $\left \langle \delta_4, \dv{\Phi}{\mathrm{x}_2} \right \rangle$}
\begin{flalign*}
\left \langle \delta_4, \dv{\Phi}{\mathrm{x}_{2,i}} \right \rangle &=
\left \langle \delta_4, \dv{\mathrm{x}_3}{\mathrm{x}_{2,i}} \right \rangle =&\\
&= \left \langle \delta_4, \dv{}{\mathrm{x}_{2,i}} \left( \varphi_3 \left( \mathrm{h}_3 \right) \right) \right \rangle =&\\
&= \left \langle \delta_4, \varphi'_3 \left( \mathrm{h}_3 \right) \odot \dv{\mathrm{h}_3}{\mathrm{x}_{2,i}} \right \rangle =&\\
&= \left \langle \delta_4, \varphi'_3 \left( \mathrm{h}_3 \right) \odot \dv{}{\mathrm{x}_{2,i}} \left( \mathrm{W}_3 \mathrm{x}_2 + \mathrm{b}_3 \right) \right \rangle =&\\
&= \left \langle \delta_4, \varphi'_3 \left( \mathrm{h}_3 \right) \odot \left( \mathrm{W}_3 \dv{\mathrm{x}_2}{\mathrm{x}_{2,i}} \right) \right \rangle =&\\
&= \left \langle \delta_4, \varphi'_3 \left( \mathrm{h}_3 \right) \odot \left( \mathrm{W}_3 \vec{\upvarepsilon}_i \right) \right \rangle &
\end{flalign*}
By Lemma~\ref{lemma4} we get:
\begin{flalign*}
\left \langle \delta_4, \dv{\Phi}{\mathrm{x}_{2,i}} \right \rangle &=
\left \langle \delta_4 \odot \varphi'_3 \left( \mathrm{h}_3 \right), \mathrm{W}_3 \vec{\upvarepsilon}_i \right \rangle =&\\
&= \left( \mathrm{W}_3 \vec{\upvarepsilon}_i \right)^{\mathrm{T}} \left( \delta_4 \odot \varphi'_3 \left( \mathrm{h}_3 \right) \right) =&\\
&= \vec{\upvarepsilon}_i^{\mathrm{T}} \mathrm{W}_3^{\mathrm{T}} \left( \delta_4 \odot \varphi'_3 \left( \mathrm{h}_3 \right) \right) =&\\
&= \vec{\upvarepsilon}_i^{\mathrm{T}} \delta_3 =&\\
&= \delta_{3,i}
\end{flalign*}
Hence:
\begin{flalign*}
& \left \langle \delta_4, \dv{\Phi}{\mathrm{x}_2} \right \rangle =
\delta_3 &
\end{flalign*}

\paragraph{Computing $\left \langle \delta_4, \dv{\Phi}{\mathrm{x}_1} \right \rangle$}
\begin{flalign*}
\left \langle \delta_4, \dv{\Phi}{\mathrm{x}_{1,j}} \right \rangle &=
\left \langle \delta_4, \sum_{i=1}^{64} \dv{\Phi}{\mathrm{x}_{2,i}} \cdot \dv{\mathrm{x}_{2,i}}{\mathrm{x}_{1,j}} \right \rangle =&\\
&= \sum_{i=1}^{64} \left \langle \delta_4, \dv{\Phi}{\mathrm{x}_{2,i}} \cdot \dv{\mathrm{x}_{2,i}}{\mathrm{x}_{1,j}} \right \rangle =&\\
&= \sum_{i=1}^{64} \left \langle \delta_4, \dv{\Phi}{\mathrm{x}_{2,i}} \right \rangle \cdot \dv{\mathrm{x}_{2,i}}{\mathrm{x}_{1,j}} =&\\
&= \sum_{i=1}^{64} \delta_{3,i} \cdot \dv{\mathrm{x}_{2,i}}{\mathrm{x}_{1,j}} =&\\
&= \sum_{i=1}^{64} \delta_{3,i} \cdot \dv{}{\mathrm{x}_{1,j}} { \left( \varphi_2 \left( \mathrm{h}_2 \right) \right) }_i =&\\
&= \sum_{i=1}^{64} \delta_{3,i} \cdot \left( \varphi'_2 \left( \mathrm{h}_2 \right) \odot \dv{\mathrm{h}_2}{\mathrm{x}_{1,j}} \right)_i =&\\
&= \sum_{i=1}^{64} \delta_{3,i} \cdot \left( \varphi'_2 \left( \mathrm{h}_2 \right) \odot \dv{}{\mathrm{x}_{1,j}} \left( \mathrm{W}_2 \mathrm{x}_1 + \mathrm{b}_2 \right) \right)_i =&\\
&= \sum_{i=1}^{64} \delta_{3,i} \cdot \left( \varphi'_2 \left( \mathrm{h}_2 \right) \odot \left( \mathrm{W}_2 \dv{\mathrm{x}_1}{\mathrm{x}_{1,j}} \right) \right)_i =&\\
&= \sum_{i=1}^{64} \delta_{3,i} \cdot \left( \varphi'_2 \left( \mathrm{h}_2 \right) \odot \left( \mathrm{W}_2 \vec{\upvarepsilon}_j \right) \right)_i =&\\
&= \left \langle \delta_3, \varphi'_2 \left( \mathrm{h}_2 \right) \odot \left( \mathrm{W}_2 \vec{\upvarepsilon}_j \right) \right \rangle
\end{flalign*}
By Lemma~\ref{lemma4} we get:
\begin{flalign*}
\left \langle \delta_4, \dv{\Phi}{\mathrm{x}_{1,j}} \right \rangle &=
\left \langle \delta_3 \odot \varphi'_2 \left( \mathrm{h}_2 \right) , \mathrm{W}_2 \vec{\upvarepsilon}_j \right \rangle =&\\
&= \left( \mathrm{W}_2 \vec{\upvarepsilon}_j \right)^{\mathrm{T}} \left( \delta_3 \odot \varphi'_2 \left( \mathrm{h}_2 \right) \right) =&\\
&= \vec{\upvarepsilon}_j^{\mathrm{T}} \mathrm{W}_2^{\mathrm{T}} \left( \delta_3 \odot \varphi'_2 \left( \mathrm{h}_2 \right) \right) =&\\
&= \vec{\upvarepsilon}_j^{\mathrm{T}} \delta_2 =&\\
&= \delta_{2,j}
\end{flalign*}
Hence:
\begin{flalign*}
& \left \langle \delta_4, \dv{\Phi}{\mathrm{x}_1} \right \rangle =
\delta_2 &
\end{flalign*}

\paragraph{Computing $\left \langle \delta_4, \dv{\Phi}{\mathrm{x}_0} \right \rangle$}
\begin{flalign*}
\left \langle \delta_4, \dv{\Phi}{\mathrm{x}_{0,j}} \right \rangle &=
\left \langle \delta_4, \sum_{i=1}^{64} \dv{\Phi}{\mathrm{x}_{1,i}} \cdot \dv{\mathrm{x}_{1,i}}{\mathrm{x}_{0,j}} \right \rangle =&\\
&= \sum_{i=1}^{64} \left \langle \delta_4, \dv{\Phi}{\mathrm{x}_{1,i}} \cdot \dv{\mathrm{x}_{1,i}}{\mathrm{x}_{0,j}} \right \rangle =&\\
&= \sum_{i=1}^{64} \left \langle \delta_4, \dv{\Phi}{\mathrm{x}_{1,i}} \right \rangle \cdot \dv{\mathrm{x}_{1,i}}{\mathrm{x}_{0,j}} =&\\
&= \sum_{i=1}^{64} \delta_{2,i} \cdot \dv{\mathrm{x}_{1,i}}{\mathrm{x}_{0,j}} =&\\
&= \sum_{i=1}^{64} \delta_{2,i} \cdot \dv{}{\mathrm{x}_{0,j}} { \left( \varphi_1 \left( \mathrm{h}_1 \right) \right) }_i =&\\
&= \sum_{i=1}^{64} \delta_{2,i} \cdot \left( \varphi'_1 \left( \mathrm{h}_1 \right) \odot \dv{\mathrm{h}_1}{\mathrm{x}_{0,j}} \right)_i =&\\
&= \sum_{i=1}^{64} \delta_{2,i} \cdot \left( \varphi'_1 \left( \mathrm{h}_1 \right) \odot \dv{}{\mathrm{x}_{0,j}} \left( \mathrm{W}_1 \mathrm{x}_0 + \mathrm{b}_1 \right) \right)_i =&\\
&= \sum_{i=1}^{64} \delta_{2,i} \cdot \left( \varphi'_1 \left( \mathrm{h}_1 \right) \odot \left( \mathrm{W}_1 \dv{\mathrm{x}_0}{\mathrm{x}_{0,j}} \right) \right)_i =&\\
&= \sum_{i=1}^{64} \delta_{2,i} \cdot \left( \varphi'_1 \left( \mathrm{h}_1 \right) \odot \left( \mathrm{W}_1 \vec{\upvarepsilon}_j \right) \right)_i =&\\
&= \left \langle \delta_2 , \varphi'_1 \left( \mathrm{h}_1 \right) \odot \left( \mathrm{W}_1 \vec{\upvarepsilon}_j \right) \right \rangle
\end{flalign*}
By Lemma~\ref{lemma4} we get:
\begin{flalign*}
\left \langle \delta_4, \dv{\Phi}{\mathrm{x}_{0,j}} \right \rangle &=
\left \langle \delta_2 \odot \varphi'_1 \left( \mathrm{h}_1 \right) , \mathrm{W}_1 \vec{\upvarepsilon}_j \right \rangle =&\\
&= \left( \mathrm{W}_1 \vec{\upvarepsilon}_j \right)^{\mathrm{T}} \left( \delta_2 \odot \varphi'_1 \left( \mathrm{h}_1 \right) \right) =&\\
&= \vec{\upvarepsilon}_j^{\mathrm{T}} \mathrm{W}_1^{\mathrm{T}} \left( \delta_2 \odot \varphi'_1 \left( \mathrm{h}_1 \right) \right) =&\\
&= \vec{\upvarepsilon}_j^{\mathrm{T}} \delta_1 =&\\
&= \delta_{1,j}
\end{flalign*}
Hence:
\begin{flalign*}
& \left \langle \delta_4, \dv{\Phi}{\mathrm{x}_0} \right \rangle =
\delta_1 &
\end{flalign*}
\newline
Now, we are ready to evaluate the inner products:
$\left \langle \delta_4, \dv{\Phi}{\mathrm{W}_3} \right \rangle$, $\left \langle \delta_4, \dv{\Phi}{\mathrm{b}_3} \right \rangle$, $\left \langle \delta_4, \dv{\Phi}{\mathrm{W}_2} \right \rangle$, $\left \langle \delta_4, \dv{\Phi}{\mathrm{b}_2} \right \rangle$, $\left \langle \delta_4, \dv{\Phi}{\mathrm{W}_1} \right \rangle$, $\left \langle \delta_4, \dv{\Phi}{\mathrm{b}_1} \right \rangle$, $\left \langle \delta_4, \dv{\Phi}{\mathrm{z}} \right \rangle$, $\left \langle \delta_4, \dv{\Phi}{\mathrm{f}} \right \rangle$.

\paragraph{Computing $\left \langle \delta_4, \dv{\Phi}{\mathrm{W}_3} \right \rangle$}
\begin{flalign*}
\left \langle \delta_4, \dv{\Phi}{\mathrm{W}_{3,j,k}} \right \rangle &=
\left \langle \delta_4, \sum_{i=1}^{4} \dv{\Phi}{\mathrm{x}_{3,i}} \cdot \dv{\mathrm{x}_{3,i}}{\mathrm{W}_{3,j,k}} \right \rangle =&\\
&= \sum_{i=1}^{4} \left \langle \delta_4, \dv{\Phi}{\mathrm{x}_{3,i}} \cdot \dv{\mathrm{x}_{3,i}}{\mathrm{W}_{3,j,k}} \right \rangle =&\\
&= \sum_{i=1}^{4} \left \langle \delta_4, \dv{\Phi}{\mathrm{x}_{3,i}} \right \rangle \cdot \dv{\mathrm{x}_{3,i}}{\mathrm{W}_{3,j,k}} =&\\
&= \sum_{i=1}^{4} \delta_{4,i} \cdot \dv{\mathrm{x}_{3,i}}{\mathrm{W}_{3,j,k}} =&\\
&= \sum_{i=1}^{4} \delta_{4,i} \cdot \dv{}{\mathrm{W}_{3,j,k}} { \left( \varphi_3 \left( \mathrm{h}_3 \right) \right) }_i =&\\
&= \sum_{i=1}^{4} \delta_{4,i} \cdot \left( \varphi'_3 \left( \mathrm{h}_3 \right) \odot \dv{\mathrm{h}_3}{\mathrm{W}_{3,j,k}} \right)_i =&\\
&= \sum_{i=1}^{4} \delta_{4,i} \cdot \left( \varphi'_3 \left( \mathrm{h}_3 \right) \odot \dv{}{\mathrm{W}_{3,j,k}} \left( \mathrm{W}_3 \mathrm{x}_2 + \mathrm{b}_3 \right) \right)_i =&\\
&= \sum_{i=1}^{4} \delta_{4,i} \cdot \left( \varphi'_3 \left( \mathrm{h}_3 \right) \odot \left( \dv{\mathrm{W}_3}{\mathrm{W}_{3,j,k}} \mathrm{x}_2 \right) \right)_i =&\\
&= \sum_{i=1}^{4} \delta_{4,i} \cdot \left( \varphi'_3 \left( \mathrm{h}_3 \right) \odot \left( \vec{\upvarepsilon}_j \vec{\upvarepsilon}_k^{\mathrm{T}} \mathrm{x}_2 \right) \right)_i =&\\
&= \left \langle \delta_4 , \varphi'_3 \left( \mathrm{h}_3 \right) \odot \left( \vec{\upvarepsilon}_j \vec{\upvarepsilon}_k^{\mathrm{T}} \mathrm{x}_2 \right) \right \rangle =&\\
&= \left \langle \delta_4 , \varphi'_3 \left( \mathrm{h}_3 \right) \odot \left( \vec{\upvarepsilon}_j \mathrm{x}_{2,k} \right) \right \rangle
\end{flalign*}
By Lemma~\ref{lemma4} we get:
\begin{flalign*}
\left \langle \delta_4, \dv{\Phi}{\mathrm{W}_{3,j,k}} \right \rangle &=
\left \langle \delta_4 \odot \varphi'_3 \left( \mathrm{h}_3 \right) , \vec{\upvarepsilon}_j \mathrm{x}_{2,k} \right \rangle =&\\
&= \left( \vec{\upvarepsilon}_j \mathrm{x}_{2,k} \right)^{\mathrm{T}} \left( \delta_4 \odot \varphi'_3 \left( \mathrm{h}_3 \right) \right) =&\\
&= \mathrm{x}_{2,k} \vec{\upvarepsilon}_j^{\mathrm{T}} \left( \delta_4 \odot \varphi'_3 \left( \mathrm{h}_3 \right) \right) =&\\
&= \mathrm{x}_{2,k} \left( \delta_4 \odot \varphi'_3 \left( \mathrm{h}_3 \right) \right)_j =&\\
&= \left( \delta_4 \odot \varphi'_3 \left( \mathrm{h}_3 \right) \right)_j \mathrm{x}_{2,k}
\end{flalign*}
Hence:
\begin{flalign*}
& \left \langle \delta_4, \dv{\Phi}{\mathrm{W}_3} \right \rangle =
\left( \delta_4 \odot \varphi'_3 \left( \mathrm{h}_3 \right) \right) \mathrm{x}_2^{\mathrm{T}} &
\end{flalign*}

\paragraph{Computing $\left \langle \delta_4, \dv{\Phi}{\mathrm{b}_3} \right \rangle$}
\begin{flalign*}
\left \langle \delta_4, \dv{\Phi}{\mathrm{b}_{3,j}} \right \rangle &=
\left \langle \delta_4, \sum_{i=1}^{4} \dv{\Phi}{\mathrm{x}_{3,i}} \cdot \dv{\mathrm{x}_{3,i}}{\mathrm{b}_{3,j}} \right \rangle =&\\
&= \sum_{i=1}^{4} \left \langle \delta_4, \dv{\Phi}{\mathrm{x}_{3,i}} \cdot \dv{\mathrm{x}_{3,i}}{\mathrm{b}_{3,j}} \right \rangle =&\\
&= \sum_{i=1}^{4} \left \langle \delta_4, \dv{\Phi}{\mathrm{x}_{3,i}} \right \rangle \cdot \dv{\mathrm{x}_{3,i}}{\mathrm{b}_{3,j}} =&\\
&= \sum_{i=1}^{4} \delta_{4,i} \cdot \dv{\mathrm{x}_{3,i}}{\mathrm{b}_{3,j}} =&\\
&= \sum_{i=1}^{4} \delta_{4,i} \cdot \dv{}{\mathrm{b}_{3,j}} { \left( \varphi_3 \left( \mathrm{h}_3 \right) \right) }_i =&\\
&= \sum_{i=1}^{4} \delta_{4,i} \cdot \left( \varphi'_3 \left( \mathrm{h}_3 \right) \dv{\mathrm{h}_3}{\mathrm{b}_{3,j}} \right)_i =&\\
&= \sum_{i=1}^{4} \delta_{4,i} \cdot \left( \varphi'_3 \left( \mathrm{h}_3 \right) \dv{}{\mathrm{b}_{3,j}} \left( \mathrm{W}_3 \mathrm{x}_2 + \mathrm{b}_3 \right) \right)_i =&\\
&= \sum_{i=1}^{4} \delta_{4,i} \cdot \left( \varphi'_3 \left( \mathrm{h}_3 \right) \dv{\mathrm{b}_3}{\mathrm{b}_{3,j}} \right)_i =&\\
&= \sum_{i=1}^{4} \delta_{4,i} \cdot \left( \varphi'_3 \left( \mathrm{h}_3 \right) \vec{\upvarepsilon}_j \right)_i =&\\
&= \sum_{i=1}^{4} \delta_{4,i} \cdot \varphi'_{3,i} \left( \mathrm{h}_3 \right) \cdot \delta_{i{j}} =&\\
&= \delta_{4,j} \cdot \varphi'_{3,j} \left( \mathrm{h}_3 \right)
\end{flalign*}
Hence:
\begin{flalign*}
& \left \langle \delta_4, \dv{\Phi}{\mathrm{b}_3} \right \rangle =
\delta_4 \cdot \varphi'_3 \left( \mathrm{h}_3 \right) &
\end{flalign*}

\paragraph{Computing $\left \langle \delta_4, \dv{\Phi}{\mathrm{W}_2} \right \rangle$}
\begin{flalign*}
\left \langle \delta_4, \dv{\Phi}{\mathrm{W}_{2,j,k}} \right \rangle &=
\left \langle \delta_4, \sum_{i=1}^{64} \dv{\Phi}{\mathrm{x}_{2,i}} \cdot \dv{\mathrm{x}_{2,i}}{\mathrm{W}_{2,j,k}} \right \rangle =&\\
&= \sum_{i=1}^{64} \left \langle \delta_4, \dv{\Phi}{\mathrm{x}_{2,i}} \cdot \dv{\mathrm{x}_{2,i}}{\mathrm{W}_{2,j,k}} \right \rangle =&\\
&= \sum_{i=1}^{64} \left \langle \delta_4, \dv{\Phi}{\mathrm{x}_{2,i}} \right \rangle \cdot \dv{\mathrm{x}_{2,i}}{\mathrm{W}_{2,j,k}} =&\\
&= \sum_{i=1}^{64} \delta_{3,i} \cdot \dv{\mathrm{x}_{2,i}}{\mathrm{W}_{2,j,k}} =&\\
&= \sum_{i=1}^{64} \delta_{3,i} \cdot \dv{}{\mathrm{W}_{2,j,k}} { \left( \varphi_2 \left( \mathrm{h}_2 \right) \right) }_i =&\\
&= \sum_{i=1}^{64} \delta_{3,i} \cdot \left( \varphi'_2 \left( \mathrm{h}_2 \right) \odot \dv{\mathrm{h}_2}{\mathrm{W}_{2,j,k}} \right)_i =&\\
&= \sum_{i=1}^{64} \delta_{3,i} \cdot \left( \varphi'_2 \left( \mathrm{h}_2 \right) \odot \dv{}{\mathrm{W}_{2,j,k}} \left( \mathrm{W}_2 \mathrm{x}_1 + \mathrm{b}_2 \right) \right)_i =&\\
&= \sum_{i=1}^{64} \delta_{3,i} \cdot \left( \varphi'_2 \left( \mathrm{h}_2 \right) \odot \left( \dv{\mathrm{W}_2}{\mathrm{W}_{2,j,k}} \mathrm{x}_1 \right) \right)_i =&\\
&= \sum_{i=1}^{64} \delta_{3,i} \cdot \left( \varphi'_2 \left( \mathrm{h}_2 \right) \odot \left( \vec{\upvarepsilon}_j \vec{\upvarepsilon}_k^{\mathrm{T}} \mathrm{x}_1 \right) \right)_i =&\\
&= \left \langle \delta_3 , \varphi'_2 \left( \mathrm{h}_2 \right) \odot \left( \vec{\upvarepsilon}_j \vec{\upvarepsilon}_k^{\mathrm{T}} \mathrm{x}_1 \right) \right \rangle =&\\
&= \left \langle \delta_3 , \varphi'_2 \left( \mathrm{h}_2 \right) \odot \left( \vec{\upvarepsilon}_j \mathrm{x}_{1,k} \right) \right \rangle
\end{flalign*}
By Lemma~\ref{lemma4} we get:
\begin{flalign*}
\left \langle \delta_4, \dv{\Phi}{\mathrm{W}_{2,j,k}} \right \rangle &=
\left \langle \delta_3 \odot \varphi'_2 \left( \mathrm{h}_2 \right) , \vec{\upvarepsilon}_j \mathrm{x}_{1,k} \right \rangle =&\\
&= \left( \vec{\upvarepsilon}_j \mathrm{x}_{1,k} \right)^{\mathrm{T}} \left( \delta_3 \odot \varphi'_2 \left( \mathrm{h}_2 \right) \right) =&\\
&= \mathrm{x}_{1,k} \vec{\upvarepsilon}_j^{\mathrm{T}} \left( \delta_3 \odot \varphi'_2 \left( \mathrm{h}_2 \right) \right) =&\\
&= \mathrm{x}_{1,k} \left( \delta_3 \odot \varphi'_2 \left( \mathrm{h}_2 \right) \right)_j =&\\
&= \left( \delta_3 \odot \varphi'_2 \left( \mathrm{h}_2 \right) \right)_j \mathrm{x}_{1,k}
\end{flalign*}
Hence:
\begin{flalign*}
& \left \langle \delta_4, \dv{\Phi}{\mathrm{W}_2} \right \rangle =
\left( \delta_3 \odot \varphi'_2 \left( \mathrm{h}_2 \right) \right) \mathrm{x}_1^{\mathrm{T}} &
\end{flalign*}

\paragraph{Computing $\left \langle \delta_4, \dv{\Phi}{\mathrm{b}_2} \right \rangle$}
\begin{flalign*}
\left \langle \delta_4, \dv{\Phi}{\mathrm{b}_{2,j}} \right \rangle &=
\left \langle \delta_4, \sum_{i=1}^{64} \dv{\Phi}{\mathrm{x}_{2,i}} \cdot \dv{\mathrm{x}_{2,i}}{\mathrm{b}_{2,j}} \right \rangle =&\\
&= \sum_{i=1}^{64} \left \langle \delta_4, \dv{\Phi}{\mathrm{x}_{2,i}} \cdot \dv{\mathrm{x}_{2,i}}{\mathrm{b}_{2,j}} \right \rangle =&\\
&= \sum_{i=1}^{64} \left \langle \delta_4, \dv{\Phi}{\mathrm{x}_{2,i}} \right \rangle \cdot \dv{\mathrm{x}_{2,i}}{\mathrm{b}_{2,j}} =&\\
&= \sum_{i=1}^{64} \delta_{3,i} \cdot \dv{\mathrm{x}_{2,i}}{\mathrm{b}_{2,j}} =&\\
&= \sum_{i=1}^{64} \delta_{3,i} \cdot \dv{}{\mathrm{b}_{2,j}} { \left( \varphi_2 \left( \mathrm{h}_2 \right) \right) }_i =&\\
&= \sum_{i=1}^{64} \delta_{3,i} \cdot \left( \varphi'_2 \left( \mathrm{h}_2 \right) \dv{\mathrm{h}_2}{\mathrm{b}_{2,j}} \right)_i =&\\
&= \sum_{i=1}^{64} \delta_{3,i} \cdot \left( \varphi'_2 \left( \mathrm{h}_2 \right) \dv{}{\mathrm{b}_{2,j}} \left( \mathrm{W}_2 \mathrm{x}_1 + \mathrm{b}_2 \right) \right)_i =&\\
&= \sum_{i=1}^{64} \delta_{3,i} \cdot \left( \varphi'_2 \left( \mathrm{h}_2 \right) \dv{\mathrm{b}_2}{\mathrm{b}_{2,j}} \right)_i =&\\
&= \sum_{i=1}^{64} \delta_{3,i} \cdot \left( \varphi'_2 \left( \mathrm{h}_2 \right) \vec{\upvarepsilon}_j \right)_i =&\\
&= \sum_{i=1}^{64} \delta_{3,i} \cdot \varphi'_{2,i} \left( \mathrm{h}_2 \right) \cdot \delta_{i{j}} =&\\
&= \delta_{3,j} \cdot \varphi'_{2,j} \left( \mathrm{h}_2 \right)
\end{flalign*}
Hence:
\begin{flalign*}
& \left \langle \delta_4, \dv{\Phi}{\mathrm{b}_2} \right \rangle =
\delta_3 \cdot \varphi'_2 \left( \mathrm{h}_2 \right) &
\end{flalign*}

\paragraph{Computing $\left \langle \delta_4, \dv{\Phi}{\mathrm{W}_1} \right \rangle$}
\begin{flalign*}
\left \langle \delta_4, \dv{\Phi}{\mathrm{W}_{1,j,k}} \right \rangle &=
\left \langle \delta_4, \sum_{i=1}^{64} \dv{\Phi}{\mathrm{x}_{1,i}} \cdot \dv{\mathrm{x}_{1,i}}{\mathrm{W}_{1,j,k}} \right \rangle =&\\
&= \sum_{i=1}^{64} \left \langle \delta_4, \dv{\Phi}{\mathrm{x}_{1,i}} \cdot \dv{\mathrm{x}_{1,i}}{\mathrm{W}_{1,j,k}} \right \rangle =&\\
&= \sum_{i=1}^{64} \left \langle \delta_4, \dv{\Phi}{\mathrm{x}_{1,i}} \right \rangle \cdot \dv{\mathrm{x}_{1,i}}{\mathrm{W}_{1,j,k}} =&\\
&= \sum_{i=1}^{64} \delta_{2,i} \cdot \dv{\mathrm{x}_{1,i}}{\mathrm{W}_{1,j,k}} =&\\
&= \sum_{i=1}^{64} \delta_{2,i} \cdot \dv{}{\mathrm{W}_{1,j,k}} { \left( \varphi_1 \left( \mathrm{h}_1 \right) \right) }_i =&\\
&= \sum_{i=1}^{64} \delta_{2,i} \cdot \left( \varphi'_1 \left( \mathrm{h}_1 \right) \odot \dv{\mathrm{h}_1}{\mathrm{W}_{1,j,k}} \right)_i =&\\
&= \sum_{i=1}^{64} \delta_{2,i} \cdot \left( \varphi'_1 \left( \mathrm{h}_1 \right) \odot \dv{}{\mathrm{W}_{1,j,k}} \left( \mathrm{W}_1 \mathrm{x}_0 + \mathrm{b}_1 \right) \right)_i =&\\
&= \sum_{i=1}^{64} \delta_{2,i} \cdot \left( \varphi'_1 \left( \mathrm{h}_1 \right) \odot \left( \dv{\mathrm{W}_1}{\mathrm{W}_{1,j,k}} \mathrm{x}_0 \right) \right)_i =&\\
&= \sum_{i=1}^{64} \delta_{2,i} \cdot \left( \varphi'_1 \left( \mathrm{h}_1 \right) \odot \left( \vec{\upvarepsilon}_j \vec{\upvarepsilon}_k^{\mathrm{T}} \mathrm{x}_0 \right) \right)_i =&\\
&= \left \langle \delta_2 , \varphi'_1 \left( \mathrm{h}_1 \right) \odot \left( \vec{\upvarepsilon}_j \vec{\upvarepsilon}_k^{\mathrm{T}} \mathrm{x}_0 \right) \right \rangle =&\\
&= \left \langle \delta_2 , \varphi'_1 \left( \mathrm{h}_1 \right) \odot \left( \vec{\upvarepsilon}_j \mathrm{x}_{0,k} \right) \right \rangle
\end{flalign*}
By Lemma~\ref{lemma4} we get:
\begin{flalign*}
\left \langle \delta_4, \dv{\Phi}{\mathrm{W}_{1,j,k}} \right \rangle &=
\left \langle \delta_2 \odot \varphi'_1 \left( \mathrm{h}_1 \right) , \vec{\upvarepsilon}_j \mathrm{x}_{0,k} \right \rangle =&\\
&= \left( \vec{\upvarepsilon}_j \mathrm{x}_{0,k} \right)^{\mathrm{T}} \left( \delta_2 \odot \varphi'_1 \left( \mathrm{h}_1 \right) \right) =&\\
&= \mathrm{x}_{0,k} \vec{\upvarepsilon}_j^{\mathrm{T}} \left( \delta_2 \odot \varphi'_1 \left( \mathrm{h}_1 \right) \right) =&\\
&= \mathrm{x}_{0,k} \left( \delta_2 \odot \varphi'_1 \left( \mathrm{h}_1 \right) \right)_j =&\\
&= \left( \delta_2 \odot \varphi'_1 \left( \mathrm{h}_1 \right) \right)_j \mathrm{x}_{0,k}
\end{flalign*}
Hence:
\begin{flalign*}
& \left \langle \delta_4, \dv{\Phi}{\mathrm{W}_1} \right \rangle =
\left( \delta_2 \odot \varphi'_1 \left( \mathrm{h}_1 \right) \right) \mathrm{x}_0^{\mathrm{T}} &
\end{flalign*}

\paragraph{Computing $\left \langle \delta_4, \dv{\Phi}{\mathrm{b}_1} \right \rangle$}
\begin{flalign*}
\left \langle \delta_4, \dv{\Phi}{\mathrm{b}_{1,j}} \right \rangle &=
\left \langle \delta_4, \sum_{i=1}^{64} \dv{\Phi}{\mathrm{x}_{1,i}} \cdot \dv{\mathrm{x}_{1,i}}{\mathrm{b}_{1,j}} \right \rangle =&\\
&= \sum_{i=1}^{64} \left \langle \delta_4, \dv{\Phi}{\mathrm{x}_{1,i}} \cdot \dv{\mathrm{x}_{1,i}}{\mathrm{b}_{1,j}} \right \rangle =&\\
&= \sum_{i=1}^{64} \left \langle \delta_4, \dv{\Phi}{\mathrm{x}_{1,i}} \right \rangle \cdot \dv{\mathrm{x}_{1,i}}{\mathrm{b}_{1,j}} =&\\
&= \sum_{i=1}^{64} \delta_{2,i} \cdot \dv{\mathrm{x}_{1,i}}{\mathrm{b}_{1,j}} =&\\
&= \sum_{i=1}^{64} \delta_{2,i} \cdot \dv{}{\mathrm{b}_{1,j}} { \left( \varphi_1 \left( \mathrm{h}_1 \right) \right) }_i =&\\
&= \sum_{i=1}^{64} \delta_{2,i} \cdot \left( \varphi'_1 \left( \mathrm{h}_1 \right) \dv{\mathrm{h}_1}{\mathrm{b}_{1,j}} \right)_i =&\\
&= \sum_{i=1}^{64} \delta_{2,i} \cdot \left( \varphi'_1 \left( \mathrm{h}_1 \right) \dv{}{\mathrm{b}_{1,j}} \left( \mathrm{W}_1 \mathrm{x}_0 + \mathrm{b}_1 \right) \right)_i =&\\
&= \sum_{i=1}^{64} \delta_{2,i} \cdot \left( \varphi'_1 \left( \mathrm{h}_1 \right) \dv{\mathrm{b}_1}{\mathrm{b}_{1,j}} \right)_i =&\\
&= \sum_{i=1}^{64} \delta_{2,i} \cdot \left( \varphi'_1 \left( \mathrm{h}_1 \right) \vec{\upvarepsilon}_j \right)_i =&\\
&= \sum_{i=1}^{64} \delta_{2,i} \cdot \varphi'_{1,i} \left( \mathrm{h}_1 \right) \cdot \delta_{i{j}} =&\\
&= \delta_{2,j} \cdot \varphi'_{1,j} \left( \mathrm{h}_1 \right)
\end{flalign*}
Hence:
\begin{flalign*}
& \left \langle \delta_4, \dv{\Phi}{\mathrm{b}_1} \right \rangle =
\delta_2 \cdot \varphi'_1 \left( \mathrm{h}_1 \right) &
\end{flalign*}

\paragraph{Computing $\left \langle \delta_4, \dv{\Phi}{\mathrm{z}} \right \rangle$}
Since
$$
\mathrm{z} = \begin{bmatrix}
x_{0,8+24+1} \\
\vdots \\
x_{0,8+24+96}
\end{bmatrix} = \begin{bmatrix}
x_{0,32+1} \\
\vdots \\
x_{0,32+96}
\end{bmatrix}
$$
we have:
$$
\left \langle \delta_4, \dv{\Phi}{\mathrm{z}_i} \right \rangle =
\left \langle \delta_4, \dv{\Phi}{\mathrm{x}_{0,32+i}} \right \rangle =
\delta_{1,32+i}
$$
Consequently:
$$
\left \langle \delta_4, \dv{\Phi}{\mathrm{z}} \right \rangle = \begin{bmatrix}
\delta_{1,32+1} \\
\vdots \\
\delta_{1,32+96}
\end{bmatrix}
$$

\paragraph{Computing $\left \langle \delta_4, \dv{\Phi}{\mathcal{F}} \right \rangle$}
\noindent
\newline
\newline
Since $\mathcal{F} = \left \lbrace \mathrm{f}_1, \ldots, \mathrm{f}_{6577} \right \rbrace$, let us fix $i \in \left \lbrace 1, \ldots, 6577 \right \rbrace$ and $j \in \left \lbrace 1, 2 \right \rbrace$. Then:
\begin{flalign*}
\left \langle \delta_4, \dv{\Phi}{\mathrm{f}_{i,j}} \right \rangle &=
\left \langle \delta_4, \sum_{k=1}^{4} \dv{\Phi}{\mathrm{x}_{0,2 \left( k - 1 \right) + j}} \cdot \dv{\mathrm{x}_{0,2 \left( k - 1 \right) + j}}{\mathrm{f}_{i,j}} \right \rangle =&\\
&= \sum_{k=1}^{4} \left \langle \delta_4, \dv{\Phi}{\mathrm{x}_{0,2 \left( k - 1 \right) + j}} \cdot \dv{\mathrm{x}_{0,2 \left( k - 1 \right) + j}}{\mathrm{f}_{i,j}} \right \rangle =&\\
&= \sum_{k=1}^{4} \left \langle \delta_4, \dv{\Phi}{\mathrm{x}_{0,2 \left( k - 1 \right) + j}} \right \rangle \cdot \dv{\mathrm{x}_{0,2 \left( k - 1 \right) + j}}{\mathrm{f}_{i,j}} =&\\
&= \sum_{k=1}^{4} \delta_{1,2 \left( k - 1 \right) + j} \cdot \dv{\mathrm{x}_{0,2 \left( k - 1 \right) + j}}{\mathrm{f}_{i,j}} =&\\
&= \sum_{k=1}^{4} \delta_{1,2 \left( k - 1 \right) + j} \cdot \dv{\hat{\mathrm{f}}_j^{\left( k \right)}}{\mathrm{f}_{i,j}}
\end{flalign*}
It remains, therefore, to show how to compute: $\dv{\hat{\mathrm{f}}_j^{\left( k \right)}}{\mathrm{f}_{i,j}}$. As can be easily verified:
\begin{flalign*}
\dv{\hat{\mathrm{f}}_j^{\left( k \right)}}{\mathrm{f}_{i,j}} &=
\left( 1 - u_{\text{frac}}^{\left( k \right)} \right) \cdot \left( 1 - v_{\text{frac}}^{\left( k \right)} \right) \cdot \delta_{\frac{\text{ind}_{00}^{\left( k \right)}}{2} + 1, i} +&\\
&+ \; u_{\text{frac}}^{\left( k \right)} \cdot \left( 1 - v_{\text{frac}}^{\left( k \right)} \right) \cdot \delta_{\frac{\text{ind}_{01}^{\left( k \right)}}{2} + 1, i} +&\\
&+ \; \left( 1 - u_{\text{frac}}^{\left( k \right)} \right) \cdot v_{\text{frac}}^{\left( k \right)} \cdot \delta_{\frac{\text{ind}_{10}^{\left( k \right)}}{2} + 1, i} +&\\
&+ \; u_{\text{frac}}^{\left( k \right)} \cdot v_{\text{frac}}^{\left( k \right)} \cdot \delta_{\frac{\text{ind}_{11}^{\left( k \right)}}{2} + 1, i}
\end{flalign*}
where temporarily, due to a notation conflict, $\delta_{ij}$ denotes the Kronecker delta.
\newline
\newline
\noindent
To complete the derivation, we provide the closed-form formulas for the derivatives of the $\operatorname{hardswish}$, $\operatorname{ReLU}$ and sigmoid activation function $\sigma$.

\paragraph{Computing: $\dv{}{x} \left( \operatorname{hardswish} \left( x \right) \right)$, $\dv{}{x} \left( \operatorname{ReLU} \left( x \right) \right)$ and $\dv{}{x} \left( \sigma \left( x \right) \right)$}

\begin{flalign*}
\dv{}{x} \left( \operatorname{hardswish} \left( x \right) \right) &=
\dv{}{x} \left( 
\begin{cases}
0 & , \; x \le -3 \\
x & , \; x \ge 3 \\
\frac{x \left( x + 3 \right)}{6} & , \; \text{otherwise} \\
\end{cases}
\right) =&\\
&= \dv{}{x} \left( 
x \cdot \operatorname{clamp} \left( x + 3, 0, 6 \right)
\right) =&\\
&= \left( 
1 \cdot \operatorname{clamp} \left( x + 3, 0, 6 \right)
\right) + \left( x \cdot \dv{}{x} \left( \operatorname{clamp} \left( x + 3, 0, 6 \right) \right) \right) =&\\
&= \left( 
\operatorname{clamp} \left( x + 3, 0, 6 \right)
\right) + \left( x \cdot \mathds{1}_{ \left(x + 3\right) \in \left[ 0, 6 \right] } \right) =&\\
&= \left( 
\operatorname{clamp} \left( x + 3, 0, 6 \right)
\right) + \left( x \cdot \mathds{1}_{ x \in \left[ -3, 3 \right] } \right) =&\\
\end{flalign*}
\begin{flalign*}
\dv{}{x} \left( \operatorname{ReLU} \left( x \right) \right) &=
\dv{}{x} \left( \max \left \lbrace x, 0 \right \rbrace \right) =&\\
&= \mathds{1}_{x \ge 0}
\end{flalign*}
\begin{flalign*}
\dv{}{x} \left( \sigma \left( x \right) \right) &=
\dv{}{x} \left( \frac{1}{1 + e^{-x}} \right) =&\\
&= -\frac{1}{\left( 1 + e^{-x} \right)^2} \cdot \left( -e^{-x} \right) =&\\
&= \frac{e^{-x}}{\left( 1 + e^{-x} \right)^2} =&\\
&= \frac{e^{-x} + 0}{\left( 1 + e^{-x} \right)^2} =&\\
&= \frac{e^{-x} + \left( 1 - 1 \right)}{\left( 1 + e^{-x} \right)^2} =&\\
&= \frac{\left( 1 + e^{-x} \right) - 1}{\left( 1 + e^{-x} \right)^2} =&\\
&= \frac{1}{1 + e^{-x}} - \frac{1}{\left( 1 + e^{-x} \right)^2} =&\\
&= \frac{1}{1 + e^{-x}} \left( 1 - \frac{1}{1 + e^{-x}} \right) =&\\
&= \sigma \left( x \right) \left( 1 - \sigma \left( x \right) \right)
\end{flalign*}
\noindent
We will now show how to compute the inner products: $\left \langle \delta_4, \dv{\Phi}{u} \right \rangle$ and $\left \langle \delta_4, \dv{\Phi}{v} \right \rangle$. These quantities will be used to deduce the derivatives of $\dv{\mathcal{L}}{\upmu_i^{\left( r \right)}}$, $\dv{\mathcal{L}}{\tilde{\mathrm{s}}_i^{\left( r \right)}}$ and $\dv{\mathcal{L}}{\mathrm{q}_i^{\left( r \right)}}$, as they comprise one of their key components.

\paragraph{Computing $\left \langle \delta_4, \dv{\Phi}{u} \right \rangle$ and $\left \langle \delta_4, \dv{\Phi}{v} \right \rangle$}
\begin{flalign*}
\left \langle \delta_4, \dv{\Phi}{u} \right \rangle &=
\left \langle \delta_4, \sum_{i=1}^{8} \dv{\Phi}{\mathrm{x}_{0,i}} \cdot \dv{\mathrm{x}_{0,i}}{u} \right \rangle =&\\
&= \left \langle \delta_4, \sum_{i=1}^{4} \sum_{j=1}^{2} \dv{\Phi}{\mathrm{x}_{0,2 \left( i - 1 \right) + j}} \cdot \dv{\mathrm{x}_{0,2 \left( i - 1 \right) + j}}{u} \right \rangle =&\\
&= \sum_{i=1}^{4} \sum_{j=1}^{2} \left \langle \delta_4, \dv{\Phi}{\mathrm{x}_{0,2 \left( i - 1 \right) + j}} \cdot \dv{\mathrm{x}_{0,2 \left( i - 1 \right) + j}}{u} \right \rangle =&\\
&= \sum_{i=1}^{4} \sum_{j=1}^{2} \left \langle \delta_4, \dv{\Phi}{\mathrm{x}_{0,2 \left( i - 1 \right) + j}} \right \rangle \cdot \dv{\mathrm{x}_{0,2 \left( i - 1 \right) + j}}{u} =&\\
&= \sum_{i=1}^{4} \sum_{j=1}^{2} \delta_{1,2 \left( i - 1 \right) + j} \cdot \dv{\mathrm{x}_{0,2 \left( i - 1 \right) + j}}{u} =&\\
&= \sum_{i=1}^{4} \sum_{j=1}^{2} \delta_{1,2 \left( i - 1 \right) + j} \cdot \dv{\hat{\mathrm{f}}_j^{\left( i \right)}}{u}
\end{flalign*}
\begin{flalign*}
\left \langle \delta_4, \dv{\Phi}{v} \right \rangle &=
\left \langle \delta_4, \sum_{i=1}^{8} \dv{\Phi}{\mathrm{x}_{0,i}} \cdot \dv{\mathrm{x}_{0,i}}{v} \right \rangle =&\\
&= \left \langle \delta_4, \sum_{i=1}^{4} \sum_{j=1}^{2} \dv{\Phi}{\mathrm{x}_{0,2 \left( i - 1 \right) + j}} \cdot \dv{\mathrm{x}_{0,2 \left( i - 1 \right) + j}}{v} \right \rangle =&\\
&= \sum_{i=1}^{4} \sum_{j=1}^{2} \left \langle \delta_4, \dv{\Phi}{\mathrm{x}_{0,2 \left( i - 1 \right) + j}} \cdot \dv{\mathrm{x}_{0,2 \left( i - 1 \right) + j}}{v} \right \rangle =&\\
&= \sum_{i=1}^{4} \sum_{j=1}^{2} \left \langle \delta_4, \dv{\Phi}{\mathrm{x}_{0,2 \left( i - 1 \right) + j}} \right \rangle \cdot \dv{\mathrm{x}_{0,2 \left( i - 1 \right) + j}}{v} =&\\
&= \sum_{i=1}^{4} \sum_{j=1}^{2} \delta_{1,2 \left( i - 1 \right) + j} \cdot \dv{\mathrm{x}_{0,2 \left( i - 1 \right) + j}}{v} =&\\
&= \sum_{i=1}^{4} \sum_{j=1}^{2} \delta_{1,2 \left( i - 1 \right) + j} \cdot \dv{\hat{\mathrm{f}}_j^{\left( i \right)}}{v}
\end{flalign*}
It remains, therefore, to show how to compute: $\dv{\hat{\mathrm{f}}^{\left( i \right)}}{u}$ and $\dv{\hat{\mathrm{f}}^{\left( i \right)}}{v}$. As can be easily verified:
\begin{flalign*}
\dv{\hat{\mathrm{f}}^{\left( i \right)}}{u} &= 
\dv{\hat{\mathrm{f}}^{\left( i \right)}}{u_{\text{frac}}^{\left( i \right)}} \cdot \dv{u_{\text{frac}}^{\left( i \right)}}{u} =&\\
&= \left( \left( 1 - v_{\text{frac}}^{\left( i \right)} \right) \left( \mathrm{f}_{01}^{\left( i \right)} - \mathrm{f}_{00}^{\left( i \right)} \right) + v_{\text{frac}}^{\left( i \right)} \left( \mathrm{f}_{11}^{\left( i \right)} - \mathrm{f}_{10}^{\left( i \right)} \right) \right) \cdot \dv{u_{\text{frac}}^{\left( i \right)}}{u} =&\\
&= \left( \left( 1 - v_{\text{frac}}^{\left( i \right)} \right) \left( \mathrm{f}_{01}^{\left( i \right)} - \mathrm{f}_{00}^{\left( i \right)} \right) + v_{\text{frac}}^{\left( i \right)} \left( \mathrm{f}_{11}^{\left( i \right)} - \mathrm{f}_{10}^{\left( i \right)} \right) \right) \cdot \frac{N^{\left( i \right)} - 1}{2{R_{\ast}}}
\end{flalign*}
\begin{flalign*}
\dv{\hat{\mathrm{f}}^{\left( i \right)}}{v} &= 
\dv{\hat{\mathrm{f}}^{\left( i \right)}}{v_{\text{frac}}^{\left( i \right)}} \cdot \dv{v_{\text{frac}}^{\left( i \right)}}{v} =&\\
&= \left( \left( 1 - u_{\text{frac}}^{\left( i \right)} \right) \left( \mathrm{f}_{10}^{\left( i \right)} - \mathrm{f}_{00}^{\left( i \right)} \right) + u_{\text{frac}}^{\left( i \right)} \left( \mathrm{f}_{11}^{\left( i \right)} - \mathrm{f}_{01}^{\left( i \right)} \right) \right) \cdot \dv{v_{\text{frac}}^{\left( i \right)}}{v} =&\\
&= \left( \left( 1 - u_{\text{frac}}^{\left( i \right)} \right) \left( \mathrm{f}_{10}^{\left( i \right)} - \mathrm{f}_{00}^{\left( i \right)} \right) + u_{\text{frac}}^{\left( i \right)} \left( \mathrm{f}_{11}^{\left( i \right)} - \mathrm{f}_{01}^{\left( i \right)} \right) \right) \cdot \frac{N^{\left( i \right)} - 1}{2{R_{\ast}}}
\end{flalign*}
where the straight-through estimator was employed in the final equalities of both derivative formulas, neglecting the discontinuities associated with the floor function.

\subsection{Computing leaf gradients}
Since the vector: ${\mathrm{P}_i'}^{\left( r \right)}-\upmu_i^{\left( r \right)}$ connecting the mean $\upmu_i^{\left( r \right)}$ of the $i$-th Gaussian intersected by a ray $\mathrm{r}$ to the intersection point ${\mathrm{P}_i'}^{\left( r \right)}$ is orthogonal to the Gaussians' normal vector $\mathrm{\vec{N}_i^{\left( r \right)}}$, we have:
$$
\left \langle \mathrm{\vec{N}_i^{\left( r \right)}}, {\mathrm{P}_i'}^{\left( r \right)}-\upmu_i^{\left( r \right)} \right \rangle = 0
$$
Since:
\begin{flalign*}
\left \langle \mathrm{\vec{N}_i^{\left( r \right)}}, {\mathrm{P}_i'}^{\left( r \right)}-\upmu_i^{\left( r \right)} \right \rangle &=
\left \langle \mathrm{\vec{N}_i^{\left( r \right)}}, {\mathrm{P}_i'}^{\left( r \right)} \right \rangle - \left \langle \mathrm{\vec{N}_i^{\left( r \right)}}, \upmu_i^{\left( r \right)} \right \rangle =&\\
&= \left \langle \mathrm{\vec{N}_i^{\left( r \right)}}, \mathrm{o}^{\left( r \right)}+t_i^{\left( r \right)} \mathrm{\vec{d}}^{\left( r \right)} \right \rangle - \left \langle \mathrm{\vec{N}_i^{\left( r \right)}}, \upmu_i^{\left( r \right)} \right \rangle =&\\
&= \left \langle \mathrm{\vec{N}_i^{\left( r \right)}}, \mathrm{o}^{\left( r \right)} \right \rangle + \left \langle \mathrm{\vec{N}_i^{\left( r \right)}}, t_i^{\left( r \right)} \mathrm{\vec{d}}^{\left( r \right)} \right \rangle - \left \langle \mathrm{\vec{N}_i^{\left( r \right)}}, \upmu_i^{\left( r \right)} \right \rangle =&\\
&= \left \langle \mathrm{\vec{N}_i^{\left( r \right)}}, \mathrm{o}^{\left( r \right)} \right \rangle + {\left \langle \mathrm{\vec{N}_i^{\left( r \right)}}, \mathrm{\vec{d}}^{\left( r \right)} \right \rangle} \cdot t_i^{\left( r \right)} - \left \langle \mathrm{\vec{N}_i^{\left( r \right)}}, \upmu_i^{\left( r \right)} \right \rangle =&\\
&= \left \langle \mathrm{\vec{N}_i^{\left( r \right)}}, \mathrm{o}^{\left( r \right)} - \upmu_i^{\left( r \right)} \right \rangle + {\left \langle \mathrm{\vec{N}_i^{\left( r \right)}}, \mathrm{\vec{d}}^{\left( r \right)} \right \rangle} \cdot t_i^{\left( r \right)}
\end{flalign*}
for some $t_i^{\left( r \right)} \in \mathbb{R}_{+} \cup \left \lbrace 0 \right \rbrace$, it must hold that:
$$
{\left \langle \mathrm{\vec{N}_i^{\left( r \right)}}, \mathrm{\vec{d}}^{\left( r \right)} \right \rangle} \cdot t_i^{\left( r \right)} = -{\left \langle \mathrm{\vec{N}_i^{\left( r \right)}}, \mathrm{o}^{\left( r \right)} - \upmu_i^{\left( r \right)} \right \rangle} = \left \langle \mathrm{\vec{N}_i^{\left( r \right)}}, \upmu_i^{\left( r \right)} - \mathrm{o}^{\left( r \right)} \right \rangle
$$
Hence:
$$
t_i^{\left( r \right)} = \frac{\left \langle \mathrm{\vec{N}_i^{\left( r \right)}}, \upmu_i^{\left( r \right)} - \mathrm{o}^{\left( r \right)} \right \rangle}{ \left \langle \mathrm{\vec{N}_i^{\left( r \right)}}, \mathrm{\vec{d}}^{\left( r \right)} \right \rangle }
$$
and the hit point ${\mathrm{P}_i'}^{\left( r \right)}$ is given by the following formula:
$$
{\mathrm{P}_i'}^{\left( r \right)} = \mathrm{o}^{\left( r \right)}+t_i^{\left( r \right)} \mathrm{\vec{d}}^{\left( r \right)}
$$
The vectors: $\mathrm{\vec{U}}_i^{\left( r \right)}$, $\mathrm{\vec{V}}_i^{\left( r \right)}$, and $\mathrm{\vec{N}}_i^{\left( r \right)}$ form an orthogonal coordinate frame, therefore the intersection point local frame coordinates: $u_i^{\left( r \right)}$ and $v_i^{\left( r \right)}$ can be determined by:
$$
u_i^{\left( r \right)} = \frac{ \left \langle \mathrm{\vec{U}_i^{\left( r \right)}}, {\mathrm{P}_i'}^{\left( r \right)} - \upmu_i^{\left( r \right)} \right \rangle }{ \mathrm{s}_{i,1}^{\left( r \right)} }
$$
$$
v_i^{\left( r \right)} = \frac{ \left \langle \mathrm{\vec{V}_i^{\left( r \right)}}, {\mathrm{P}_i'}^{\left( r \right)} - \upmu_i^{\left( r \right)} \right \rangle }{ \mathrm{s}_{i,2}^{\left( r \right)} }
$$
Thus:
\begin{flalign*}
& \dv{t_i^{\left( r \right)}}{\upmu_{i,j}^{\left( r \right)}} =
\frac{\left \langle \mathrm{\vec{N}_i^{\left( r \right)}}, \pdv{}{\upmu_{i,j}^{\left( r \right)}} \left( \upmu_i^{\left( r \right)} - \mathrm{o}^{\left( r \right)} \right) \right \rangle}{ \left \langle \mathrm{\vec{N}_i^{\left( r \right)}}, \mathrm{\vec{d}}^{\left( r \right)} \right \rangle } =
\frac{\left \langle \mathrm{\vec{N}_i^{\left( r \right)}}, \vec{\upvarepsilon}_j \right \rangle}{ \left \langle \mathrm{\vec{N}_i^{\left( r \right)}}, \mathrm{\vec{d}}^{\left( r \right)} \right \rangle } =
\frac{\mathrm{\vec{N}_{i,j}^{\left( r \right)}}}{ \left \langle \mathrm{\vec{N}_i^{\left( r \right)}}, \mathrm{\vec{d}}^{\left( r \right)} \right \rangle } &
\end{flalign*}
\begin{flalign*}
\dv{u_i^{\left( r \right)}}{\upmu_{i,j}^{\left( r \right)}} &=
\frac{\left \langle \mathrm{\vec{U}_i^{\left( r \right)}}, \pdv{}{\upmu_{i,j}^{\left( r \right)}} \left( {\mathrm{P}_i'}^{\left( r \right)} - \upmu_i^{\left( r \right)} \right) \right \rangle}{ \mathrm{s}_{i,1}^{\left( r \right)} } =&\\
&= \frac{\left \langle \mathrm{\vec{U}_i^{\left( r \right)}}, \pdv{}{\upmu_{i,j}^{\left( r \right)}} \left( {\mathrm{P}_i'}^{\left( r \right)} - \upmu_i^{\left( r \right)} \right) \right \rangle}{ \mathrm{s}_{i,1}^{\left( r \right)} } =&\\
&= \frac{\left \langle \mathrm{\vec{U}_i^{\left( r \right)}}, {\pdv{}{\upmu_{i,j}^{\left( r \right)}}}{\mathrm{P}_i'}^{\left( r \right)} - \pdv{}{\upmu_{i,j}} \upmu_i^{\left( r \right)} \right \rangle}{ \mathrm{s}_{i,1}^{\left( r \right)} } =&\\
&= \frac{\left \langle \mathrm{\vec{U}_i^{\left( r \right)}}, {\pdv{}{\upmu_{i,j}^{\left( r \right)}}} \left( \mathrm{o}^{\left( r \right)}+t_i^{\left( r \right)} \mathrm{\vec{d}}^{\left( r \right)} \right) -  \vec{\upvarepsilon}_j \right \rangle}{ \mathrm{s}_{i,1}^{\left( r \right)} } =&\\
&= \frac{\left \langle \mathrm{\vec{U}_i^{\left( r \right)}}, {\pdv{}{\upmu_{i,j}^{\left( r \right)}}} \left( t_i^{\left( r \right)} \right) \cdot \mathrm{\vec{d}}^{\left( r \right)} - \vec{\upvarepsilon}_j \right \rangle}{ \mathrm{s}_{i,1}^{\left( r \right)} } =&\\
&= \frac{\left \langle \mathrm{\vec{U}_i^{\left( r \right)}}, {\pdv{}{\upmu_{i,j}^{\left( r \right)}}} \left( t_i^{\left( r \right)} \right) \cdot \mathrm{\vec{d}}^{\left( r \right)} \right \rangle}{ \mathrm{s}_{i,1}^{\left( r \right)} } - \frac{\left \langle \mathrm{\vec{U}_i^{\left( r \right)}},  \vec{\upvarepsilon}_j \right \rangle}{ \mathrm{s}_{i,1}^{\left( r \right)} } =&\\
&= \frac{\left \langle \mathrm{\vec{U}_i^{\left( r \right)}}, \mathrm{\vec{d}}^{\left( r \right)} \right \rangle}{ \mathrm{s}_{i,1}^{\left( r \right)} } \cdot {\pdv{}{\upmu_{i,j}^{\left( r \right)}}} t_i^{\left( r \right)} - \frac{ \mathrm{\vec{U}_{i,j}^{\left( r \right)}}}{ \mathrm{s}_{i,1}^{\left( r \right)} } =&\\
&= \frac{\left \langle \mathrm{\vec{U}_i^{\left( r \right)}}, \mathrm{\vec{d}}^{\left( r \right)} \right \rangle \cdot {\pdv{}{\upmu_{i,j}^{\left( r \right)}}} t_i^{\left( r \right)} - \mathrm{\vec{U}_{i,j}^{\left( r \right)}}}{ \mathrm{s}_{i,1}^{\left( r \right)} }
\end{flalign*}
\begin{flalign*}
\dv{v_i^{\left( r \right)}}{\upmu_{i,j}^{\left( r \right)}} &=
\frac{\left \langle \vec{\mathrm{V}}_i^{\left( r \right)}, \pdv{}{\upmu_{i,j}^{\left( r \right)}} \left( {\mathrm{P}_i'}^{\left( r \right)} - \upmu_i^{\left( r \right)} \right) \right \rangle}{ \mathrm{s}_{i,2}^{\left( r \right)} } &\\
&= \frac{\left \langle \vec{\mathrm{V}}_i^{\left( r \right)}, \pdv{}{\upmu_{i,j}^{\left( r \right)}} \left( {\mathrm{P}_i'}^{\left( r \right)} - \upmu_i^{\left( r \right)} \right) \right \rangle}{ \mathrm{s}_{i,2}^{\left( r \right)} } &\\
&= \frac{\left \langle \vec{\mathrm{V}}_i^{\left( r \right)}, {\pdv{}{\upmu_{i,j}^{\left( r \right)}}}{\mathrm{P}_i'}^{\left( r \right)} - \pdv{}{\upmu_{i,j}^{\left( r \right)}} \upmu_i^{\left( r \right)} \right \rangle}{ \mathrm{s}_{i,2}^{\left( r \right)} } &\\
&= \frac{\left \langle \vec{\mathrm{V}}_i^{\left( r \right)}, {\pdv{}{\upmu_{i,j}^{\left( r \right)}}} \left( \mathrm{o}^{\left( r \right)}+t_i^{\left( r \right)} \mathrm{\vec{d}}^{\left( r \right)} \right) -  \vec{\upvarepsilon}_j \right \rangle}{ \mathrm{s}_{i,2}^{\left( r \right)} } &\\
&= \frac{\left \langle \vec{\mathrm{V}}_i^{\left( r \right)}, {\pdv{}{\upmu_{i,j}^{\left( r \right)}}} \left( t_i^{\left( r \right)} \right) \cdot \mathrm{\vec{d}}^{\left( r \right)} -  \vec{\upvarepsilon}_j \right \rangle}{ \mathrm{s}_{i,2}^{\left( r \right)} } &\\
&= \frac{\left \langle \vec{\mathrm{V}}_i^{\left( r \right)}, {\pdv{}{\upmu_{i,j}^{\left( r \right)}}} \left( t_i^{\left( r \right)} \right) \cdot \mathrm{\vec{d}}^{\left( r \right)} \right \rangle}{ \mathrm{s}_{i,2}^{\left( r \right)} } - \frac{\left \langle \vec{\mathrm{V}}_i^{\left( r \right)},  \vec{\upvarepsilon}_j \right \rangle}{ \mathrm{s}_{i,2}^{\left( r \right)} } &\\
&= \frac{\left \langle \vec{\mathrm{V}}_i^{\left( r \right)}, \mathrm{\vec{d}}^{\left( r \right)} \right \rangle}{ \mathrm{s}_{i,2}^{\left( r \right)} } \cdot {\pdv{}{\upmu_{i,j}^{\left( r \right)}}} t_i^{\left( r \right)} - \frac{ \mathrm{\vec{V}}_{i,j}^{\left( r \right)}}{ \mathrm{s}_{i,2}^{\left( r \right)} } &\\
&= \frac{\left \langle \vec{\mathrm{V}}_i^{\left( r \right)}, \mathrm{\vec{d}}^{\left( r \right)} \right \rangle \cdot {\pdv{}{\upmu_{i,j}^{\left( r \right)}}} t_i^{\left( r \right)} - \mathrm{\vec{V}}_{i,j}^{\left( r \right)}}{ \mathrm{s}_{i,2}^{\left( r \right)} }
\end{flalign*}
\begin{flalign*}
& \dv{t_i^{\left( r \right)}}{\tilde{\mathrm{s}}_{i,j}^{\left( r \right)}} = 0 &
\end{flalign*}
\begin{flalign*}
\dv{u_i^{\left( r \right)}}{\tilde{\mathrm{s}}_{i,j}^{\left( r \right)}} &=
\dv{}{\tilde{\mathrm{s}}_{i,j}^{\left( r \right)}} \frac{ \left \langle \mathrm{\vec{U}_i^{\left( r \right)}}, {\mathrm{P}_i'}^{\left( r \right)} - \upmu_i^{\left( r \right)} \right \rangle }{ \exp( \tilde{\mathrm{s}}_{i,1}^{\left( r \right)}) } =&\\
&= -\frac{ \left \langle \mathrm{\vec{U}_i^{\left( r \right)}}, {\mathrm{P}_i'}^{\left( r \right)} - \upmu_i^{\left( r \right)} \right \rangle }{ \exp^2( \tilde{\mathrm{s}}_{i,1}^{\left( r \right)}) } \cdot \dv{}{\tilde{\mathrm{s}}_{i,j}^{\left( r \right)}} \left( \exp( \tilde{\mathrm{s}}_{i,1}^{\left( r \right)}) \right) \cdot \delta_{j{1}} =&\\
&= -\frac{ \left \langle \mathrm{\vec{U}_i^{\left( r \right)}}, {\mathrm{P}_i'}^{\left( r \right)} - \upmu_i^{\left( r \right)} \right \rangle }{ \exp^2( \tilde{\mathrm{s}}_{i,1}^{\left( r \right)}) } \cdot \exp( \tilde{\mathrm{s}}_{i,1}^{\left( r \right)}) \cdot \delta_{j{1}} =&\\
&= -\frac{ \left \langle \mathrm{\vec{U}_i^{\left( r \right)}}, {\mathrm{P}_i'}^{\left( r \right)} - \upmu_i^{\left( r \right)} \right \rangle }{ \exp( \tilde{\mathrm{s}}_{i,1}^{\left( r \right)}) } \cdot \delta_{j{1}} =&\\
&= -\frac{ \left \langle \mathrm{\vec{U}_i^{\left( r \right)}}, {\mathrm{P}_i'}^{\left( r \right)} - \upmu_i^{\left( r \right)} \right \rangle }{ \exp( \tilde{\mathrm{s}}_{i,1}^{\left( r \right)}) } \cdot \delta_{j{1}} =&\\
&= -u_i^{\left( r \right)} \cdot \delta_{j{1}}
\end{flalign*}
\begin{flalign*}
\dv{v_i^{\left( r \right)}}{\tilde{\mathrm{s}}_{i,j}^{\left( r \right)}} &=
\dv{}{\tilde{\mathrm{s}}_{i,j}^{\left( r \right)}} \frac{ \left \langle \mathrm{\vec{V}_i^{\left( r \right)}}, {\mathrm{P}_i'}^{\left( r \right)} - \upmu_i^{\left( r \right)} \right \rangle }{ \exp( \tilde{\mathrm{s}}_{i,2}^{\left( r \right)}) } =&\\
&= -\frac{ \left \langle \mathrm{\vec{V}_i^{\left( r \right)}}, {\mathrm{P}_i'}^{\left( r \right)} - \upmu_i^{\left( r \right)} \right \rangle }{ \exp^2( \tilde{\mathrm{s}}_{i,2}^{\left( r \right)}) } \cdot \dv{}{\tilde{\mathrm{s}}_{i,j}^{\left( r \right)}} \left( \exp( \tilde{\mathrm{s}}_{i,2}^{\left( r \right)}) \right) \cdot \delta_{j{2}} =&\\
&= -\frac{ \left \langle \mathrm{\vec{V}_i^{\left( r \right)}}, {\mathrm{P}_i'}^{\left( r \right)} - \upmu_i^{\left( r \right)} \right \rangle }{ \exp^2( \tilde{\mathrm{s}}_{i,2}^{\left( r \right)}) } \cdot \exp( \tilde{\mathrm{s}}_{i,2}^{\left( r \right)}) \cdot \delta_{j{2}} =&\\
&= -\frac{ \left \langle \mathrm{\vec{V}_i^{\left( r \right)}}, {\mathrm{P}_i'}^{\left( r \right)} - \upmu_i^{\left( r \right)} \right \rangle }{ \exp( \tilde{\mathrm{s}}_{i,2}^{\left( r \right)}) } \cdot \delta_{j{2}} =&\\
&= -\frac{ \left \langle \mathrm{\vec{V}_i^{\left( r \right)}}, {\mathrm{P}_i'}^{\left( r \right)} - \upmu_i^{\left( r \right)} \right \rangle }{ \exp( \tilde{\mathrm{s}}_{i,2}^{\left( r \right)}) } \cdot \delta_{j{2}} =&\\
&= -v_i^{\left( r \right)} \cdot \delta_{j{2}}
\end{flalign*}
\begin{flalign*}
\dv{t_i^{\left( r \right)}}{\mathrm{q}_{i,j}^{\left( r \right)}} &=
\frac{{\pdv{}{\mathrm{q}_{i,j}^{\left( r \right)}} \left( \left \langle \mathrm{\vec{N}_i^{\left( r \right)}}, \upmu_i^{\left( r \right)} - \mathrm{o}^{\left( r \right)} \right \rangle \right)} \cdot {\left \langle \mathrm{\vec{N}_i^{\left( r \right)}}, \mathrm{\vec{d}}^{\left( r \right)} \right \rangle} - {\left \langle \mathrm{\vec{N}_i^{\left( r \right)}}, \upmu_i^{\left( r \right)} - \mathrm{o}^{\left( r \right)} \right \rangle} \cdot {\pdv{}{\mathrm{q}_{i,j}^{\left( r \right)}} \left( \left \langle \mathrm{\vec{N}_i^{\left( r \right)}}, \mathrm{\vec{d}}^{\left( r \right)} \right \rangle \right)}}{{\left \langle \mathrm{\vec{N}_i^{\left( r \right)}}, \mathrm{\vec{d}}^{\left( r \right)} \right \rangle}^2} =&\\
&= \frac{{ \left \langle \pdv{}{\mathrm{q}_{i,j}^{\left( r \right)}} \mathrm{\vec{N}_i^{\left( r \right)}}, \upmu_i^{\left( r \right)} - \mathrm{o}^{\left( r \right)} \right \rangle } \cdot {\left \langle \mathrm{\vec{N}_i^{\left( r \right)}}, \mathrm{\vec{d}}^{\left( r \right)} \right \rangle} - {\left \langle \mathrm{\vec{N}_i^{\left( r \right)}}, \upmu_i^{\left( r \right)} - \mathrm{o}^{\left( r \right)} \right \rangle} \cdot { \left \langle \pdv{}{\mathrm{q}_{i,j}^{\left( r \right)}} \mathrm{\vec{N}_i^{\left( r \right)}}, \mathrm{\vec{d}}^{\left( r \right)} \right \rangle }}{{\left \langle \mathrm{\vec{N}_i^{\left( r \right)}}, \mathrm{\vec{d}}^{\left( r \right)} \right \rangle}^2}
\end{flalign*}
\begin{flalign*}
\dv{u_i^{\left( r \right)}}{\mathrm{q}_{i,j}^{\left( r \right)}} &=
\frac{ \left \langle \mathrm{ \pdv{}{\mathrm{q}_{i,j}^{\left( r \right)}} \vec{U}_i^{\left( r \right)}}, {\mathrm{P}_i'}^{\left( r \right)} - \upmu_i^{\left( r \right)} \right \rangle + \left \langle \mathrm{ \vec{U}_i^{\left( r \right)}}, \pdv{}{\mathrm{q}_{i,j}^{\left( r \right)}} \left( {\mathrm{P}_i'}^{\left( r \right)} - \upmu_i^{\left( r \right)} \right) \right \rangle }{ \mathrm{s}_{i,1}^{\left( r \right)} } =&\\
&= \frac{ \left \langle \mathrm{ \pdv{}{\mathrm{q}_{i,j}^{\left( r \right)}} \vec{U}_i^{\left( r \right)}}, {\mathrm{P}_i'}^{\left( r \right)} - \upmu_i^{\left( r \right)} \right \rangle + \left \langle \mathrm{ \vec{U}_i^{\left( r \right)}}, {\pdv{}{\mathrm{q}_{i,j}^{\left( r \right)}}} {\mathrm{P}_i'}^{\left( r \right)} \right \rangle }{ \mathrm{s}_{i,1}^{\left( r \right)} } =&\\
&= \frac{ \left \langle \mathrm{ \pdv{}{\mathrm{q}_{i,j}^{\left( r \right)}} \vec{U}_i^{\left( r \right)}}, {\mathrm{P}_i'}^{\left( r \right)} - \upmu_i^{\left( r \right)} \right \rangle + \left \langle \mathrm{ \vec{U}_i^{\left( r \right)}}, {\pdv{}{\mathrm{q}_{i,j}^{\left( r \right)}}} \left( \mathrm{o}^{\left( r \right)}+t_i^{\left( r \right)} \mathrm{\vec{d}}^{\left( r \right)} \right) \right \rangle }{ \mathrm{s}_{i,1}^{\left( r \right)} } =&\\
&= \frac{ \left \langle \mathrm{ \pdv{}{\mathrm{q}_{i,j}^{\left( r \right)}} \vec{U}_i^{\left( r \right)}}, {\mathrm{P}_i'}^{\left( r \right)} - \upmu_i^{\left( r \right)} \right \rangle + \left \langle \mathrm{ \vec{U}_i^{\left( r \right)}}, {\pdv{}{\mathrm{q}_{i,j}^{\left( r \right)}}} \left( t_i^{\left( r \right)} \mathrm{\vec{d}}^{\left( r \right)} \right) \right \rangle }{ \mathrm{s}_{i,1}^{\left( r \right)} } =&\\
&= \frac{ \left \langle \mathrm{ \pdv{}{\mathrm{q}_{i,j}^{\left( r \right)}} \vec{U}_i^{\left( r \right)}}, {\mathrm{P}_i'}^{\left( r \right)} - \upmu_i^{\left( r \right)} \right \rangle + \left \langle \mathrm{ \vec{U}_i^{\left( r \right)}}, {\pdv{}{\mathrm{q}_{i,j}^{\left( r \right)}}} \left( t_i^{\left( r \right)} \right) \cdot \mathrm{\vec{d}}^{\left( r \right)} \right \rangle }{ \mathrm{s}_{i,1}^{\left( r \right)} } =&\\
&= \frac{ \left \langle \mathrm{ \pdv{}{\mathrm{q}_{i,j}^{\left( r \right)}} \vec{U}_i^{\left( r \right)}}, {\mathrm{P}_i'}^{\left( r \right)} - \upmu_i^{\left( r \right)} \right \rangle + { \left \langle \mathrm{ \vec{U}_i^{\left( r \right)}}, \mathrm{\vec{d}}^{\left( r \right)} \right \rangle } \cdot {\pdv{}{\mathrm{q}_{i,j}^{\left( r \right)}}} t_i^{\left( r \right)} }{ \mathrm{s}_{i,1}^{\left( r \right)} } =&\\
\end{flalign*}
\begin{flalign*}
\dv{v_i^{\left( r \right)}}{\mathrm{q}_{i,j}^{\left( r \right)}} &=
\frac{ \left \langle \mathrm{ \pdv{}{\mathrm{q}_{i,j}^{\left( r \right)}} \vec{V}_i^{\left( r \right)}}, {\mathrm{P}_i'}^{\left( r \right)} - \upmu_i^{\left( r \right)} \right \rangle + \left \langle \mathrm{ \vec{V}_i^{\left( r \right)}}, \pdv{}{\mathrm{q}_{i,j}^{\left( r \right)}} \left( {\mathrm{P}_i'}^{\left( r \right)} - \upmu_i^{\left( r \right)} \right) \right \rangle }{ \mathrm{s}_{i,2}^{\left( r \right)} } =&\\
&= \frac{ \left \langle \mathrm{ \pdv{}{\mathrm{q}_{i,j}^{\left( r \right)}} \vec{V}_i^{\left( r \right)}}, {\mathrm{P}_i'}^{\left( r \right)} - \upmu_i^{\left( r \right)} \right \rangle + \left \langle \mathrm{ \vec{V}_i^{\left( r \right)}}, {\pdv{}{\mathrm{q}_{i,j}^{\left( r \right)}}} {\mathrm{P}_i'}^{\left( r \right)} \right \rangle }{ \mathrm{s}_{i,2}^{\left( r \right)} } =&\\
&= \frac{ \left \langle \mathrm{ \pdv{}{\mathrm{q}_{i,j}^{\left( r \right)}} \vec{V}_i^{\left( r \right)}}, {\mathrm{P}_i'}^{\left( r \right)} - \upmu_i^{\left( r \right)} \right \rangle + \left \langle \mathrm{ \vec{V}_i^{\left( r \right)}}, {\pdv{}{\mathrm{q}_{i,j}^{\left( r \right)}}} \left( \mathrm{o}^{\left( r \right)}+t_i^{\left( r \right)} \mathrm{\vec{d}}^{\left( r \right)} \right) \right \rangle }{ \mathrm{s}_{i,2}^{\left( r \right)} } =&\\
&= \frac{ \left \langle \mathrm{ \pdv{}{\mathrm{q}_{i,j}^{\left( r \right)}} \vec{V}_i^{\left( r \right)}}, {\mathrm{P}_i'}^{\left( r \right)} - \upmu_i^{\left( r \right)} \right \rangle + \left \langle \mathrm{ \vec{V}_i^{\left( r \right)}}, {\pdv{}{\mathrm{q}_{i,j}^{\left( r \right)}}} \left( t_i^{\left( r \right)} \mathrm{\vec{d}}^{\left( r \right)} \right) \right \rangle }{ \mathrm{s}_{i,2}^{\left( r \right)} } =&\\
&= \frac{ \left \langle \mathrm{ \pdv{}{\mathrm{q}_{i,j}^{\left( r \right)}} \vec{V}_i^{\left( r \right)}}, {\mathrm{P}_i'}^{\left( r \right)} - \upmu_i^{\left( r \right)} \right \rangle + \left \langle \mathrm{ \vec{V}_i^{\left( r \right)}}, {\pdv{}{\mathrm{q}_{i,j}^{\left( r \right)}}} \left( t_i^{\left( r \right)} \right) \cdot \mathrm{\vec{d}}^{\left( r \right)} \right \rangle }{ \mathrm{s}_{i,2}^{\left( r \right)} } =&\\
&= \frac{ \left \langle \mathrm{ \pdv{}{\mathrm{q}_{i,j}^{\left( r \right)}} \vec{V}_i^{\left( r \right)}}, {\mathrm{P}_i'}^{\left( r \right)} - \upmu_i^{\left( r \right)} \right \rangle + { \left \langle \mathrm{ \vec{V}_i^{\left( r \right)}}, \mathrm{\vec{d}}^{\left( r \right)} \right \rangle } \cdot {\pdv{}{\mathrm{q}_{i,j}^{\left( r \right)}}} t_i^{\left( r \right)} }{ \mathrm{s}_{i,2}^{\left( r \right)} } =&\\
\end{flalign*}
Recall that:
$$
\mathrm{\vec{U}}_i^{\left( r \right)} = {
\begin{bmatrix}
\mathrm{R}_{i,1,1}^{\left( r \right)}, \mathrm{R}_{i,2,1}^{\left( r \right)}, \mathrm{R}_{i,3,1}^{\left( r \right)}
\end{bmatrix}
}^\mathrm{T}
$$
$$
\mathrm{\vec{V}_i^{\left( r \right)}} = {
\begin{bmatrix}
\mathrm{R}_{i,1,2}^{\left( r \right)}, \mathrm{R}_{i,2,2}^{\left( r \right)}, \mathrm{R}_{i,3,2}^{\left( r \right)}
\end{bmatrix}
}^\mathrm{T}
$$
$$
\mathrm{\vec{N}_i^{\left( r \right)}} = {
\begin{bmatrix}
\mathrm{R}_{i,1,3}^{\left( r \right)}, \mathrm{R}_{i,2,3}^{\left( r \right)}, \mathrm{R}_{i,3,3}^{\left( r \right)}
\end{bmatrix}
}^\mathrm{T}
$$
where:
$$
\mathrm{R}_i^{\left( r \right)} = {\begin{bmatrix}
1 - cc_i^{\left( r \right)} - dd_i^{\left( r \right)} & bc_i^{\left( r \right)} - ad_i^{\left( r \right)} & bd_i^{\left( r \right)} + ac_i^{\left( r \right)} \\
bc_i^{\left( r \right)} + ad_i^{\left( r \right)} & 1 - bb_i^{\left( r \right)} - dd_i^{\left( r \right)} & cd_i^{\left( r \right)} - ab_i^{\left( r \right)} \\
bd_i^{\left( r \right)} - ac_i^{\left( r \right)} & cd_i^{\left( r \right)} + ab_i^{\left( r \right)} & 1 - bb_i^{\left( r \right)} - cc_i^{\left( r \right)}
\end{bmatrix}
}
$$
for:
$$
s_i^{\left( r \right)} = \frac{2}{\left( \mathrm{q}_{i,1}^{\left( r \right)} \right)^2 + \left( \mathrm{q}_{i,2}^{\left( r \right)} \right)^2 + \left( \mathrm{q}_{i,3}^{\left( r \right)} \right)^2 + \left( \mathrm{q}_{i,4}^{\left( r \right)} \right)^2}
$$
$$
\begin{matrix}
bs_i^{\left( r \right)} = \mathrm{q}_{i,2}^{\left( r \right)} \cdot s_i^{\left( r \right)}, & cs_i^{\left( r \right)} = \mathrm{q}_{i,3}^{\left( r \right)} \cdot s_i^{\left( r \right)}, & ds_i^{\left( r \right)} = \mathrm{q}_{i,4}^{\left( r \right)} \cdot s_i^{\left( r \right)} \\
ab_i^{\left( r \right)} = \mathrm{q}_{i,1}^{\left( r \right)} \cdot bs_i^{\left( r \right)}, & ac_i^{\left( r \right)} = \mathrm{q}_{i,1}^{\left( r \right)} \cdot cs_i^{\left( r \right)}, & ad_i^{\left( r \right)} = \mathrm{q}_{i,1}^{\left( r \right)} \cdot ds_i^{\left( r \right)} \\
bb_i^{\left( r \right)} = \mathrm{q}_{i,2}^{\left( r \right)} \cdot bs_i^{\left( r \right)}, & bc_i^{\left( r \right)} = \mathrm{q}_{i,2}^{\left( r \right)} \cdot cs_i^{\left( r \right)}, & bd_i^{\left( r \right)} = \mathrm{q}_{i,2}^{\left( r \right)} \cdot ds_i^{\left( r \right)} \\
cc_i^{\left( r \right)} = \mathrm{q}_{i,3}^{\left( r \right)} \cdot cs_i^{\left( r \right)}, & cd_i^{\left( r \right)} = \mathrm{q}_{i,3}^{\left( r \right)} \cdot ds_i^{\left( r \right)}, & dd_i^{\left( r \right)} = \mathrm{q}_{4,1}^{\left( r \right)} \cdot ds_i^{\left( r \right)}
\end{matrix}
$$
Let us define $aa \coloneq \left( \mathrm{q}_{i,1}^{\left( r \right)} \right)^2 \cdot s_i^{\left( r \right)}$, $a_i^{\left( r \right)} \coloneq \mathrm{q}_{i,1}^{\left( r \right)}$, $b_i^{\left( r \right)} \coloneq \mathrm{q}_{i,2}^{\left( r \right)}$, $c_i^{\left( r \right)} \coloneq \mathrm{q}_{i,3}^{\left( r \right)}$, and $d_i^{\left( r \right)} \coloneq \mathrm{q}_{i,4}^{\left( r \right)}$. Then, after straightforward yet tedious calculations, we finally obtain:
$$
\scriptsize
\dv{\mathrm{R}_i^{\left( r \right)}}{\mathrm{q}_{i,1}^{\left( r \right)}} = \begin{bmatrix}
s_i^{\left( r \right)} a_i^{\left( r \right)} \left( \left( 1 - aa_i^{\left( r \right)} \right) + \left( 1 - bb_i^{\left( r \right)} \right) \right) &
s_i^{\left( r \right)} \left( -d_i^{\left( r \right)} \left( 1 - aa_i^{\left( r \right)} \right) - ab_i^{\left( r \right)} \cdot c_i^{\left( r \right)} \right) &
s_i^{\left( r \right)} \left( c_i^{\left( r \right)} \left( 1 - aa_i^{\left( r \right)} \right) - ab_i^{\left( r \right)} \cdot d_i^{\left( r \right)} \right) \\
s_i^{\left( r \right)} \left( d_i^{\left( r \right)} \left( 1 - aa_i^{\left( r \right)} \right) - ab_i^{\left( r \right)} \cdot c_i^{\left( r \right)} \right) &
s_i^{\left( r \right)} a_i^{\left( r \right)} \left( \left( 1 - aa_i^{\left( r \right)} \right) + \left( 1 - cc_i^{\left( r \right)} \right) \right) &
s_i^{\left( r \right)} \left( -b_i^{\left( r \right)} \left( 1 - aa_i^{\left( r \right)} \right) - a_i^{\left( r \right)} \cdot cd_i^{\left( r \right)} \right) \\
s_i^{\left( r \right)} \left( -c_i^{\left( r \right)} \left( 1 - aa_i^{\left( r \right)} \right) - ab_i^{\left( r \right)} \cdot d_i^{\left( r \right)} \right) &
s_i^{\left( r \right)} \left( b_i^{\left( r \right)} \left( 1 - aa_i^{\left( r \right)} \right) - a_i^{\left( r \right)} \cdot cd_i^{\left( r \right)} \right) &
s_i^{\left( r \right)} a_i^{\left( r \right)} \left( \left( 1 - aa_i^{\left( r \right)} \right) + \left( 1 - dd_i^{\left( r \right)} \right) \right)
\end{bmatrix}
$$
$$
\scriptsize
\dv{\mathrm{R}_i^{\left( r \right)}}{\mathrm{q}_{i,2}^{\left( r \right)}} = \begin{bmatrix}
s_i^{\left( r \right)} b_i^{\left( r \right)} \left( \left( 1 - bb_i^{\left( r \right)} \right) + \left( 1 - aa_i^{\left( r \right)} \right) \right) &
s_i^{\left( r \right)} \left( c_i^{\left( r \right)} \left( 1 - bb_i^{\left( r \right)} \right) - \left( -ab_i^{\left( r \right)} \cdot d_i^{\left( r \right)} \right) \right) &
s_i^{\left( r \right)} \left( d_i^{\left( r \right)} \left( 1 - bb_i^{\left( r \right)} \right) - ab_i^{\left( r \right)} \cdot c_i^{\left( r \right)} \right) \\
s_i^{\left( r \right)} \left( c_i^{\left( r \right)} \left( 1 - bb_i^{\left( r \right)} \right) - ab_i^{\left( r \right)} \cdot d_i^{\left( r \right)} \right) &
-s_i^{\left( r \right)} b_i^{\left( r \right)} \left( \left( 1 - bb_i^{\left( r \right)} \right) + \left( 1 - dd_i^{\left( r \right)} \right) \right) &
s_i^{\left( r \right)} \left( -a_i^{\left( r \right)} \left( 1 - bb_i^{\left( r \right)} \right) - b_i^{\left( r \right)} \cdot cd_i^{\left( r \right)} \right) \\
s_i^{\left( r \right)} \left( d_i^{\left( r \right)} \left( 1 - bb_i^{\left( r \right)} \right) - \left( -ab_i^{\left( r \right)} \cdot c_i^{\left( r \right)} \right) \right) &
s_i^{\left( r \right)} \left( a_i^{\left( r \right)} \left( 1 - bb_i^{\left( r \right)} \right) - b_i^{\left( r \right)} \cdot cd_i^{\left( r \right)} \right) &
-s_i^{\left( r \right)} b_i^{\left( r \right)} \left( \left( 1 - bb_i^{\left( r \right)} \right) + \left( 1 - cc_i^{\left( r \right)} \right) \right)
\end{bmatrix}
$$
$$
\scriptsize
\dv{\mathrm{R}_i^{\left( r \right)}}{\mathrm{q}_{i,3}^{\left( r \right)}} = \begin{bmatrix}
-s_i^{\left( r \right)} c_i^{\left( r \right)} \left( \left( 1 - cc_i^{\left( r \right)} \right) + \left( 1 - dd_i^{\left( r \right)} \right) \right) &
s_i^{\left( r \right)} \left( b_i^{\left( r \right)} \left( 1 - cc_i^{\left( r \right)} \right) - \left( -a_i^{\left( r \right)} \cdot cd_i^{\left( r \right)} \right) \right) &
s_i^{\left( r \right)} \left( a_i^{\left( r \right)} \left( 1 - cc_i^{\left( r \right)} \right) - b_i^{\left( r \right)} \cdot cd_i^{\left( r \right)} \right) \\
s_i^{\left( r \right)} \left( b_i^{\left( r \right)} \left( 1 - cc_i^{\left( r \right)} \right) - a_i^{\left( r \right)} \cdot cd_i^{\left( r \right)} \right) &
s_i^{\left( r \right)} c_i^{\left( r \right)} \left( \left( 1 - cc_i^{\left( r \right)} \right) + \left( 1 - aa_i^{\left( r \right)} \right) \right) &
s_i^{\left( r \right)} \left( d_i^{\left( r \right)} \left( 1 - cc_i^{\left( r \right)} \right) - \left( -ab_i^{\left( r \right)} \cdot c_i^{\left( r \right)} \right) \right) \\
s_i^{\left( r \right)} \left( -a_i^{\left( r \right)} \left( 1 - cc_i^{\left( r \right)} \right) - b_i^{\left( r \right)} \cdot cd_i^{\left( r \right)} \right) &
s_i^{\left( r \right)} \left( d_i^{\left( r \right)} \left( 1 - cc_i^{\left( r \right)} \right) - ab_i^{\left( r \right)} \cdot c_i^{\left( r \right)} \right) &
-s_i^{\left( r \right)} c_i^{\left( r \right)} \left( \left( 1 - cc_i^{\left( r \right)} \right) + \left( 1 - bb_i^{\left( r \right)} \right) \right)
\end{bmatrix}
$$
$$
\scriptsize
\dv{\mathrm{R}_i^{\left( r \right)}}{\mathrm{q}_{i,4}^{\left( r \right)}} = \begin{bmatrix}
-s_i^{\left( r \right)} d_i^{\left( r \right)} \left( \left( 1 - dd_i^{\left( r \right)} \right) + \left( 1 - cc_i^{\left( r \right)} \right) \right) &
s_i^{\left( r \right)} \left( -a_i^{\left( r \right)} \left( 1 - dd_i^{\left( r \right)} \right) - b_i^{\left( r \right)} \cdot cd_i^{\left( r \right)} \right) &
s_i^{\left( r \right)} \left( b_i^{\left( r \right)} \left( 1 - dd_i^{\left( r \right)} \right) - a_i^{\left( r \right)} \cdot cd_i^{\left( r \right)} \right) \\
s_i^{\left( r \right)} \left( a_i^{\left( r \right)} \left( 1 - dd_i^{\left( r \right)} \right) - b_i^{\left( r \right)} \cdot cd_i^{\left( r \right)} \right) &
-s_i^{\left( r \right)} d_i^{\left( r \right)} \left( \left( 1 - dd_i^{\left( r \right)} \right) + \left( 1 - bb_i^{\left( r \right)} \right) \right) &
s_i^{\left( r \right)} \left( c_i^{\left( r \right)} \left( 1 - dd_i^{\left( r \right)} \right) - \left( -ab_i^{\left( r \right)} \cdot d_i^{\left( r \right)} \right) \right) \\
s_i^{\left( r \right)} \left( b_i^{\left( r \right)} \left( 1 - dd_i^{\left( r \right)} \right) - \left( -a_i^{\left( r \right)} \cdot cd_i^{\left( r \right)} \right) \right) &
s_i^{\left( r \right)} \left( c_i^{\left( r \right)} \left( 1 - dd_i^{\left( r \right)} \right) - ab_i^{\left( r \right)} \cdot d_i^{\left( r \right)} \right) &
s_i^{\left( r \right)} d_i^{\left( r \right)} \left( \left( 1 - dd_i^{\left( r \right)} \right) + \left( 1 - aa_i^{\left( r \right)} \right) \right)
\end{bmatrix}
$$

\end{document}